\title{Learning Linear Dynamics from Bilinear Observations}
\author{Yahya Sattar$^\ast$\qquad Yassir Jedra$^\dagger$ \qquad Sarah Dean$^\ast$
}
\newcommand\mysqrt[2][0pt]{\stretchrel{\sqrt{}}{\addstackgap%
		[#1]{$\displaystyle\overline{#2}$}}}
\providecommand*{\boxast}{%
	\mathbin{
		\mathpalette\@boxit{*}%
	}%
}
\newcommand*{\@boxit}[2]{%
	\sbox0{$\m@th#1\Box$}%
	\ifx#1\displaystyle \ht0=\dimexpr\ht0+.05ex\relax \fi
	\ifx#1\textstyle \ht0=\dimexpr\ht0+.05ex\relax \fi
	\ifx#1\scriptstyle \ht0=\dimexpr\ht0+.04ex\relax \fi
	\ifx#1\scriptscriptstyle \ht0=\dimexpr\ht0+.065ex\relax \fi
	\sbox2{$#1\vcenter{}$}
	\rlap{%
		\hbox to \wd0{%
			\hfill
			\raisebox{%
				\dimexpr.5\dimexpr\ht0+\dp0\relax-\ht2\relax
			}{$\m@th#1#2$}%
			\hfill
		}%
	}%
	\Box
}
\def\BState{\State\hskip-\ALG@thistlm}
\newcommand{\tsn}[1]{{\left\vert\kern-0.25ex\left\vert\kern-0.25ex\left\vert #1 
		\right\vert\kern-0.25ex\right\vert\kern-0.25ex\right\vert}}
\definecolor{darkred}{RGB}{150,0,0}
\definecolor{darkgreen}{RGB}{0,150,0}
\definecolor{darkblue}{RGB}{0,0,150}
\newtheorem{theorem}{Theorem}[section]
\newtheorem{assumption}{Assumption}
\newtheorem{lemma}[theorem]{Lemma}
\newtheorem{proposition}[theorem]{Proposition}
\newtheorem{remark}[subsection]{Remark}
\newenvironment{assumptionp}[1]{
  
  \assumptionalt
}{\endassumptionalt}
\newcommand{\bv}[1]{{\boldsymbol{#1}}}		
\newcommand{\bvgrk}[1]{{\boldsymbol{#1}}}	
\newcommand{\yhat}{\hat{y}}
\newcommand{\va}{\bv{a}}
\newcommand{\vb}{\bv{b}}
\newcommand{\vc}{\bv{c}}
\newcommand{\vd}{\bv{d}}
\newcommand{\ve}{\bv{e}}
\newcommand{\vu}{\bv{u}}
\newcommand{\vv}{\bv{v}}
\newcommand{\vw}{\bv{w}}
\newcommand{\vx}{\bv{x}}
\newcommand{\vy}{\bv{y}}
\newcommand{\vz}{\bv{z}}
\newcommand{\ub}{\bv{u}}
\newcommand{\wb}{\bv{w}}
\newcommand{\xb}{\bv{x}}
\newcommand{\vetil}{\tilde{\ve}}
\newcommand{\vutil}{\tilde{\vu}}
\newcommand{\Dcal}{\mathcal{D}}
\newcommand{\Ecal}{\mathcal{E}}
\newcommand{\Ncal}{\mathcal{N}}
\newcommand{\Ocal}{\mathcal{O}}
\newcommand{\Rcal}{\mathcal{R}}
\newcommand{\Scal}{\mathcal{S}}
\newcommand{\tn}[1]{\|{#1}\|_{\ell_2}}
\newcommand{\vA}{\bv{A}}
\newcommand{\vB}{\bv{B}}
\newcommand{\vC}{\bv{C}}
\newcommand{\vF}{\bv{F}}
\newcommand{\vG}{\bv{G}}
\newcommand{\vH}{\bv{H}}
\newcommand{\vL}{\bv{L}}
\newcommand{\vM}{\bv{M}}
\newcommand{\vO}{\bv{O}}
\newcommand{\vQ}{\bv{Q}}
\newcommand{\vT}{\bv{T}}
\newcommand{\vU}{\bv{U}}
\newcommand{\vV}{\bv{V}}
\newcommand{\vX}{\bv{X}}
\newcommand{\vY}{\bv{Y}}
\newcommand{\Ab}{\bv{A}}
\newcommand{\Bb}{\bv{B}}
\newcommand{\Cb}{\bv{C}}
\newcommand{\vAhat}{\hat{\bv{A}}}
\newcommand{\vBhat}{\hat{\bv{B}}}
\newcommand{\vChat}{\hat{\bv{C}}}
\newcommand{\vGhat}{\hat{\bv{G}}}
\newcommand{\vHhat}{\hat{\bv{H}}}
\newcommand{\vLhat}{\hat{\bv{L}}}
\newcommand{\vOhat}{\hat{\bv{O}}}
\newcommand{\vQhat}{\hat{\bv{Q}}}
\newcommand{\vUhat}{\hat{\bv{U}}}
\newcommand{\vVhat}{\hat{\bv{V}}}
\newcommand{\vQtil}{\tilde{\bv{Q}}}
\newcommand{\vRtil}{\tilde{\bv{R}}}
\newcommand{\vUtil}{\tilde{\bv{U}}}
\newcommand{\vVtil}{\tilde{\bv{V}}}
\newcommand{\vWtil}{\tilde{\bv{W}}}
\newcommand{\vGamma}{\bvgrk{\Gamma}}
\newcommand{\vdelta}{\bvgrk{\updelta}}
\newcommand{\vzeta}{\bvgrk{\upzeta}}
\newcommand{\vSigma}{\bvgrk{\Sigma}}
\newcommand{\nn}{\nonumber}
\newcommand{\mtx}[1]{\bm{#1}}
\newcommand{\bSi}{{\boldsymbol{{\Sigma}}}}
\newcommand{\E}{\operatorname{\mathbb{E}}}
\newcommand{\PP}{\operatorname{\mathbb{P}}}
\newcommand{\tf}[1]{\|{#1}\|_{F}}
\newcommand{\Iden}{{\mtx{I}}}
\renewcommand{\P}{\operatorname{\mathbb{P}}}
\newcommand{\leqsym}[1]{\stackrel{\text{(#1)}}{\leq}}
\newcommand{\eqsym}[1]{\stackrel{\text{(#1)}}{=}}
\newcommand{\preceqsym}[1]{\stackrel{\text{(#1)}}{\preceq}}
\newcommand{\norm}[1]{\|{#1} \|}
\newcommand{\R}{\mathbb{R}}				
\newcommand{\dm}[2]
{
	\IfStrEq{#2}{1}{\R^{#1}}{\R^{#1 \x #2}}
}
\newcommand{\tr}{\textup{\textbf{tr}}} 			
\newcommand{\vek}{\textup{\textbf{vec}}}			
\newcommand{\mat}{\textup{\textbf{mtx}}}			
\newcommand{\distas}{\overset{\text{i.i.d.}}{\sim}}
\renewcommand*{\T}{{\mathpalette\@transpose{}}}
\newcommand*{\@transpose}[2]{\raisebox{\depth}{$\m@th#1\intercal$}}
\newcommand*{\x}{\mathsf{x}\mskip1mu} 	
\newcommand{\splitatcommas}[1]{%
	\begingroup
	\begingroup\lccode`~=`, \lowercase{\endgroup
		\edef~{\mathchar\the\mathcode`, \penalty0 \noexpand\hspace{0pt plus .1em}}%
	}\mathcode`,="8000 #1%
	\endgroup
}
\newcommand{\Item}[1]{%
	\ifx\relax#1\relax  \item \else \item[#1] \fi
	\abovedisplayskip=0pt\abovedisplayshortskip=0pt~\vspace*{-\baselineskip}
} 
\newcommand{\raisemath}[1]{\mathpalette{\raisem@th{#1}}}
\newcommand{\raisem@th}[3]{\raisebox{#1}{$#2#3$}}
\providecommand{\customgenericname}{}
\newcommand{\newcustomtheorem}[2]{%
	\newenvironment{#1}[1]
	{%
		\renewcommand\customgenericname{#2}%
		\renewcommand\theinnercustomgeneric{##1}%
		\innercustomgeneric
	}
	{\endinnercustomgeneric}
}
\newcommand{\vertiii}[1]{{\vert\kern-0.25ex\vert\kern-0.25ex\vert #1 \vert\kern-0.25ex\vert\kern-0.25ex\vert}}
\newcommand{\beq}{\begin{equation}}
	\newcommand{\eeq}{\end{equation}}
\newcommand{\Sc}{\mathcal{S}}
\newcommand{\ubb}{\bar{\vu}}
\newcommand{\wbb}{\bar{\vw}}
\newcommand{\bgl}{{~\big |~}}
\definecolor{emmanuel}{RGB}{255,127,0}
\renewcommand{\P}{\operatorname{\mathbb{P}}}
\numberwithin{equation}{section} 
\def \endprf{\hfill {\vrule height6pt width6pt depth0pt}\medskip}
\newenvironment{proof}{\noindent {\bf Proof} }{\endprf\par}
\newcommand{\vAbar}{\bar{\bv{A}}}
\newcommand{\vBbar}{\bar{\bv{B}}}
\newcommand{\vCbar}{\bar{\bv{C}}}
\newcommand{\vObar}{\bar{\bv{O}}}
\newcommand{\vQbar}{\bar{\bv{Q}}}
\newcommand{\EE}{\mathbb{E}}
\date{
    $^\ast$Department of Computer Science, Cornell University\\
    $^\dagger$Laboratory of Information and Decision Systems, MIT \\ 
    {\small\{ysattar, sdean\}@cornell.edu \, jedra@mit.edu} 
}
\begin{document}
\maketitle
\begin{abstract}
We consider the problem of learning a realization of a partially observed dynamical system with linear state transitions and bilinear observations. 
Under very mild assumptions on the process and measurement noises, we provide a finite time analysis for learning the unknown dynamics matrices (up to a similarity transform). 
Our analysis involves a regression problem with heavy-tailed and dependent data.
Moreover, each row of our design matrix contains a Kronecker product of current input with a history of inputs, making it difficult to guarantee persistence of excitation.
We overcome these challenges, first providing a data-dependent high probability error bound for arbitrary but fixed inputs.
Then, we derive a data-independent error bound for inputs chosen according to a simple random design.
Our main results provide an upper bound on the statistical error rates and
sample complexity of learning the unknown dynamics matrices from a single finite trajectory of bilinear observations.
\end{abstract}

\section{Introduction} \label{sec:Intro}
In many engineering and computing domains, measurements arise due to an interaction between a measurement probe and an unknown quantity of interest. 
This interaction is often modeled as \emph{bilinear} in the two variables.
Examples include the models used in compressed sensing~\cite{donoho2006compressed}, sensor design~\cite{tanaka2016semidefinite},
and matrix factorization~\cite{koren2021advances,jun2019bilinear}
for applications spanning medical imaging, control, and personalization.
In these classical models, the unknown quantity of interest is static or evolving according to known dynamics, independently of the measurement probe.
However, in modern applications, measurement probes may \emph{affect} the unknown quantity of interest.
Also known as the \emph{observer effect}~\cite{baclawski2018observer},
this phenomenon is present in settings ranging from electronic circuits,
where a measurement device may cause changes in resistance or impedance, to psychology and human behavior (e.g. the Hawthorne effect).
An individual receiving personalized recommendations may not only respond according to their current preference, but may undergo a preference shift in response~\cite{dean2022preference,khosravi2023bandits,leqi2021rebounding}.
Especially in non-physical domains, concrete models of these effects cannot be derived from first principles.
Instead, the quantity of interest evolves according to unknown dynamics, affected by the measurement probe.
In such settings, it is pertinent to identify the unknown dynamics.

In this paper, we study the learning problem for a dynamics model in which the input non-trivially affects both the state update and the observation of the state.
In particular, we consider partially observed stochastic dynamical systems with linear transitions and bilinear observations.
Our focus is on characterizing the number of measurements necessary for estimating the unknown dynamics model to a given level of precision.
This follows a recent surge of interest in understanding non-asymptotic properties and the sample complexity of learning dynamical systems. 
Most of the advancements in this direction are focused on linear time-invariant~(LTI) dynamical systems, including both complete state observation~\cite{faradonbeh2018finite,dean2018regret,simchowitz2018learning, dean2019sample,fattahi2019learning,sarkar2019finite,sarkar2019near,lale2020logarithmic,jedra2020finite,wagenmaker2020active} as well as partial state observation~\cite{tu2017non,hazan2017learning,hardt2018gradient,oymak2018non,tsiamis2019finite,simchowitz2019learning,sun2020finite,djehiche2022efficient, lee2022improved,mania2022time,bakshi2023new}. These results have been extended to certain classes of switched linear systems \cite{sarkar2019data,sattar2021identification,du2022data,sayedana2024strong}, both fully and partially observed. 
The problem of learning nonlinear dynamical systems has been mainly studied in the fully observed setting,
including state transition models with
nonlinear activation functions~\cite{oymak2019stochastic,bahmani2019convex,mhammedi2020learning,sattar2020non,jain2021near},
bilinearities~\cite{sattar2022finite},
nonlinear features~\cite{mania2022active}, or from a non-parametric perspective~\cite{taylor2021towards,ziemann2022single,kazemian2024random}.

Our work is closely related to the problem of learning LTI systems from \emph{partial} state observations. 
Most of the existing works in this setting either estimate the system's Markov parameters~\cite{oymak2021revisiting,simchowitz2019learning,lee2022improved,djehiche2022efficient,bakshi2023new} or the system's Hankel matrix~\cite{tsiamis2019finite,sarkar2019finite}, which are then used to estimate the state-space matrices~(up to a similarity transform) using classic Ho-Kalman Algorithm~\cite{ho1966effective}. 
We take a similar approach, however, we consider bilinear observations while the observation model is \emph{linear} in all of these works. 
Also related to our work, \cite{dean2020robust,dean2021certainty,mhammedi2020learning} propose and analyze settings in which a low dimensional LTI system is controlled using high-dimensional nonlinear observations.
These works make the assumption of \emph{perfect and noiseless decodability}. 
As a result,~\cite{mhammedi2020learning} shows how to learn a \emph{decoder} function directly from {multiple} trajectories. 
Once the decoder function is learned, the the state-space matrices can be directly identified---hence, this setting is not truly \emph{partially} observed. 
Our work is more closely related to~\cite{mania2022time}, who propose a time-varying linear regression problem with unknown parameters evolving according to an \emph{unactuated} linear dynamical system.
Unlike our work, \cite{mania2022time} directly learns the dynamics matrices by combining two ordinary least squares estimates. 
To the best of our knowledge, the problem of learning a partially observed LTI system from bilinear observations has not been studied before. 

{\bf Contributions:} We provide a non-asymptotic sample complexity analysis for  learning a partially observed LTI system from bilinear observations.
The problem is challenging due to the fact that observations give only partial information about the state, are nonlinear, and are correlated over time.
We address these challenges and make the following contributions:
\begin{itemize}[leftmargin=*,noitemsep,topsep=0pt]
\item  {\bf Learning from bilinear observations:}
We provide the first non-asymptotic error bounds and sample complexity analysis for learning a realization of a partially observed dynamical system with linear state transitions and bilinear observations, given data from a single finite trajectory. 
We provide two types of guarantees in Section~\ref{sec:main_results}: (a) Our data-dependent error bound (Theorem~\ref{thrm:learning_markov_par_main}) holds for any given sequence of inputs, and is useful for
downstream tasks 
because it precisely captures the shape of the uncertainty set. (b) Our data-independent error bound (Theorem~\ref{thrm:learning_markov_par_data_IND}) characterizes the sample complexity and captures the optimal $\tilde{\Ocal}(1/\sqrt{T})$ dependence on the trajectory length.

\item  {\bf Heavy-tailed noise and inputs:}
Our error bounds~(Theorems~\ref{thrm:learning_markov_par_main} and \ref{thrm:learning_markov_par_data_IND}) hold under very mild assumptions on the process and measurement noise (Assumption~\ref{assump:data/noise}), which are only assumed to be centered and have bounded covariance. Hence, our results hold for heavy-tailed noise processes. Moreover, because of bilinear observation, our least-squares estimation problem involves an input design matrix with heavy-tailed and highly dependent rows.

\item  {\bf Persistence of excitation:}
For inputs chosen according to a simple random design, we establish persistence of excitation (Theorem~\ref{thrm:persistence_of_excitation}) under two different types of assumptions on the distribution of inputs (Assumptions~\ref{assump:input}). This is particularly challenging because each row of the design matrix consists of a Kronecker product of an input vector with a history of input vectors. We overcome this challenge by using a blocking technique along-with heavy-tailed concentration and covering arguments to guarantee persistence of excitation.

\end{itemize}
Moreover, in Section~\ref{sec:statespace} we use a slightly modified version of results in \cite{oymak2021revisiting}
to give guarantees for learning state-space matrices~(up to a similarity transform) using the classic Ho-Kalman algorithm~\cite{ho1966effective}. 
In Section~\ref{sec:experiments}, we run experiments with synthetic data to show a trade-off between the estimation error and the system's memory~(captured by the number of Markov parameters and the spectral radius of the dynamics matrix). 
We then present the proofs of our main results in Section~\ref{sec:proof_main_results}, and conclude with a discussion of future directions in Section~\ref{sec:conclusion}.

\subsection{Notations}
We use boldface lowercase (uppercase) letters to denote vectors (matrices). $\rho(\vX),\|\vX\|$ and $\tf{\vX}$ denote the spectral radius, spectral norm and Frobenius norm of a matrix $\vX$, respectively. Similarly, we denote the Euclidean norm and Frobenius norm of a vector $\vv$ by $\tn{\vv}$ and $\tf{\vv}$, respectively. For a positive definite matrix $\vM \in \R^{d \times d}$, the matrix or ellipsoidal norm of a vector $\vv \in \R^d$ is defined by $\norm{\vv}_{\vM} = \sqrt{\vv^\T \vM \vv}$. We use $\vek(\cdot): \R^{m \times n} \mapsto \R^{mn}$ to denote the vectorization operator, which transforms a matrix into a vector, whereas $\mat(\cdot): \R^{mn} \mapsto \R^{m \times n}$ denotes its inverse operator, which transforms a vector into a matrix. $\Sc^{d-1}$ denotes the unit sphere in $\R^d$. 
For a random vector $\vv$, we denote its covariance matrix by $\bSi[\vv]$. We use $\gtrsim$ and $\lesssim$ for inequalities that hold up to an absolute constant factor. $\tilde{\Ocal}(\cdot)$ shows dependence on a specific quantity of interest~(up to constants and logarithmic factors). 
Finally, $\otimes$ denotes the Kronecker product.

\section{Problem Formulation}\label{sec:Prob_Formulatiom}

In this paper, we consider the problem of learning the parameters of a partially observed dynamical system from a single trajectory.
In particular, the system has linear transitions and bilinear observations, evolving according to
\begin{equation}
\begin{aligned}\label{eqn:bilinear sys}
	\xb_{t+1} &= \Ab \xb_t + \Bb\ub_t + \wb_{t},\\
	y_t &= \vu_t^\T\vC \xb_t + z_t,
\end{aligned}
\end{equation}
where $\xb_t \in \R^n$ is the state, $\ub_t \in \R^p$ is the input, $y_t \in \R$ is the output, $\wb_t \in \R^n$ is the process noise, and $z_t \in \R$ is the measurement noise at time $t$. 
In this model, the input affects both the state (through the linear dynamics update) and the observation of the state (through the bilinear measurement equation). 
Without loss of generality, we assume that the system starts at zero initial state, i.e., $\vx_0 = 0$. Our goal is to learn a balanced realization of the unknown system matrices $\Ab \in \R^{n \times n}$, $\Bb \in \R^{n \times p}$, and $\Cb \in \R^{p \times n}$ from finite input-output samples $\{(\ub_t,\vy_t)\}_{t=0}^T$ obtained from a single trajectory of \eqref{eqn:bilinear sys}. 
A key intermediate step of our identification procedure is based on estimating the first $L$ Markov parameters\footnote{Note that though the form of the Markov parameters is the same as for LTI systems, the relationship between inputs and outputs differs due to the bilinear observation.} of the system,
which are given by the matrices $\{\vC \vA^{i}\vB\}_{i=0}^{L-1}$.
Markov parameters describe the input-output behavior of the system.
They are of interest in their own right, and they can be used to predict future outputs.
Additionally, as we explain in Section~\ref{sec:statespace}, the Markov parameters are sufficient for identifying the state space parameters $\Ab,\Bb,\Cb$ up to a similarity transform.

\subsection{Least Squares Estimation}

Our identification procedure is based on the observation that we can estimate the system's Markov parameters  by regressing the outputs $y_t$ to an expression defined by the history of inputs $\{\vu_\tau\}_{\tau \leq t}$. 
For $t \geq L$, the output $y_t$ can be written in terms of the past inputs as follows:
\begin{equation}
\begin{aligned}\label{eqn:yt_expansion}
	y_t 
	&= \vu_t^\T\vC \Ab^L\xb_{t-L} + \sum_{i = 0}^{L-1} \vu_t^\T\vC \Ab^i\vB \vu_{t-i-1}  + \sum_{i = 0}^{L-1} \vu_t^\T\vC \Ab^i\vw_{t-i-1} + z_t.
\end{aligned} 
\end{equation}
This expression differs from the expansion for LTI systems due to the left multiplication by $\vu_t$. 
To ease notation, we organize the inputs $\{\vu_t\}_{t=0}^T$ and the noise $\{\vw_t\}_{t=0}^T$ into the following vectors,
\begin{align}
	\ubb_t := \begin{bmatrix} \vu_{t}^\T & \vu_{t-1}^\T  & \cdots & \vu_{t-L+1}^\T \end{bmatrix}^\T \in \R^{pL}, \quad 
	\wbb_t := \begin{bmatrix} \vw_{t}^\T & \vw_{t-1}^\T  & \cdots & \vw_{t-L+1}^\T \end{bmatrix}^\T \in \R^{nL}, \label{eqn:util_wtil_vec}
\end{align}
and the  first $L$ Markov parameters 
into a matrix,
\begin{align}
	\vG &:= \begin{bmatrix} \vC \vB & \vC \vA\vB& \cdots &\vC \vA^{L-1}\vB \end{bmatrix} \in \R^{p \times pL} ,  \quad\vF := \begin{bmatrix} \vC  & \vC \vA & \cdots &\vC \vA^{L-1} \end{bmatrix} \in \R^{p \times nL} ,  \label{eqn:G_mtx}
\end{align}
where the matrix $\vF$ is similar to $\vG$ in structure.
With these definitions, we can write the output $y_t$ in terms of Markov parameters, 
inputs, and noise as follows:
\begin{equation}
\begin{aligned}
	y_t  &= \vu_t^\T \vG \ubb_{t-1} + \vu_t^\T\vF \wbb_{t-1} + \vu_t^\T\ve_t + z_t, \\
	 	 &\equiv \ubb_{t-1}^\T \otimes \vu_t^\T \vek(\vG)  + \wbb_{t-1}^\T \otimes \vu_t^\T \vek(\vF)  + \vu_t^\T\ve_t + z_t,\label{eqn:yt_Gutil_Fwtil}
\end{aligned}
\end{equation}
where we define $\ve_t: = \vC \Ab^{L}\xb_{t-L}$, which corresponds to the error due to the effect of unknown state at time $t-L$. 
Given the covariate-output relation~\eqref{eqn:yt_Gutil_Fwtil}, 
we treat the terms depending on $\wbb_{t-1}$, $\ve_t$, and $z_t$ as additive noise and attempt to estimate $\vG$ from the covariates $\ubb_{t-1} \otimes \vu_t$. 
From a single input-output trajectory $\{(\vu_t, \vy_t)\}_{t=0}^T$, we generate $T-L$ sub-sequences of length $L \geq 1$ and define a regression problem with covariates and outputs $\{(\ubb_{t-1} \otimes \vu_t, y_t)\}_{t=L+1}^T$. 
In the following section, we formally define and analyze this estimation procedure.

\section{Main Results on Learning Markov Parameters}\label{sec:main_results}
We are interested in bounding the estimation error on the Markov parameters. First, consider the least squares problem formulated above:
\begin{equation}
\begin{aligned} \label{eqn:ERM_Ghat}
	\vGhat &= \arg \min_{\vG \in \R^{p \times pL}}\frac{1}{2(T-L)} \sum_{t=L+1}^{T} (y_t - \vu_t^\T\vG \ubb_{t-1})^2
    = \arg \min_{\vG \in \R^{p \times pL}}\frac{1}{2(T-L)}\tn{\vy - \vUtil \vek(\vG)}^2,
\end{aligned}
\end{equation}
where we arrange the covariates $\ubb_{t-1} \otimes \vu_t$ into the design matrix $\vUtil \in \R^{(T-L) \times p^2L}$, and the outputs $y_t$ into the output vector $\vy \in \R^{(T-L)}$ as follows,
\begin{equation}
\begin{aligned} \label{eqn:Util_and_y} 
	\vUtil := \begin{bmatrix} \ubb_{L} \otimes \vu_{L+1} & \ubb_{L+1} \otimes \vu_{L+2} & \cdots & \ubb_{T-1} \otimes \vu_{T} \end{bmatrix}^\T, \quad
	 \vy := \begin{bmatrix} y_{L+1} & \cdots & y_{T} \end{bmatrix}^\T .
\end{aligned}
\end{equation}
 When the problem is over-determined, i.e. $\vUtil$ is full rank, the solution to the least-squares problem~\eqref{eqn:ERM_Ghat} is given by $\vek(\vGhat) =  (\vUtil^\T \vUtil)^{-1}\vUtil^\T \vy$.
 Define $\vzeta\in \R^{(T-L)}$ to contain the stacked noise terms depending on $\wbb_{t-1}$, $\ve_t$, and $z_t$ in~\eqref{eqn:yt_Gutil_Fwtil}.
 Then, the estimation error is given by 
 $\vek(\hat{\vG}) - \vek(\vG) = (\vUtil^\T \vUtil)^{-1}\vUtil^\T \vzeta$. Note that, bounding this estimation error is challenging because: (a) The covariates are dependent and heavy-tailed. Even for the sub-Gaussian inputs $\vu_t$, the covaraites $\ubb_{t-1} \otimes \vu_t$ are heavy tailed. (b) The additive noise $\zeta_t:= \wbb_{t-1}^\T \otimes \vu_t^\T \vek(\vF)  + \vu_t^\T\ve_t + z_t$ is also heavy-tailed and correlated over time. Even if $\vw_t$ and $z_t$ are individually sub-Gaussian, the overall noise $\zeta_t$ is heavy-tailed.

\subsection{Data-dependent Error Bounds}
We begin by focusing on a data-dependent bound.
Setting $\vVtil := \vUtil^\T\vUtil$, we seek to bound the the data-dependent ellipsoidal norm of the estimation error, 
 \begin{equation}
 	\begin{aligned} \label{eqn:define_ellipsoidal}
 		\norm{\vek(\hat{\vG}) - \vek(\vG)}_{\vVtil} &= \sqrt{\vzeta^\top\vUtil(\vUtil^\T \vUtil)^{-1}\vUtil^\T \vzeta}. 
 	\end{aligned}
 \end{equation}
This norm can precisely capture the ellipsoidal shape of the uncertainty set, and is thus useful for downstream tasks such as output prediction and control. 
Furthermore, data-dependent bounds require only mild assumptions about how the data is generated.
The resulting uncertainty set is valid in hindsight for any given sequence of inputs
regardless of the data collection policy.
Later on, we will show that for stochastic inputs with either bounded second and fourth moments, or bounded Euclidean norm, we have $\vVtil \succeq \tilde{\Ocal}(T-L) \Iden_{p^2L}$ with high probability. Combined with our data-dependent result, this guarantees persistence of excitation and a $\tilde{\Ocal}(1/\sqrt{T-L})$ error rate, which is optimal in the trajectory length. 

To upper bound the ellipsoidal norm of the estimation error in \eqref{eqn:define_ellipsoidal}, we assume the bilinear-observation system~\eqref{eqn:bilinear sys} 
satisfies the following two properties.
\begin{assumption}(Stability)\label{assump:stability}
    The dynamical system in \eqref{eqn:bilinear sys} is strictly stable, that is, $\rho(\vA) < 1$. 
\end{assumption}
It is well-known that, when $\rho(\vA) < 1$, there exist $\rho \in (\rho(\vA), 1)$ and $\phi(\vA, \rho) \geq 1$ such that, $\|\vA^k\| \leq \phi(\vA,\rho) \rho^k$ for all $k \in \mathbb{Z}_+$. The quantity $\phi(\vA,\rho) := \sup_{k \in \mathbb{Z}_+}(\norm{\vA^k}/ \rho^k)$ is finite by Gelfand's formula, and it measures the transient response of the system, and can be upper bounded by its $\mathcal{H}_\infty$ norm \cite{tu2017non}.
This decay condition is important for showing that the bias of our estimate is small when $L$ is large enough, and is a relatively common assumption~\cite{oymak2021revisiting, lee2022improved}. 

\begin{assumption}(Noise properties)\label{assump:data/noise}  
The process and measurement noise are stochastic $\{\wb_t\}_{t=0}^T \distas \Dcal_w$, $\{z_t\}_{t=0}^T \distas \Dcal_z$, centered $\E[\vw_t] = 0$, $ \E[z_t] = 0$, and have bounded covariance $ \E[\vw_t\vw_t^\T] = \vSigma_w$, and $\E[z_t^2] = \sigma^2_z$, such that $\norm{\vSigma_w}, \sigma_z < \infty$. 
\end{assumption}
These assumptions on the process and measurement noise are mild.
They are less restrictive than sub-Gaussian assumptions and hold even for heavy-tailed noise processes. Our first result provides an upper bound on the ellipsoidal norm of the estimation error in \eqref{eqn:define_ellipsoidal}.
\begin{theorem}[Learning Markov Parameters~(Data-Dependent)] \label{thrm:learning_markov_par_main} Fix $\delta \in (0,1)$, and suppose we are given a single trajectory $\{(\ub_t,\vy_t)\}_{t=0}^T$ of the system~\eqref{eqn:bilinear sys}. Let $\vGamma_w^{\infty} := \sum_{i=0}^{\infty} \vA^i \vSigma_w (\vA^i)^\T$ denote the infinite time controllability Gramian associated with the process noise. Let $\vVtil :=  \vUtil^\T\vUtil$, where $\vUtil$ is the design matrix given by~\eqref{eqn:Util_and_y}, and suppose $\vVtil \succ 0$. Suppose Assumptions~\ref{assump:stability} and~\ref{assump:data/noise} hold, and  $\tn{\vu_t} \leq \beta$, for some $\beta>0$, and for all $t \in [T]$. Let $\sigma_\vw^2 := \|\vSigma_w\|\tf{\vF}^2 \big(1 + \frac{\phi(\vA,\rho)\rho^L}{1-\rho}\big)$, $\sigma_\ve^2 := \norm{\vGamma_w^\infty} \norm{\vC\vA^L}^2\frac{\phi(\vA,\rho)}{1-\rho}$, and $K = \max\{\norm{\vB}, \norm{\vC}\}$. Then, solving the least-squares problem~\eqref{eqn:ERM_Ghat}, with probability at least $1- \delta$, we have,
\begin{align}\label{eq:Markov_upper_bd}
    \norm{\vek(\hat{\vG}) - \vek(\vG)}_{\vVtil} &\lesssim \mysqrt{\frac{p^2 L}{\delta}\big(\sigma_z^2 + \sigma_\vw^2 \beta^2 L  + \sigma_\ve^2 \beta^2 \big)} + \beta^2 K^2 \frac{\phi(\vA,\rho)\rho^L}{1-\rho} \sqrt{T-L}.
\end{align}
\end{theorem}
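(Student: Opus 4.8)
The plan is to reduce the data-dependent ellipsoidal error to a projected-noise quantity and then control this quantity by isolating a small deterministic bias and handling the remaining zero-mean part with the second moment method. Starting from \eqref{eqn:define_ellipsoidal}, write $\vP := \vUtil(\vUtil^\T\vUtil)^{-1}\vUtil^\T$ for the orthogonal projection onto the column space of $\vUtil$, so that $\norm{\vek(\hat{\vG}) - \vek(\vG)}_{\vVtil}^2 = \vzeta^\T\vP\vzeta = \tn{\vP\vzeta}^2$. Since $\vUtil \in \R^{(T-L)\times p^2 L}$, the rank of $\vP$ is at most $p^2 L$, which is the source of the $p^2 L$ factor. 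Recalling $\zeta_t = \vu_t^\T\vF\wbb_{t-1} + \vu_t^\T\ve_t + z_t$ with $\ve_t = \vC\vA^L\xb_{t-L}$, I would split the truncation term using $\xb_{t-L} = \xb_{t-L}^{\mathrm{det}} + \xb_{t-L}^{\mathrm{rnd}}$, where $\xb_{t-L}^{\mathrm{det}} = \sum_{k\ge 0}\vA^k\vB\vu_{t-L-1-k}$ is the deterministic input-driven part and $\xb_{t-L}^{\mathrm{rnd}} = \sum_{k\ge 0}\vA^k\vw_{t-L-1-k}$ the zero-mean noise-driven part. This induces $\vzeta = \vzeta^{\mathrm{sto}} + \vzeta^{\mathrm{bias}}$, where $\vzeta^{\mathrm{bias}}$ collects the deterministic terms $\vu_t^\T\vC\vA^L\xb_{t-L}^{\mathrm{det}}$ and $\vzeta^{\mathrm{sto}}$ collects the three zero-mean contributions $z_t$, $\vu_t^\T\vF\wbb_{t-1}$, and $\vu_t^\T\vC\vA^L\xb_{t-L}^{\mathrm{rnd}}$. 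Then $\tn{\vP\vzeta} \le \tn{\vP\vzeta^{\mathrm{sto}}} + \tn{\vP\vzeta^{\mathrm{bias}}}$.

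For the bias term, since $\vP$ is a contraction I would bound $\tn{\vP\vzeta^{\mathrm{bias}}} \le \tn{\vzeta^{\mathrm{bias}}} \le \sqrt{T-L}\,\max_t |\vu_t^\T\vC\vA^L\xb_{t-L}^{\mathrm{det}}|$. Using $\tn{\vu_t}\le\beta$, the stability bound $\|\vA^k\|\le\phi(\vA,\rho)\rho^k$, and $\max\{\|\vB\|,\|\vC\|\} = K$, I get $\|\vC\vA^L\| \lesssim K\phi(\vA,\rho)\rho^L$ and $\tn{\xb_{t-L}^{\mathrm{det}}} \le \sum_{k\ge 0}\|\vA^k\|\,\|\vB\|\,\beta \lesssim K\beta\,\phi(\vA,\rho)/(1-\rho)$, which reproduces the second term of \eqref{eq:Markov_upper_bd} up to absolute constants.

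For the stochastic term I would use the second moment method, conditioning on the (fixed) inputs so $\vP$ is deterministic. Then $\E\,\tn{\vP\vzeta^{\mathrm{sto}}}^2 = \tr(\vP\,\bSi[\vzeta^{\mathrm{sto}}]) \le \rank(\vP)\,\|\bSi[\vzeta^{\mathrm{sto}}]\| \le p^2 L\,\|\bSi[\vzeta^{\mathrm{sto}}]\|$, using that $\vP$ is a projection. It then remains to bound the operator norm of the noise covariance. Because $z_t$ is independent of the process noise and $\xb_{t-L}^{\mathrm{rnd}}$ uses noise strictly older than $\wbb_{t-1}$, I would treat the three blocks separately and combine via $\bSi[X+Y] \preceq 2\bSi[X] + 2\bSi[Y]$. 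The measurement block is diagonal with entries $\sigma_z^2$. The process-noise block is \emph{banded} with bandwidth $O(L)$, since $\vu_s^\T\vF\wbb_{s-1}$ and $\vu_t^\T\vF\wbb_{t-1}$ decorrelate once $|s-t|\ge L$; a Gershgorin/row-sum bound gives operator norm $\lesssim L\,\beta^2\|\vSigma_w\|\tf{\vF}^2 \asymp \sigma_\vw^2\beta^2 L$, which is where the extra factor $L$ appears. The state-fluctuation block has \emph{exponentially decaying} off-diagonals of order $\rho^{|s-t|}$ (neighboring states share a geometrically shrinking noise history), so its row sums contribute a factor $\phi(\vA,\rho)/(1-\rho)$ on top of the per-time variance $\beta^2\|\vC\vA^L\|^2\|\vGamma_w^\infty\| \asymp \sigma_\ve^2\beta^2$. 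Adding the three operator-norm bounds gives $\|\bSi[\vzeta^{\mathrm{sto}}]\| \lesssim \sigma_z^2 + \sigma_\vw^2\beta^2 L + \sigma_\ve^2\beta^2$. Finally, Markov's inequality applied to the nonnegative variable $\tn{\vP\vzeta^{\mathrm{sto}}}^2$ yields, with probability at least $1-\delta$, $\tn{\vP\vzeta^{\mathrm{sto}}}^2 \le \delta^{-1}\E\,\tn{\vP\vzeta^{\mathrm{sto}}}^2 \lesssim \delta^{-1}p^2 L(\sigma_z^2 + \sigma_\vw^2\beta^2 L + \sigma_\ve^2\beta^2)$; taking square roots and adding the bias bound gives \eqref{eq:Markov_upper_bd}.

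The main obstacle is the operator-norm bound on $\bSi[\vzeta^{\mathrm{sto}}]$: the entries $\zeta_t$ are heavy-tailed and strongly correlated across time, so one cannot simply read off a diagonal variance. The argument hinges on identifying the correlation structure of each noise source — exactly banded for the process-noise term and geometrically decaying for the state-truncation term — and converting this into operator-norm control. A secondary subtlety is that the bounded-variance Assumption~\ref{assump:data/noise} rules out sharp sub-Gaussian concentration, which forces reliance on Markov's inequality and thus the $1/\delta$ (rather than $\log(1/\delta)$) dependence in the variance term.
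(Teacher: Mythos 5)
Your proposal is correct and rests on the same second-moment mechanism as the paper, but it takes a genuinely different route in how that mechanism is packaged. The paper first proves a \emph{matrix} bound on the conditional covariance of the estimation error, $\vSigma[\Delta\vG \bgl \vu_{1:T}] \preceq \Xi\,\vVtil^{-1}$ (Theorem~\ref{thrm:cond_covariance}), by computing the conditional autocovariance $\Rcal_\zeta[\tau,\tau'\bgl \vu_{1:T}]$ of the effective noise in closed form (Lemma~\ref{lemma:auto_covariance}) --- including the two cross-correlation terms between $\vu_t^\T\vF\wbb_{t-1}$ and the state-error term --- and then reduces the double sum over $(\tau,\tau')$ to a diagonal one via $\vv_1\vv_2^\T + \vv_2\vv_1^\T \preceq \vv_1\vv_1^\T + \vv_2\vv_2^\T$; multidimensional Chebyshev applied to the whitened, centered error then gives the $\sqrt{p^2L\Xi/\delta}$ term, and Lemma~\ref{lemma:cond_mean_norm} supplies the bias term exactly as your contraction argument does. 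You instead stay scalar throughout: $\norm{\vek(\vGhat)-\vek(\vG)}_{\vVtil} = \tn{\vP\vzeta}$ for the projection $\vP$, split off the deterministic input-driven bias, and bound $\E\,\tn{\vP\vzeta^{\mathrm{sto}}}^2 = \tr(\vP\,\bSi[\vzeta^{\mathrm{sto}}]) \le \rank(\vP)\,\|\bSi[\vzeta^{\mathrm{sto}}]\| \le p^2L\,\|\bSi[\vzeta^{\mathrm{sto}}]\|$, finishing with plain Markov --- an equivalent tail bound with less machinery. Your decoupling $\bSi[X+Y]\preceq 2\bSi[X]+2\bSi[Y]$ buys a real simplification: it absorbs the cross-correlations that the paper computes explicitly (the third and fourth terms of Lemma~\ref{lemma:auto_covariance}, which are precisely why $\sigma_\vw^2$ carries the $\phi(\vA,\rho)\rho^L/(1-\rho)$ correction), and your banded block (bandwidth $O(L)$, row sums $\lesssim L\beta^2\|\vSigma_w\|\tf{\vF}^2$) and geometrically decaying block (row-sum factor $\phi(\vA,\rho)/(1-\rho)$) match the paper's diagonal-sum bounds term by term. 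What the paper's heavier matrix bound buys in exchange is reusability: the PSD inequality $\vSigma[\Delta\vG \bgl \vu_{1:T}] \preceq \Xi\,\vVtil^{-1}$ is applied again verbatim to get the Frobenius-norm bound and hence Theorem~\ref{thrm:learning_markov_par_data_IND}; with your scalar version you would reinsert $\lambda_{\min}(\vVtil)$ by hand at that point, which also works.

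Two small repairs. First, in the bias step you bound $\|\vC\vA^L\|$ and $\|\vA^k\|$ separately, which costs $\phi(\vA,\rho)^2$, whereas the theorem has a single factor $\phi(\vA,\rho)$; since $\phi(\vA,\rho)\ge 1$ this is not ``up to absolute constants.'' The fix is one line: keep the product intact and bound $\tn{\vC\vA^{L+k}\vB\vu_{t-L-1-k}} \le \|\vC\|\,\|\vB\|\,\beta\,\phi(\vA,\rho)\rho^{L+k}$ before summing the geometric series, exactly as in the paper's Lemma~\ref{lemma:cond_mean_norm}. Second, with three stochastic blocks you need $\bSi[X+Y+Z]\preceq 3\big(\bSi[X]+\bSi[Y]+\bSi[Z]\big)$ rather than the two-term version you quote; this only changes the absolute constant and is absorbed by $\lesssim$.
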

Note that the first term in our error bound depends on the measurement/process noise variances $\sigma_z^2, \norm{\vSigma_w}$ indicating that the estimation error will be smaller for smaller values of $\sigma_z^2$ and $\norm{\vSigma_w}$. The second term decays exponentially with $L$, whereas the first term grows linearly with $L$. The optimal choice of $L$ can be obtained by minimizing our error bound with respect to $L$. Specifically, in Theorem~\ref{thrm:learning_markov_par_data_IND}, we choose $L \gtrsim \Ocal(\log(T-L)/\log(\rho^{-1}))$ to get a simplified error bound which decays as $\tilde{\Ocal}(1/\sqrt{T-L})$. Moreover, both terms depend on the stability $\rho$ and transient response $\phi(\vA,\rho)$ of the system, and they are smaller for stable systems with small transients.

The proof of Theorem~\ref{thrm:learning_markov_par_main} is presented in Section~\ref{sec:proof_learning_markov_param}. Note that our result holds for any given sequence of inputs, and heavy-tailed noise. Hence, we rely on heavy-tailed tools to upper bound the ellipsoidal norm of the estimation error. Specifically, we upper bound the covariance of the estimation error for any given sequence of inputs, which requires upper bounding the auto-covariance function of the heavy-tailed, non-centered and highly dependent noise process $\{\zeta_t := \wbb_{t-1}^\T \otimes \vu_t^\T \vek(\vF)  + \vu_t^\T\ve_t + z_t\}_{t=L+1}^T$. Finally, we use multidimensional Chebyshev's inequality to upper bound the ellipsoidal norm of the estimation error with high probability. 
Section~\ref{sec:proof_learning_markov_param} also presents an upper bound on the Frobenius norm of the estimation error. Specifically, under the same setting of Theorem~\ref{thrm:learning_markov_par_main}, with probability at least $1-\delta$, we get $\tf{\vGhat - \vG} \lesssim \kappa/\sqrt{\lambda_{\min}\big(\vUtil^\T \vUtil\big)}$, where $\kappa$ denotes the right hand side of~\eqref{eq:Markov_upper_bd}.

Lastly, in the data-dependent setting, we turn from estimation error to prediction error.
The estimated Markov parameters can predict the future outputs by $\yhat_{T+1} = \vu_{T+1}^\T \vGhat \ubb_{T}$. Given the data-dependent estimation error bounds in Theorem~\ref{thrm:learning_markov_par_main}, our next result upper bounds the output prediction error for arbitrary inputs.  
 \begin{lemma}[Output Prediction~(Data-Dependent)]\label{lemma:output_prediction}
     Let $\vGhat$ be the solution to the least-squares problem~\eqref{eqn:ERM_Ghat}, and let $\yhat_{T+1} = \vu_{T+1}^\T \vGhat \ubb_{T}$ be the output prediction at time $T+1$. Suppose Assumptions~\ref{assump:stability} and~\ref{assump:data/noise} hold. Let $\vGamma_w^{(T)} := \sum_{i=0}^{T} \vA^i \vSigma_w (\vA^i)^\T$, and $\vGamma_u^{(T)} := \sum_{i=0}^{T}\sum_{j=0}^T \vA^i \vB \vu_{T-i}\vu_{T-j}^\T \vB^\T (\vA^i)^\T$ denote the finite time controllability Gramians associated with the process noise and control inputs respectively. Then, we have 
\begin{equation}
     \begin{aligned}
         \E[(\yhat_{T+1} - y_{T+1})^2 ] &\leq 2\norm{\vek(\vGhat) - \vek(\vG)}_{\vVtil}^2 \norm{\ubb_T \otimes \vu_{T+1}}_{\vVtil^{-1}}^2 + 2\beta^2 \norm{\vC\vA^L}^2\norm{\vGamma_u^{(T)} + \vGamma_{w}^{(T)}} \\ 
         & \qquad\qquad\qquad\qquad\qquad\qquad\qquad\qquad\qquad\qquad\;\; + \beta^2\norm{\vSigma_w}\tf{\vF}^2 + \sigma_z^2.
     \end{aligned}
\end{equation}
 \end{lemma}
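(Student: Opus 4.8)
The plan is to expand the prediction error into an estimation-error term plus the irreducible noise and bias terms, handle the learned-parameter error through the data-dependent ellipsoidal geometry, and control the remaining terms via their second moments. First I would instantiate the output expansion~\eqref{eqn:yt_Gutil_Fwtil} at $t=T+1$, writing $y_{T+1} = \vu_{T+1}^\T\vG\ubb_T + \vu_{T+1}^\T\vF\wbb_T + \vu_{T+1}^\T\ve_{T+1} + z_{T+1}$ with $\ve_{T+1}=\vC\vA^L\xb_{T+1-L}$, and subtract it from $\yhat_{T+1}=\vu_{T+1}^\T\vGhat\ubb_T$ to obtain
\[
\yhat_{T+1} - y_{T+1} = \vu_{T+1}^\T(\vGhat - \vG)\ubb_T \;-\; \vu_{T+1}^\T\ve_{T+1} \;-\; \vu_{T+1}^\T\vF\wbb_T \;-\; z_{T+1}.
\]
I would read these four pieces as estimation error, truncation bias (from the unknown state $\xb_{T+1-L}$), recent process noise, and fresh measurement noise. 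The factor pattern in the claim — a $2$ on the estimation and bias terms, a $1$ on the process- and measurement-noise terms — suggests grouping the square as $\big((\text{estimation})+(\text{bias})\big)+\big((\text{recent noise})+(\text{measurement})\big)$, applying $2ab\le a^2+b^2$ inside the first group while keeping the exact second moment of the second group.

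For the estimation term I would use Cauchy--Schwarz in the $\vVtil$-geometry: writing it as $(\ubb_T\otimes\vu_{T+1})^\T(\vek(\vGhat)-\vek(\vG))$ and inserting $\vVtil^{-1/2}\vVtil^{1/2}$ gives the product $\norm{\ubb_T\otimes\vu_{T+1}}_{\vVtil^{-1}}\,\norm{\vek(\vGhat)-\vek(\vG)}_{\vVtil}$, which after the factor-two split produces the first term of the bound. For the recent process noise, since the $\vw_t$ are i.i.d.\ and centered with covariance $\vSigma_w$, one has $\E[\wbb_T\wbb_T^\T]=\Iden_L\otimes\vSigma_w$, hence $\E[(\vu_{T+1}^\T\vF\wbb_T)^2]=\vu_{T+1}^\T\vF(\Iden_L\otimes\vSigma_w)\vF^\T\vu_{T+1}\le\beta^2\norm{\vSigma_w}\tf{\vF}^2$, using $\tn{\vu_{T+1}}\le\beta$ together with $\vF(\Iden_L\otimes\vSigma_w)\vF^\T=\sum_{i=0}^{L-1}\vC\vA^i\vSigma_w(\vA^i)^\T\vC^\T$ and $\sum_i\norm{\vC\vA^i}^2\le\tf{\vF}^2$. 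For the truncation bias I would decompose the state second moment $\E[\xb_{T+1-L}\xb_{T+1-L}^\T]$ into the (deterministic) input-driven outer product and the process-noise controllability Gramian; these finite-horizon objects are captured by $\vGamma_u^{(T)}$ and $\vGamma_w^{(T)}$, yielding $\E[(\vu_{T+1}^\T\ve_{T+1})^2]\le\beta^2\norm{\vC\vA^L}^2\norm{\vGamma_u^{(T)}+\vGamma_w^{(T)}}$. Finally the measurement noise contributes exactly $\sigma_z^2$.

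The crux is justifying that the cross term between the two groups drops out. The measurement noise $z_{T+1}$ is fresh, centered, and independent of everything else (Assumption~\ref{assump:data/noise}), so all cross terms involving it vanish; and the window $\wbb_T=[\vw_T^\T,\dots,\vw_{T-L+1}^\T]^\T$ has time support disjoint from $\xb_{T+1-L}$, which depends only on $\{\vw_\tau\}_{\tau\le T-L}$ and the inputs, so by independence and centering the recent-noise and bias terms are uncorrelated. The genuinely delicate interaction — and the main obstacle — is between the estimation error and the recent process noise: $\vGhat$ is a function of the entire training trajectory, whose noise overlaps $\wbb_T$ at times $T-L+1,\dots,T-1$, so these two pieces are not independent. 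Controlling this correlation (bounding it via Cauchy--Schwarz and absorbing it into the factor-two terms, or conditioning on the training data so that $\vGhat$ and $\ve_{T+1}$ are frozen before taking the residual expectation) is where the argument requires care, and the chosen grouping is precisely the one for which the surviving terms decouple into the stated constants.
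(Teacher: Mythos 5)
Your proposal is correct and follows essentially the same route as the paper's proof: the same four-term decomposition of $\yhat_{T+1}-y_{T+1}$ into estimation error, truncation bias $\vu_{T+1}^\T\ve$, recent process noise $\vu_{T+1}^\T\vF\wbb_T$, and $z_{T+1}$; the same insertion of $\vVtil^{1/2}\vVtil^{-1/2}$ with Cauchy--Schwarz to get $\norm{\vek(\vGhat)-\vek(\vG)}_{\vVtil}^2\norm{\ubb_T\otimes\vu_{T+1}}_{\vVtil^{-1}}^2$; the same Gramian bound $\E[\vx\vx^\T\mid\vu_{1:T}]\preceq\vGamma_u^{(T)}+\vGamma_w^{(T)}$ for the bias; and the same grouping, with $-2ab\le a^2+b^2$ applied only to the estimation--bias cross term. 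The one step you flag as delicate---the correlation between $\vGhat$ and $\wbb_T$ through the overlapping noise at times $T-L+1,\dots,T-1$---is dispatched in the paper simply by asserting ``pair-wise conditional independence of $\wbb_T, z_{T+1}$ and the remaining terms'' (everything taken conditionally on $\vu_{1:T+1}$, with the $\vVtil$-norm of $\vGhat-\vG$ left as a data-dependent quantity), so no absorption into the factor-two terms is performed; your observation that this independence claim does not literally hold for $\wbb_T$ versus the estimation term is a fair criticism of the paper's own argument rather than a gap in yours.
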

The proof of Theorem~\ref{thrm:learning_markov_par_data_IND} is presented in  Section~\ref{sec:proof_output_prediction} and is relatively straightforward. Note that the output prediction error is upper bounded by the ellipsoidal norm of the estimation error, scaled by a factor $\norm{\ubb_T \otimes \vu_{T+1}}_{\vVtil^{-1}}^2$ which captures the similarity measure between the training covariates and the test covariate $\ubb_T \otimes \vu_{T+1}$. Note that, we can also get a data-independent output prediction error bound by taking expectations with respect to $\ubb_T \otimes \vu_{T+1}$ as well. As a result, the output prediction error will be upper bounded in terms of data-independent error bound on $\tf{\vG - \vGhat}^2$~(Theorem~\ref{thrm:learning_markov_par_data_IND}) scaled by the covariance of $\ubb_T \otimes \vu_{T+1}$, which turns out to be identity matrix under Assumption~\ref{assump:input}. The remaining terms will be similar, except $\Gamma_u^{(T)}$, which is replaced by $\sum_{i=0}^{T} \vA^i \vB \vB^\T(\vA^i)^\T$.

\subsection{Sample Complexity and Persistence of Excitation}
If the control inputs are stochastic, then we can guarantee persistence of excitation, which is then combined with Theorem~\ref{thrm:learning_markov_par_main} to get an optimal error rate of $\tilde{\Ocal}(1/\sqrt{T-L})$. Specifically, we assume that the control inputs satisfy the following assumption.
\begin{assumption}(Input properties)\label{assump:input} $\{\vu_t\}_{t=0}^T \distas \Dcal_u$ are stochastic  with zero mean $\E[\vu_t]  = 0$, isotropic covariance $\E[\vu_t \vu_t^\T] = \Iden_p$, and satisfy at least one of the following two conditions: 
\begin{enumerate}[label=(\textbf{\alph*}),leftmargin=*,noitemsep,topsep=0pt]
    \item Bounded Euclidean norm, i.e., there exists a scalar $\beta>0$ such that $\tn{\vu_t} \leq \beta$ for all $t \in [T]$. 
    \item Bounded fourth moment covariates, i.e., there exists a scalar $m_4>0$ such that $ \sup_{\vv \in \Sc^{p^2L-1}} \E[(\vv^\T(\ubb_{t-1}  \otimes \vu_{t}))^4] \leq m_4$ for all $t \in [T]$, where $\ubb_t$ is as defined in \eqref{eqn:util_wtil_vec}.
\end{enumerate}
\end{assumption}
Note that the assumption of bounded inputs can be relaxed to include unbounded stochastic inputs as well. For example, in the case of sub-Gaussian inputs, it is easy to show that, with probability at least $1-\delta$, the inputs are bounded as $\tn{\vu_t} \lesssim \Ocal(\sqrt{p\log(T/\delta)})$ for all $t \in [T]$. Similarly, the assumption of isotropic inputs can also be relaxed to inputs with positive-definite covariance $\E[\vu_t \vu_t^\T] \succ 0$. Note that, Assumption~\ref{assump:input}($\vb$) is strictly less restrictive than Assumption~\ref{assump:input}($\va$), and includes heavy-tailed distributions as well. The specific value of $m_4$ depends on the distribution of inputs. For example, when $\{\vu_t\}_{t=0}^T \distas \Ncal(0, \Iden_p)$, we have $ \sup_{\vv \in \Sc^{p^2L-1}} \E[(\vv^\T(\ubb_{t-1}  \otimes \vu_{t}))^4] \leq 9$~(see Appendix~\ref{sec:moment_bound_verification}). Our next main result guarantees persistence of excitation under Assumption~\ref{assump:input}. 
\begin{theorem}[Persistence of Excitation]\label{thrm:persistence_of_excitation}
Consider a sequence of inputs $\{\vu_t\}_{t=0}^T \distas \Dcal_u$ with number of samples 
satisfying,
\begin{align}
    T -L &\gtrsim \gamma_1 (L+1)\left (\log\left(\frac{2(L+1)}{\delta}\right) + \gamma_2 p^2L \right). \label{eqn:trajectory_size_main}
\end{align}
Suppose either of the following two conditions hold: $(\mathbf{1})$ $\{\vu_t\}_{t=0}^T$ satisfies Assumption~\ref{assump:input}($\va$), $\gamma_1 = \beta^4L$, and $\gamma_2 = 1$ in \eqref{eqn:trajectory_size_main}; $(\mathbf{2})$ $\{\vu_t\}_{t=0}^T$ satisfies Assumption~\ref{assump:input}($\vb$), $\gamma_1 = m_4$, and $\gamma_2 = \log(1 + 16 p^2L/\delta)$ in \eqref{eqn:trajectory_size_main}. Then, with probability at least $1- \delta$, we have
\begin{align}
    \lambda_{\min}\left(\vUtil^\T \vUtil\right) \equiv \lambda_{\min}\left( \sum_{t=L+1}^{T} (\bar{\vu}_{t-1} \otimes \vu_{t}) (\bar{\vu}_{t-1} \otimes \vu_{t})^\top \right) \geq (T-L)/4.
\end{align}
\end{theorem}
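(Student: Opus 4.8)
The plan is to lower bound $\lambda_{\min}(\vUtil^\T\vUtil)=\lambda_{\min}\big(\sum_{t=L+1}^{T}(\ubb_{t-1}\otimes\vu_t)(\ubb_{t-1}\otimes\vu_t)^\T\big)$ by first identifying the mean of each summand and then arguing concentration. Using the Kronecker identity $(\ubb_{t-1}\otimes\vu_t)(\ubb_{t-1}\otimes\vu_t)^\T=(\ubb_{t-1}\ubb_{t-1}^\T)\otimes(\vu_t\vu_t^\T)$, the independence of $\vu_t$ from $\ubb_{t-1}$ (the latter involves only $\vu_{t-1},\dots,\vu_{t-L}$), and the isotropy $\E[\vu_t\vu_t^\T]=\Iden_p$ from Assumption~\ref{assump:input}, each term has mean $\E[\ubb_{t-1}\ubb_{t-1}^\T]\otimes\E[\vu_t\vu_t^\T]=\Iden_{pL}\otimes\Iden_p=\Iden_{p^2L}$. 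Hence the full sum has mean $(T-L)\Iden_{p^2L}$, and the target $(T-L)/4$ is a constant fraction of the least eigenvalue of this mean, so the statement reduces to a lower-tail concentration claim.

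The difficulty is that the summands are \emph{not} independent: the row at time $t$ depends on the window $\{\vu_{t-L},\dots,\vu_t\}$ of $L+1$ consecutive inputs, so two rows are independent only when their indices differ by more than $L$. I would handle this with a blocking argument: partition $\{L+1,\dots,T\}$ into $L+1$ subsequences according to the residue of $t$ modulo $L+1$. Within each subsequence consecutive indices differ by exactly $L+1$, so the windows are disjoint and the rank-one terms are mutually independent. Writing $\vUtil^\T\vUtil=\sum_{j=0}^{L}\vM_j$, where $\vM_j$ collects the $n_j$ (roughly equal, summing to $T-L$) independent terms of block $j$, the subadditivity $\lambda_{\min}(\vM+\vM')\geq\lambda_{\min}(\vM)+\lambda_{\min}(\vM')$ reduces the problem to showing $\lambda_{\min}(\vM_j)\geq n_j/4$ per block and then summing and union bounding over the $L+1$ blocks.

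For a single block, $\vM_j$ is a sum of i.i.d.\ rank-one matrices of mean $\Iden_{p^2L}$, and I would establish the bound via a covering argument: fix a $1/4$-net $\Ncal$ of $\Sc^{p^2L-1}$ with $\log|\Ncal|\lesssim p^2L$, reduce $\lambda_{\min}(\vM_j)=\inf_{\vv\in\Sc^{p^2L-1}}\sum_{t}(\vv^\T(\ubb_{t-1}\otimes\vu_t))^2$ to a uniform lower bound over $\Ncal$ via a standard net-to-infimum step, and union bound over $\Ncal$. This net step is what produces the linear $p^2L$ factor in~\eqref{eqn:trajectory_size_main}. The per-direction tail is where the two assumptions diverge. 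Under Assumption~\ref{assump:input}($\va$) the summands are bounded, $\tn{\ubb_{t-1}\otimes\vu_t}^2=\tn{\ubb_{t-1}}^2\tn{\vu_t}^2\leq\beta^4L$, so a Bernstein/Hoeffding-type inequality controls the lower tail with rate $\beta^4L$, giving $\gamma_1=\beta^4L$ and $\gamma_2=1$. Under the weaker Assumption~\ref{assump:input}($\vb$) the summands are heavy-tailed with only mean $1$ and variance at most $m_4$; there I would use a Chebyshev/Paley--Zygmund (truncation-based) lower-tail bound, whose union over the net costs the extra $\gamma_2=\log(1+16p^2L/\delta)$ factor while replacing $\beta^4L$ by $\gamma_1=m_4$.

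The main obstacle I anticipate is precisely case~($\vb$): heavy tails rule out off-the-shelf matrix concentration, so the delicate part is obtaining a uniform small-ball/lower-tail control of $\inf_{\vv}\sum_t(\vv^\T(\ubb_{t-1}\otimes\vu_t))^2$ from only a fourth-moment hypothesis, and matching it against the per-block requirement $n_j\gtrsim m_4\big(\log((L+1)/\delta)+\gamma_2 p^2L\big)$. Once each block gives $\lambda_{\min}(\vM_j)\geq n_j/4$ with failure probability at most $\delta/(L+1)$, a union bound over the $L+1$ blocks combined with $\sum_j\lambda_{\min}(\vM_j)\geq\sum_j n_j/4=(T-L)/4$ yields the claim under the sample-complexity condition~\eqref{eqn:trajectory_size_main}.
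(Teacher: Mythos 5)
Your overall architecture coincides with the paper's: compute $\E[\vUtil^\T\vUtil]=(T-L)\Iden_{p^2L}$ using independence of $\vu_t$ from $\ubb_{t-1}$ and isotropy, block the time indices by residue modulo $L+1$ so that within a block the windows $\{t-L,\dots,t\}$ are disjoint and the rank-one terms independent, concentrate each block, and pay a union bound over a net of $\Sc^{p^2L-1}$. Your one structural deviation --- proving $\lambda_{\min}(\vM_j)\ge n_j/4$ separately for each block and summing via superadditivity of $\lambda_{\min}$, instead of the paper's order (fix a direction $\vv$ first, split the scalar sum $\|\vUtil\vv\|_{\ell_2}^2$ into block deviations $S_i$, union bound over the $L+1$ blocks, and only then run a single net argument over directions) --- is an equivalent reshuffling of the same union bounds: both routes incur the factor $(L+1)\cdot|\Ncal_\epsilon|$ and both reduce to the per-block requirement $n_j\gtrsim\gamma_1\big(\log((L+1)/\delta)+\gamma_2p^2L\big)$ with $n_j=(T-L)/(L+1)$, which is exactly \eqref{eqn:trajectory_size_main}. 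Under Assumption~\ref{assump:input}($\va$) your plan is sound; indeed your "Bernstein-type" suggestion, with variance proxy $\E[(\vv^\T(\ubb_{t-1}\otimes\vu_t))^4]\le L\beta^4\,\E[(\vv^\T(\ubb_{t-1}\otimes\vu_t))^2]=L\beta^4$, directly delivers the rate $\gamma_1=\beta^4L$, whereas the paper invokes plain Hoeffding with range $L\beta^4$.

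The gap is in case ($\vb$), in two places. First, the per-direction tool you name --- Chebyshev or Paley--Zygmund --- yields only polynomial tails in the deviation, which cannot survive a union bound over a net of cardinality $(1+16p^2L/\delta)^{p^2L}$; what is needed, and what the paper uses, is the one-sided Bernstein inequality for \emph{non-negative} random variables, which converts the bare fourth-moment hypothesis into an exponential lower tail: with $X_t=(\vv^\T(\ubb_{t-1}\otimes\vu_t))^2\ge0$, $\E[X_t]=1$, $\E[X_t^2]\le m_4$, one gets $\P\big(\sum_{t}(X_t-1)\le -Nz\big)\le\exp(-Nz^2/(2m_4))$ for a block of $N$ independent terms. (A truncation-plus-Bernstein variant, which you hint at parenthetically, could be made to work, but plain Chebyshev per direction fails against the net.) Second, your "standard net-to-infimum step" is not standard in the one-sided regime: transferring a lower bound from net points to the sphere uses $|\vv^\T\vM\vv-\vv_i^\T\vM\vv_i|\le2\epsilon\|\vM\|$, and with only lower-tail control you have no high-probability bound on $\|\vM_j\|$ or $\|\vUtil^\T\vUtil\|$. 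The paper supplies this via Markov's inequality applied to the trace, $\E[\|\vUtil^\T\vUtil\|]\le\E[\tr(\vUtil^\T\vUtil)]=p^2L(T-L)$, so that $\|\vUtil^\T\vUtil\|\le2p^2L(T-L)/\delta$ with probability $1-\delta/2$, which in turn forces the fine net resolution $\epsilon=\delta/(8p^2L)$; it is this Markov/trace step, not the union bound per se, that produces $\gamma_2=\log(1+16p^2L/\delta)$. Without it (or a substitute operator-norm control), your case-($\vb$) plan as written does not close.
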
 
The proof of Theorem~\ref{thrm:persistence_of_excitation} is given in Section~\ref{sec:proof_persistence_of_excitation}. Our proof relies on a blocking technique to deal with dependent inputs. This is followed by an application of Hoeffding's inequality or one-sided Bernstein's inequality under Assumption~\ref{assump:input}($\va$) or Assumption~\ref{assump:input}($\vb$) respectively. We then merge the concentration bounds for each block efficiently to get an optimal dependence on the number of samples. Finally, we use covering arguments to get a lower bound on the minimum eigenvalue.

It is worth mentioning that under Assumption~\ref{assump:input}($\vb$), despite only requiring a bounded fourth moment condition, our sample complexity condition  \eqref{eqn:trajectory_size_main} exhibits the same dependence on $\delta$ (of order $\log(1/\delta)$), as one would obtain under the stronger requirement of having inputs with subgaussian tails as is the case in Assumption~\ref{assump:input}($\va$). This phenomenon was first observed concurrently by Koltchinskii and Mendelson  \cite{koltchinskii2015bounding} and Oliveira \cite{oliveira2016lower} using different proof techniques but only for random matrices with independent rows. Notably, \cite{koltchinskii2015bounding} introduced the so-called small ball condition for bounding the smallest singular value of the random matrix, and \cite{oliveira2016lower} showed that only a fourth moment condition is sufficient to obtain similar bounds using a PAC-Bayes approach. In contrast, the proof method we use follows an elementary approach based only on a one-sided Bernstein concentration bound and a net argument and can be adapted, as we showcase, to random matrices with dependent rows. As such, we believe that our result and proof technique may be of independent interest.       

With this, we are ready to state our next result which gives a sample complexity guarantee for the estimation of $\vG$.

\begin{theorem}[Learning Markov Parameters~(Sample Complexity)] \label{thrm:learning_markov_par_data_IND} 
    Consider the same setup of Theorem~\ref{thrm:learning_markov_par_main}.  
    Moreover, suppose Assumption~\ref{assump:input}($\va$) also holds, and the trajectory length satisfies \eqref{eqn:trajectory_size_main} with $\gamma_1 = \beta^4L$ and $\gamma_2 = 1$. Suppose,
 \begin{align}
      L & \gtrsim \frac{1}{\log(\rho^{-1})} \left(\log (T-L) + \log \left( \frac{\delta\beta^2 K^2\phi(\vA,\rho)}{p^2L(\sigma_z^2 + \sigma_\vw^2 \beta^2 L + \sigma_\ve^2 \beta^2)(1-\rho)}\right)  \right).
 \end{align}
 Then, with probability at least $1-\delta$, we have
 \begin{align}
     \tf{\vGhat - \vG} \lesssim \mysqrt{\frac{p^2 L (\sigma_z^2 + \sigma_\vw^2 \beta^2L + \sigma_\ve^2 \beta^2)}{\delta(T-L)}}
 \end{align}
\end{theorem}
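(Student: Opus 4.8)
The plan is to derive the data-independent Frobenius-norm bound by pairing the data-dependent ellipsoidal bound of Theorem~\ref{thrm:learning_markov_par_main} with the persistence-of-excitation guarantee of Theorem~\ref{thrm:persistence_of_excitation}, and then choosing $L$ large enough (logarithmically in $T-L$) to suppress the exponentially decaying bias term. Throughout I would work on the intersection of the two favorable events supplied by these theorems.

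First I would record the elementary conversion from the ellipsoidal norm to the Frobenius norm. Since $\tf{\vGhat - \vG} = \tn{\vek(\vGhat) - \vek(\vG)}$ and, for any positive-definite $\vVtil := \vUtil^\T \vUtil$, one has $\norm{\vek(\vGhat) - \vek(\vG)}_{\vVtil}^2 \geq \lambda_{\min}(\vVtil)\,\tn{\vek(\vGhat) - \vek(\vG)}^2$, it follows that
\[
    \tf{\vGhat - \vG} \;\leq\; \frac{\norm{\vek(\vGhat) - \vek(\vG)}_{\vVtil}}{\sqrt{\lambda_{\min}(\vVtil)}},
\]
which is precisely the Frobenius-norm remark stated after Theorem~\ref{thrm:learning_markov_par_main}. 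Applying Theorem~\ref{thrm:learning_markov_par_main} and Theorem~\ref{thrm:persistence_of_excitation} each at confidence level $\delta/2$ (which, given \eqref{eqn:trajectory_size_main}, only affects absolute constants and the logarithmic $\delta$-dependence), a union bound ensures that with probability at least $1-\delta$ both the bound \eqref{eq:Markov_upper_bd} on the numerator and $\lambda_{\min}(\vVtil) \geq (T-L)/4$ hold simultaneously. Substituting both, the factor $\sqrt{T-L}$ appearing in the bias term of \eqref{eq:Markov_upper_bd} cancels the $1/\sqrt{\lambda_{\min}(\vVtil)}$ factor, leaving
\[
    \tf{\vGhat - \vG} \;\lesssim\; \sqrt{\frac{p^2 L(\sigma_z^2 + \sigma_\vw^2 \beta^2 L + \sigma_\ve^2 \beta^2)}{\delta (T-L)}} \;+\; \beta^2 K^2 \frac{\phi(\vA,\rho)\rho^L}{1-\rho}.
\]

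It remains to show that the residual bias term is dominated by the first (variance) term, which I denote $B$; this is the step that demands the most care. Exponentiating the prescribed lower bound on $L$ yields $\rho^L \lesssim p^2 L(\sigma_z^2 + \sigma_\vw^2 \beta^2 L + \sigma_\ve^2 \beta^2)(1-\rho)/\big((T-L)\,\delta\,\beta^2 K^2 \phi(\vA,\rho)\big)$, and inserting this into the bias term shows it is at most of order $B^2$. Since the trajectory-length requirement \eqref{eqn:trajectory_size_main} forces $T-L$ to be large enough that $B \leq 1$ in the operating regime, we have $B^2 \leq B$, so the bias is absorbed into $B$ and the claimed $\tilde{\Ocal}(1/\sqrt{T-L})$ rate follows.

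I expect the main obstacle to be precisely this reconciliation of the two requirements on $L$ and $T-L$: the prescribed choice of $L$ is a logarithmic function of $T-L$, whereas \eqref{eqn:trajectory_size_main} lower-bounds $T-L$ by a polynomial in $L$. I would verify that these are mutually consistent---which they are, since $L = \Theta(\log(T-L)/\log(\rho^{-1}))$ makes the right-hand side of \eqref{eqn:trajectory_size_main} only polylogarithmic in $T$---and confirm that in this regime the variance term $B$ is indeed below $1$, so that the $B^2 \leq B$ absorption is legitimate and the exponentially decaying bias genuinely drops out of the final bound.
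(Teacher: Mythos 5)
Your strategy is the paper's own, almost line for line: intersect the Frobenius-norm consequence of Theorem~\ref{thrm:learning_markov_par_main} (namely $\tf{\vGhat-\vG}\leq \norm{\vek(\vGhat)-\vek(\vG)}_{\vVtil}/\sqrt{\lambda_{\min}(\vVtil)}$, which the paper records as~\eqref{eqn:est_error_ell2_v1}) with the persistence-of-excitation event $\lambda_{\min}(\vVtil)\geq (T-L)/4$ from Theorem~\ref{thrm:persistence_of_excitation}, each at level $\delta/2$, cancel the $\sqrt{T-L}$ in the bias term of~\eqref{eq:Markov_upper_bd} against $\sqrt{\lambda_{\min}(\vVtil)}$, and then choose $L$ logarithmically in $T-L$ to suppress the residual bias $2\beta^2K^2\phi(\vA,\rho)\rho^L/(1-\rho)$. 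Up to that last absorption step, this is exactly the argument in Section~\ref{sec:proof_learning_markov_par_data_IND}.

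The gap is in the absorption step. Writing $B:=\sqrt{p^2L\,\Xi/(\delta(T-L))}$ with $\Xi=\sigma_z^2+\sigma_\vw^2\beta^2L+\sigma_\ve^2\beta^2$, you correctly deduce from the stated condition on $L$ that the bias is $\lesssim B^2$, but your conclusion then rests on $B\leq 1$, which you claim follows from~\eqref{eqn:trajectory_size_main}. It does not: that condition reads $T-L\gtrsim \beta^4L(L+1)\big(\log(2(L+1)/\delta)+p^2L\big)$, which involves no noise variances and only $\log(1/\delta)$, whereas $B^2$ scales linearly in the noise level $\Xi$ and in $1/\delta$. For large $\sigma_z^2$ or $\norm{\vSigma_w}\tf{\vF}^2$, or small $\delta$, one can have $B\gg 1$ while~\eqref{eqn:trajectory_size_main} holds, and then your chain ``bias $\lesssim B^2\leq B$'' breaks, yielding only $\tf{\vGhat-\vG}\lesssim B^2$. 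The paper avoids this detour entirely: instead of deducing a bound of $B^2$ on the bias, it imposes the comparison ``bias $\leq B$'' directly, i.e. $2\beta^2\norm{\vB}\norm{\vC}\phi(\vA,\rho)\rho^L/(1-\rho)\leq \sqrt{p^2L\,\Xi/(\delta(T-L))}$, which after taking logarithms gives the sufficient condition
\begin{align*}
L \;\geq\; \frac{1}{2\log(\rho^{-1})}\log\left(\frac{(T-L)\,\delta}{p^2L\,\Xi}\right) + \frac{1}{\log(\rho^{-1})}\log\left(\frac{2\beta^2 K^2\phi(\vA,\rho)}{1-\rho}\right),
\end{align*}
with no auxiliary requirement on $B$ (note the factor $\tfrac12$ on the first logarithm, in contrast to the theorem's displayed condition). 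The displayed condition in the statement implies this one precisely when $(T-L)\delta \gtrsim p^2L\,\Xi$, i.e. when $B\lesssim 1$ --- so your reading and the paper's coincide only in that regime. The fix is simple: either run the paper's direct comparison to derive the $L$-condition yourself, or state $B\leq 1$ explicitly as a standing hypothesis rather than attributing it to~\eqref{eqn:trajectory_size_main}.
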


The proof of Theorem~\ref{thrm:learning_markov_par_data_IND} is presented in Section~\ref{sec:proof_learning_markov_par_data_IND}. Theorem~\ref{thrm:learning_markov_par_data_IND} suggests that one should choose $L = \tilde{\Ocal}\left( {\log(T-L)}/{\log(\rho^{-1})}\right)$. This indicates that for more stable systems~(smaller $\rho(\vA)$), one should choose smaller $L$ and vice versa. This is further verified by our numerical experiments in Section~\ref{sec:experiments}. The choice results in an optimal dependence of $\tilde{\Ocal}(1/\sqrt{T-L})$ on the trajectory length. 
Moreover, our error bound depends as expected on various noise strengths $\sigma_z^2, \sigma_\vw^2$ and $\sigma_\ve^2$. 
Note that $\sigma_\vw$ decays with $\norm{\vSigma_w}$ and captures the effect of process noise in our estimation error, whereas $\sigma_\ve$ decays with $\norm{\vGamma_w^\infty}\norm{\vC \vA^L}^2 \leq \norm{\vGamma_w^\infty}\norm{\vC}^2 \phi(\vA,\rho)^2 \rho^{2L}$ 
and captures the effect of error due to unknown state at time $t-L$.
The dependence on the failure probability is $1/\delta$ instead of $\log(1/\delta)$ dependence due to heavy-tailed nature of our problem. 
The dependence on $p^2L$ is also expected since there are $p^2 L$ unknown parameters in $\vG$.
However, the error bound also has an extra$ \beta^2L = \tilde{\Ocal}(pL)$ factor.
This additional dependence is a result of using heavy-tailed tools, and can be seen in other heavy-tailed analysis as well~\cite{mania2022time}. 

Additional factors of dimension appear in the sample complexity~\eqref{eqn:trajectory_size_main} as well.
Ignoring logarithmic factors, our sample complexity bound grows as $T-L \gtrsim ((L+1)L \beta^2)p^2L$. Ideally, one hopes for a $T-L \gtrsim p^2L$ sample complexity bound.
The additional multiplication by $(L+1)$ comes from our blocking technique. In the case of standard LTI system, this can be avoided by exploiting either (partial) circulant~\cite{oymak2021revisiting,krahmer2014suprema} or Toeplitz~\cite{sarkar2021finite,djehiche2019finite} structure in the design matrix. 
However, in the bilinear observation case, the design matrix~(see $\vUtil$ in \eqref{eqn:Util_and_y}) does not follow such a structure\footnote{Though it can be constructed from a row-wise Kronecker product of two dependent matrices of inputs, one of which has a partial circulant structure.}; removing the $(L+1)$ factor in the sample complexity bound is an interesting open problem. 
Moreover, the additional $L\beta^4$ multiplication is coming from the fact that we are using Hoeffding's inequality along-with Assumption \ref{assump:input}($\va$)~(which implies $\tn{\ubb_{t-1} \otimes \vu_{t}}^2 \leq L \beta^4$) to guarantee persistence of excitation. 
Our result in Theorem~\ref{thrm:persistence_of_excitation} suggests that we can get rid of $L\beta^4$ multiplication in our sample complexity bound by replacing Assumption~\ref{assump:input}($\va$) with Assumption~\ref{assump:input}($\vb$) and using one-sided Bernstein's inequality for non-negative random variables.

\section{Learning State-Space Parameters}\label{sec:statespace}
In this section, we combine Theorems~\ref{thrm:learning_markov_par_main} and \ref{thrm:learning_markov_par_data_IND} with the stability results for the Ho-Kalman algorithm in \cite{oymak2021revisiting} to upper bound the learning error for system matrices. In particular, we will upper bound the estimation error of $\vA, \vB$, and $\vC$ in terms of the estimation error $\tf{\vGhat - \vG}$. For this purpose, we make the following standard assumption on the system matrices.
\begin{assumption}(Controllability/Observability)\label{assump:Ho_Kalman}
	The pair $(\vA, \vB)$ is controllable; the pair $(\vA, \vC)$ is observable.     
\end{assumption}
This assumption implies that the (extended) controllability and observability matrices, defined respectively as
\begin{align}
	\vQ := 
	\begin{bmatrix}
		\vB & \vA\vB &\hdots &\vA^{L/2}\vB    
	\end{bmatrix}, \quad
	\vO := 
	\begin{bmatrix}
		\vC^\T & (\vC \vA)^\T &\hdots & (\vC \vA^{L/2 - 1})^\T    
	\end{bmatrix}^\T
\end{align}
have rank-$n$, when $L \geq 2n$. With these definitions, we define a clipped Hankel matrix as follows,
\begin{align}
	\vH := \vO \vQ \in \R^{pL/2 \times p(L/2+1)}.  
	\label{eqn:clipped_hankel}
\end{align}
Note that $\vH$ can be constructed from the Markov parameters in $\vG$. Similarly, we can also construct $\hat \vH$ from the estimated Markov parameters in $\vGhat$. Then, running the Ho-Kalman Algorithm~\cite{oymak2021revisiting} with input $\hat\vH$ gives us the estimates of $\vA, \vB, \vC$ up to a similarity transformation. Specifically, we get the following estimation guarantees.
\begin{theorem}[Balanced Realization~\cite{oymak2021revisiting}]\label{thrm:Ho-Kalman}
	Consider the clipped Hankel matrix $\vH$, defined in~\eqref{eqn:clipped_hankel}, and let $\hat \vH$ be its noisy estimate, constructed from $\vGhat$. Let $\vAbar, \vBbar, \vCbar$ be the state-space realization obtained from running Ho-Kalman Algorithm with Hankel matrix $\vH$. Similarly, let $\vAhat, \vBhat, \vChat$ be the state-space realization obtained from running Ho-Kalman Algorithm with Hankel matrix $\hat \vH$. Let $\vH^-$ be the matrix obtained by deleting the last $p$ columns of $\vH$, and let $\vL$ be the rank-$n$ approximations of $\vH^-$.
	Suppose Assumption~\ref{assump:Ho_Kalman} holds, $\sigma_{\min}(\vL)>0$, and the estimation error of $\vG$ satisfies $\tf{\vG - \vGhat} \leq {\sigma_{\min}(\vL)}/(2\sqrt{2L})$. Then, there exists a unitary matrix $\vT \in \R^{n \times n}$ such that,
	\begin{align}
			\max \{\tf{\vBbar - \vT^*\vBhat},\tf{\vCbar - \vChat \vT}\} &\leq \sqrt{\frac{L}{\sigma_{\min}(\vL)}} \tf{\vG - \vGhat}. \label{eqn:BC_estimation} \\
			\tf{\vAbar - \vT^* \vAhat \vT} &\leq  \sqrt{L}\big(\frac{\norm{\vH}+\norm{\vHhat }}{\sigma_{\min}(\vL)^2} + \frac{1}{\sigma_{\min}(\vL)}\big) \tf{\vG - \vGhat}. \label{eqn:A_estimation}
	\end{align}
\end{theorem}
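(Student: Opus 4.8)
The plan is to treat this as a matrix perturbation analysis of the Ho-Kalman realization procedure, following the approach of \cite{oymak2021revisiting}. The argument decomposes into three stages: (i) converting the Markov-parameter error $\tf{\vG - \vGhat}$ into a perturbation of the Hankel matrix, (ii) a subspace/SVD perturbation bound controlling how the balanced factors $\vO,\vQ$ move under this perturbation, and (iii) propagating these factor errors into errors for $\vB,\vC$ (which are submatrices of the factors) and for $\vA$ (recovered through a pseudo-inverse least-squares solve). The hypothesis $\tf{\vG - \vGhat} \leq \sigma_{\min}(\vL)/(2\sqrt{2L})$ is precisely the gap condition needed to make the truncated SVD step stable.

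First I would bound $\tf{\vH - \vHhat}$ in terms of $\tf{\vG - \vGhat}$. Since every Markov parameter $\vC\vA^i\vB$ sits on an anti-diagonal of $\vH$ and is repeated at most $L/2$ times, counting multiplicities gives $\tf{\vH - \vHhat} \leq \sqrt{L/2}\,\tf{\vG - \vGhat}$, which is the source of the $\sqrt{L}$ prefactors in the conclusion; the same estimate holds for the deleted submatrix $\vH^-$ and the shifted matrix $\vH^+$ (deleting the first block column). Second, since $\vL$ is the rank-$n$ truncation of $\vH^-$ and the balanced factors obey $\vO\vQ = \vL$ with $\sigma_{\min}(\vO) = \sigma_{\min}(\vQ) = \sqrt{\sigma_{\min}(\vL)}$, I would invoke a perturbation lemma for balanced factorizations: provided $\tf{\vH^- - \vHhat^-}$ is a small constant fraction of $\sigma_{\min}(\vL)$, there is a unitary $\vT$ aligning the two factorizations with $\max\{\tf{\vO - \vOhat\vT},\tf{\vQ - \vT^*\vQhat}\} \lesssim \tf{\vH^- - \vHhat^-}/\sqrt{\sigma_{\min}(\vL)}$. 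This is a Wedin-type singular-subspace argument, with the gap condition ensuring the rank-$n$ subspace is well-separated from the discarded singular values.

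Third, I would read off the individual bounds. Because $\vCbar$ is the first block row of $\vO$ and $\vBbar$ the first block column of $\vQ$, the estimate \eqref{eqn:BC_estimation} follows immediately by combining the $\sqrt{L}$ of step one with the $1/\sqrt{\sigma_{\min}(\vL)}$ of step two. For $\vA$, recovered via $\vAbar = \vO^\dagger \vH^+ \vQ^\dagger$, I would expand $\vAbar - \vT^*\vAhat\vT$ into terms carrying the perturbations of $\vO^\dagger$, of $\vH^+$, and of $\vQ^\dagger$; each pseudo-inverse contributes a factor $1/\sqrt{\sigma_{\min}(\vL)}$ and its perturbation contributes a further $1/\sqrt{\sigma_{\min}(\vL)}$, yielding the $1/\sigma_{\min}(\vL)^2$ term, while the size of $\vH^+$ enters through the $\norm{\vH} + \norm{\vHhat}$ factor, giving \eqref{eqn:A_estimation}.

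The hard part will be step two: the balanced-factor perturbation with unitary alignment. The SVD factors are only determined up to a rotation within the rank-$n$ subspace, so one must produce a single unitary $\vT$ that simultaneously controls both $\vO - \vOhat\vT$ and $\vQ - \vT^*\vQhat$, which requires tying the subspace rotation to the singular-value gap rather than treating the factors independently. The pseudo-inverse perturbation in the $\vA$ step is the secondary difficulty, since bounding $\tf{\vO^\dagger - (\vOhat\vT)^\dagger}$ relies on the gap condition to rule out any rank collapse and invokes the standard perturbation identity for pseudo-inverses of matrices with controlled smallest singular value.
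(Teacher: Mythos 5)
Your proposal is correct and follows essentially the same route as the paper's proof: the $\sqrt{L/2}$ submatrix-counting bound on $\tf{\vH - \vHhat}$ (and its corollaries for $\vH^-$, $\vH^+$, and $\vL - \vLhat$), the balanced-factorization perturbation with a single aligning unitary $\vT$ — which the paper obtains from Lemma 5.14 of \cite{tu2015low} under the robustness condition $\sigma_{\min}(\vL) \geq 2\norm{\vL - \vLhat}$, exactly the step you flag as the crux — the submatrix read-off for $\vB, \vC$, and the three-term pseudo-inverse expansion of $\vAbar - \vT^*\vAhat\vT = \vObar^\dagger\vH^+\vQbar^\dagger - (\vOhat\vT)^\dagger \vHhat^+ (\vT^*\vQhat)^\dagger$ yielding the $1/\sigma_{\min}(\vL)^2$ and $\norm{\vH}+\norm{\vHhat}$ factors. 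The only cosmetic difference is that the paper routes the factor-perturbation bound through $\tf{\vL - \vLhat} \leq 2\tf{\vH^- - \vHhat^-}$ via the best rank-$n$ approximation triangle inequality, whereas you state it directly in terms of $\tf{\vH^- - \vHhat^-}$; these coincide up to constants.
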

The proof of Theorem~\ref{thrm:Ho-Kalman} is presented in Section~\ref{sec:proof_Ho_Kalman}, and is similar to the proof of~\cite{oymak2021revisiting}, except that it is slightly modified to get the error bounds in terms of $\tf{\vG - \vGhat}$ instead of $\norm{\vG - \vGhat}$. Note that, for a stable system and for reasonably large $L$, $\sigma_{\min}(\vL) \approx \sigma_{\min}(\vH)$, and corresponds to the most ``hard to identify'' mode of the system~\cite{oymak2021revisiting}. Also note that $\norm{\vHhat} \leq \norm{\vH}$ whenever $\norm{\vH  -\vHhat} \leq 1$. From the proof of Theorem~\ref{thrm:Ho-Kalman}, we have $\norm{\vH  -\vHhat} \leq \sqrt{L/2}\tf{\vG  -\vGhat} \lesssim \mysqrt{\frac{p^2 L^2 (\sigma_z^2 + \sigma_\vw^2 \beta^2L + \sigma_\ve^2 \beta^2)}{\delta(T-L)}}$ under the conditions of Theorems~\ref{thrm:learning_markov_par_data_IND} and \ref{thrm:Ho-Kalman} with high probability. Hence, choosing $T-L \gtrsim T_0 := p^2 L^2 (\sigma_z^2 + \sigma_\vw^2 \beta^2L + \sigma_\ve^2 \beta^2)/\delta$, we can replace $\norm{\vHhat}$ by $\norm{\vH}$ in \eqref{eqn:A_estimation}. Similarly, we can also make sure that the condition $\tf{\vG - \vGhat} \leq {\sigma_{\min}(\vL)}/(2\sqrt{2L})$ is satisfied by choosing $T-L \gtrsim T_0/\sigma_{\min}(\vL)^2$. Hence, combining Theorems~\ref{thrm:learning_markov_par_data_IND} and \ref{thrm:Ho-Kalman} along-with the above trajectory length requirements, will yield end-to-end learning guarantees on the state-space matrices. Finally, note that unlike~\cite{oymak2021revisiting,tsiamis2019finite,lee2022improved}, our error bounds in Theorem~\ref{thrm:Ho-Kalman} does not depend on the dimension of the hidden state $n$. This is because we are using $\tf{\vG - \vGhat}$~(derived in Theorem~\ref{thrm:learning_markov_par_data_IND}) instead of $\norm{\vG - \vGhat}$ in the proof of Theorem~\ref{thrm:Ho-Kalman}. Hence, our error bounds are useful even in the case of high dimensional hidden states.

 \section{Numerical Experiments} \label{sec:experiments}
For our experiments, we consider a bilinear-observation system with $n = 5$ hidden states and input dimension $p = 3$. Similar to~\cite{oymak2021revisiting}, the state-space matrices are generated as follows: The dynamics matrix $\vA$ is constructed as a diagonal matrix with its $n$ eigenvalues chosen from a uniform distribution between $[0, \rho]$, where $\rho < 1$, and we experiment with different values of $\rho$ to understand its relationship with the estimation error as well as the number of Markov parameters estimated. $\vB, \vC$ are generated with i.i.d. $\Ncal(0, 1/n)$ and $\Ncal(0,1/p)$ entries, respectively. The noise processes $\{\vw_t\}_{t=0}^T$, and $\{z_t\}_{t=0}^T$ are chosen according an exponential distribution, that is, each entries of $\vw_t$ and $z_t$ are independently generated according to the probability distribution function $f(x;\lambda) = \lambda e^{-\lambda x}$ for $x \geq 0$, and $0$ elsewhere. Lastly, we chose the control inputs to be $\{\vu_t\}_{t=0}^T \distas \Ncal(0,\Iden_p)$.

In Figure~\ref{figure1}, we plot the estimation error $\tf{\vG - \vGhat}^2$ over different values of $\rho, L$ and $T$. Each experiment is repeated $20$ times and we plot the mean and one standard deviation. Figure \ref{fig1a} shows that, when the system has a shorter memory (i.e., $\rho(\vA)$ is close to $0$) choosing smaller $L$, as long as $L \geq 2n$, results in smaller estimation error. Contrary to this, if the system has a longer memory (i.e., $\rho(\vA)$ is close to $1$), then choosing larger values of $L$ gives better estimation~(see Figure~\ref{fig1b}). This implies a trade-off between the memory of the system captured by $\rho(\vA)$ and the number of Markov parameters estimated $L$. In Figure~\ref{fig1c}, we show this trade-off more clearly by plotting the estimation error over varying $\rho(\vA)$ and $L$ while fixing $T = 1600$.

We observe double descent curves~\cite{nakkiran2020optimal} in Figures~\ref{fig1a} and \ref{fig1b}. This is because our regression problem is unregularized and has $p^2 L = 9L$ unknown parameters, and the number of covariates is $T-L$. Hence, for $L=50$, we see the peak at $T=500$, and the error decays smoothly after this point. Note that the peak occurs at $T = L + p^2L = 500$~(where the number of unknown parameters become equal to the number of covariates). Similarly, for $L=30$, we see the peak at $T = L+ p^2L = 300$. For $L = \{6,12,18\}$, we do not see the double descent because in Figures~\ref{fig1a} and \ref{fig1b}, we start at $T=100$ and repeat our experiments after every $50$ samples.

\begin{figure}[t!]
    \begin{centering}
        \begin{subfigure}[t]{2.0in}
            \includegraphics[width=\linewidth]{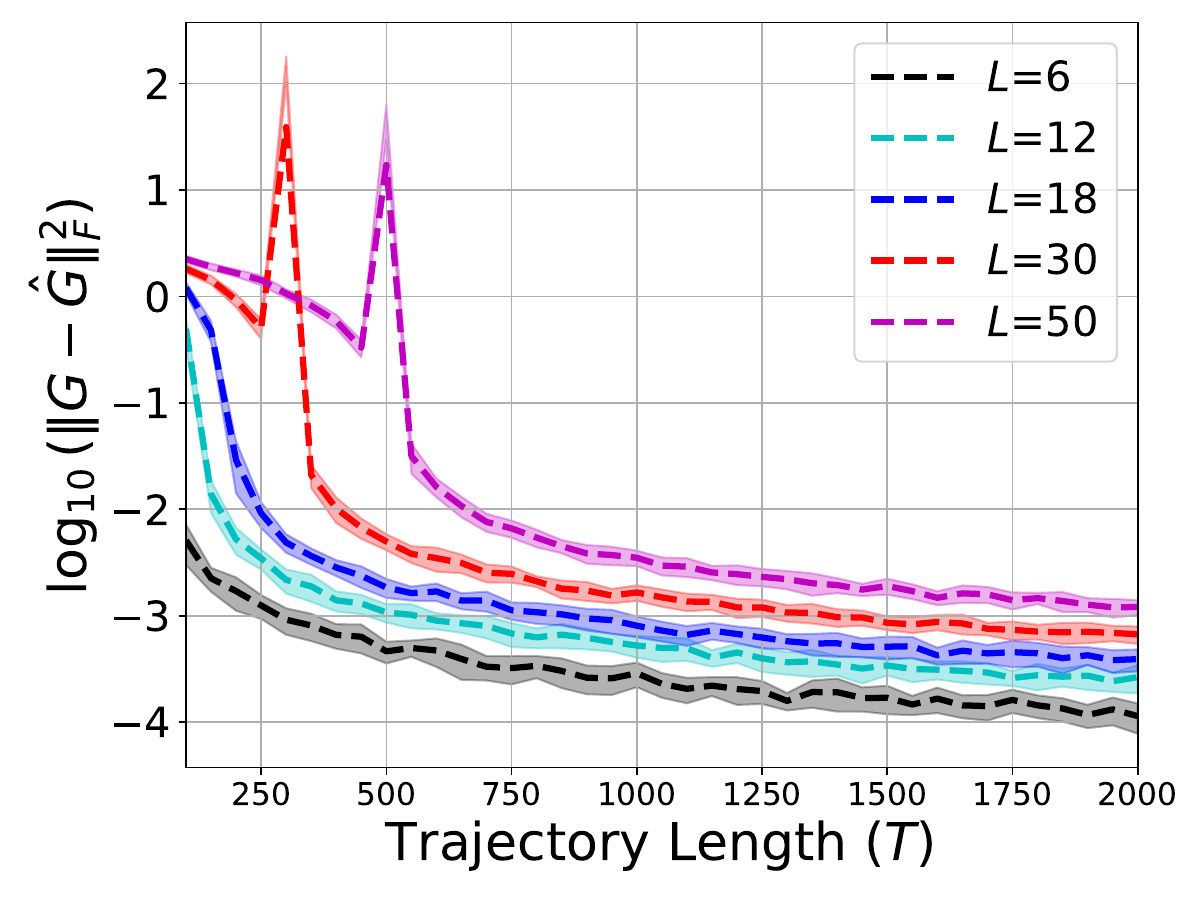}\vspace{-6pt}
            \caption{$\rho(\vA) \leq 0.5$}\label{fig1a}
        \end{subfigure}
    \end{centering}
    ~
    \begin{centering}
        \begin{subfigure}[t]{2.0in}
            \includegraphics[width=\linewidth]{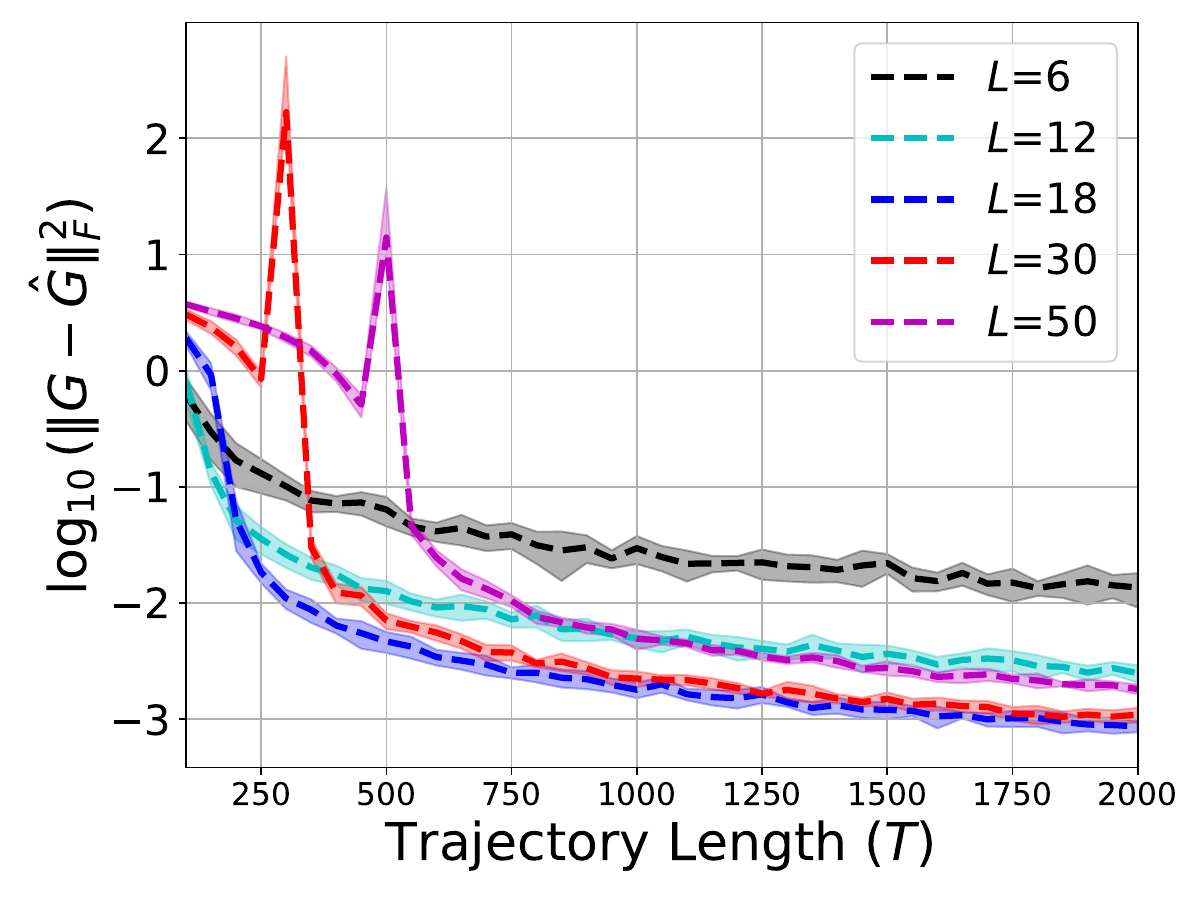}\vspace{-6pt}
            \caption{$\rho(\vA) \leq 0.99$}\label{fig1b}
        \end{subfigure}
    \end{centering}
    ~
    \begin{centering}
        \begin{subfigure}[t]{2.0in}
            \includegraphics[width=\linewidth]{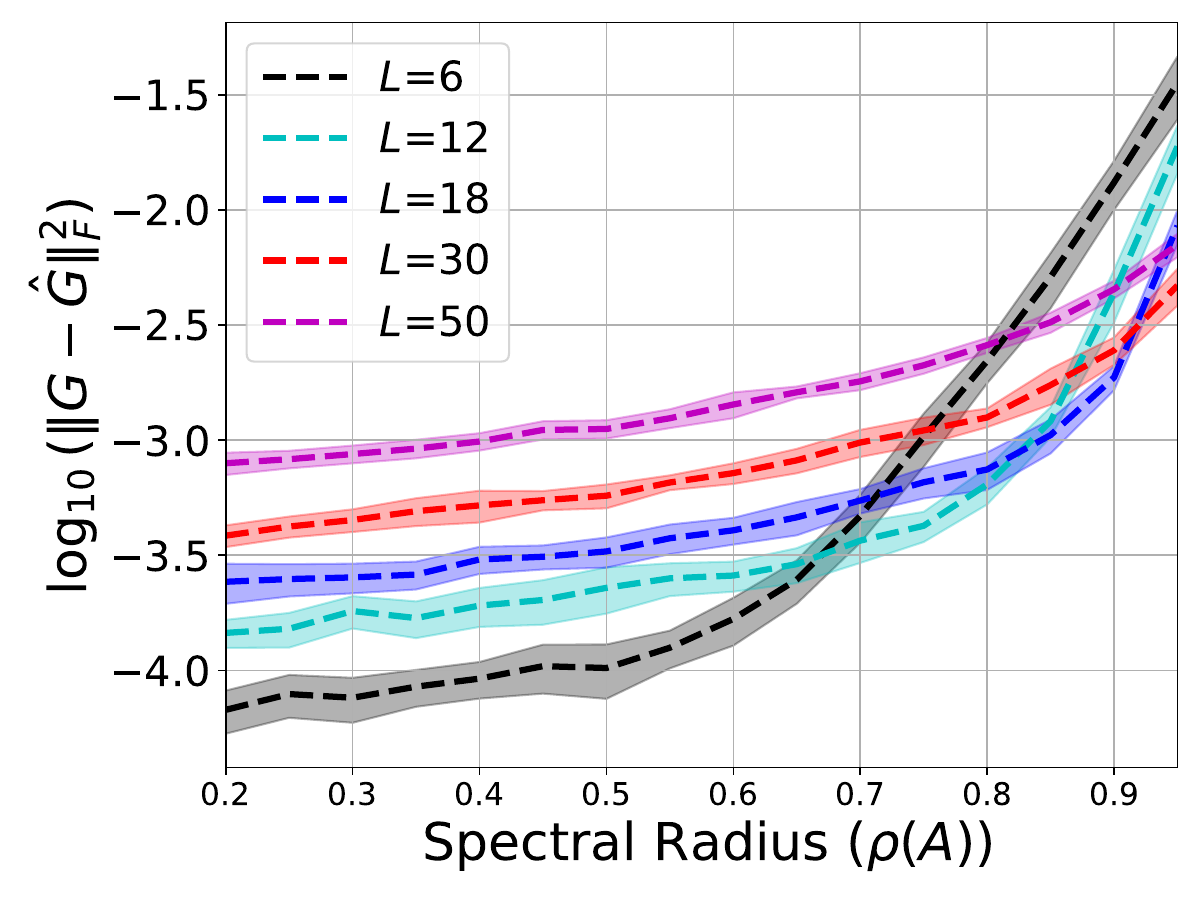}\vspace{-6pt}
            \caption{varying $\rho(\vA)$}\label{fig1c}
        \end{subfigure}
    \end{centering}
    \caption{ We plot the estimation error $\tf{\vG - \vGhat}^2$ over different values of $T, L, \rho(\vA)$ while fixing $n=5$ and $p=3$. Our plots show a trade-off between the memory of the system captured by $\rho(\vA)$ and the number of Markov parameters estimated $L$.
    }
    \label{figure1}
\end{figure}

\section{Proofs of Main Results}\label{sec:proof_main_results}
We now present the proofs of our main results on Markov parameters estimation \& prediction, persistence of excitation, and state space recovery.

\subsection{Learning Markov Parameters~(Data-Dependent Bounds)}
Before we begin the proof, we state a couple of supporting results which are used in the proof of Theorem~\ref{thrm:learning_markov_par_main}. 
Recall the input-output relation~\eqref{eqn:yt_Gutil_Fwtil} used to represent the output of the system~\eqref{eqn:bilinear sys} in terms of its Markov parameters. Letting $\zeta_t :=  \wbb_{t-1}^\T \otimes \vu_t^\T \vek(\vF)  + \vu_t^\T\ve_t + z_t$,  the input-output relation~\eqref{eqn:yt_Gutil_Fwtil} can be alternately expressed as follows: $y_t = \ubb_{t-1}^\T \otimes \vu_t^\T \vek(\vG)  + \zeta_t$. 
The noise process $\zeta_t$ determines the estimation error of least squares problem defined in~\eqref{eqn:ERM_Ghat}, which is given by 
 $\vek(\hat{\vG}) - \vek(\vG) = (\vUtil^\T \vUtil)^{-1}\vUtil^\T \vzeta$, where
 \begin{equation}
\begin{aligned}
 	\vzeta := \begin{bmatrix} \zeta_{L+1} & \zeta_{L+2} & \cdots & \zeta_{T} \end{bmatrix}^\T.
\end{aligned}
\end{equation}
We observe that $\zeta_t= \wbb_{t-1}^\T \otimes \vu_t^\T \vek(\vF)  + \vu_t^\T\ve_t + z_t$ is heavy-tailed and correlated over time. Even if $\vw_t$ and $z_t$ are individually sub-Gaussian, the overall noise process $\zeta_t$ is heavy-tailed. Hence, we use heavy-tailed tools~\cite{mania2022time} to upper bound the estimation error. On a high level, our proof is based on upper bounding the covariance of the estimation error for any given sequence of inputs. This requires upper bounding the auto-covariance function of the heavy-tailed, non-centered, and highly correlated noise process $\{\zeta_t\}$. Finally, we use multidimensional Chebyshev’s inequality to upper bound the ellipsoidal norm of the estimation error with high probability.

We begin by upper bounding the conditional auto-covariance of the effective noise process $\{\zeta_{\tau+1}\}_{\tau = L}^{T-1}$ in terms of the system parameters.

\begin{lemma}[Auto-covariance of effective noise]\label{lemma:auto_covariance}
    Consider the input-output relation~\eqref{eqn:yt_Gutil_Fwtil}. Suppose Assumption~\ref{assump:data/noise} holds, and let $\zeta_{\tau+1} :=  \wbb_{\tau}^\T \otimes \vu_{\tau+1}^\T \vek(\vF)  + \vu_{\tau+1}^\T\ve_{\tau+1} + z_{\tau+1}$ denote the effective noise. Let $\Rcal_\zeta[\tau,\tau'\bgl \vu_{1:T}] :=\E\big[\zeta_{\tau+1} \zeta_{\tau'+1} \bgl \vu_{1:T}\big] - \E\big[\zeta_{\tau+1}\bgl \vu_{1:T}\big]\E\big[\zeta_{\tau'+1}\bgl \vu_{1:T}\big]$ denote the conditional auto-covariance function of the effective noise process $\{\zeta_{\tau+1}\}_{\tau = L}^{T-1}$. Define
\begin{align}
	\bar\vdelta_w(\tau,\tau') := \begin{bmatrix}\delta(\tau-\tau') & \delta(\tau-\tau'+1) & \hdots & \delta(\tau-\tau'+L-1) \\ \delta(\tau-\tau'-1) & \delta(\tau-\tau') & \hdots & \delta(\tau-\tau'+L-2) \\ \vdots & \vdots & \ddots & \vdots \\ \delta(\tau-\tau'-L+1) & \delta(\tau-\tau'-L+2) & \hdots & \delta(\tau-\tau')\end{bmatrix} \in \R^{L \times L} \label{eqn:delta_bar},
\end{align}
to be a Toeplitz matrix of Kronecker delta functions $\delta(\cdot)$, and let
\begin{align}
    \vdelta_w(t,i) := \begin{bmatrix}\delta(t-i) & \delta (t-i-1) & \hdots & \delta(t-i-L+1) \end{bmatrix} \in \R^{1 \times L}. \label{eqn:delta_array}
\end{align}
Then, the conditional auto-covariance function $\Rcal_\zeta[\tau,\tau'\bgl \vu_{1:T}]$ is given as follows,
    \begin{align}
    \Rcal_\zeta[\tau,\tau'\bgl \vu_{1:T}] 
    &= \vek(\vF)^\T \big(\bar\vdelta_w(\tau,\tau') \otimes \vSigma_w \otimes \vu_{\tau+1}\vu_{\tau'+1}^\T\big)\vek(\vF) + \!\! \sum_{i=0}^{\min\{\tau,\tau'\}-L} \!\!\! \vu_{\tau+1}^\T\vC\vA^{\tau-i} \vSigma_w (\vA^{\tau'-i})^\T\vC^\T\vu_{\tau'+1}  \nn \\
	&+ \big[\big(\vu_{\tau'+1}^\T \vC \vA^L \sum_{i=0}^{\tau'-L}\vA^{\tau'-L-i} \big(\vdelta_w(\tau,i) \otimes \vSigma_w\big)\big) \otimes \vu_{\tau+1}^\T\big] \vek(\vF) + \sigma_z^2 \delta(\tau-\tau') \nn \\
	&+ \big[\big(\vu_{\tau+1}^\T \vC \vA^L \sum_{i=0}^{\tau-L}\vA^{\tau-L-i} \big(\vdelta_w(\tau',i) \otimes \vSigma_w\big)\big) \otimes \vu_{\tau'+1}^\T\big] \vek(\vF). \label{eqn:result_of_lemma_autocov}
    \end{align}
\end{lemma}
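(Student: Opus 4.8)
The plan is to compute the conditional second moment directly, exploiting the fact that, given the inputs $\vu_{1:T}$, the effective noise $\zeta_{\tau+1}$ is an explicit linear functional of the mutually independent, white, centered process and measurement noises supplied by Assumption~\ref{assump:data/noise}. First I would split $\zeta_{\tau+1}$ into its three constituent pieces: the in-window process-noise term $(\wbb_{\tau}^\T \otimes \vu_{\tau+1}^\T)\vek(\vF)$, the state-error term $\vu_{\tau+1}^\T \vC \vA^L \xb_{\tau+1-L}$, and the scalar $z_{\tau+1}$. Unrolling the linear recursion from $\xb_0 = \vnot$ gives $\xb_{\tau+1-L} = \sum_{j=0}^{\tau-L} \vA^{\tau-L-j}(\vB \vu_j + \vw_j)$, so the state-error term separates into a part that is deterministic given $\vu_{1:T}$ (the $\vB\vu_j$ contributions) and a zero-mean, noise-driven part built from $\{\vw_j\}_{j=0}^{\tau-L}$. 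Because the auto-covariance subtracts off the conditional means, the deterministic component cancels upon centering, so $\Rcal_\zeta[\tau,\tau'\,|\,\vu_{1:T}]$ reduces to the conditional covariance of the three zero-mean pieces.

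Second, I would expand the resulting product and discard the cross terms that vanish. Since $\{z_t\}$ is centered and independent of $\{\vw_t\}$, every cross term pairing $z$ with a process-noise piece has zero conditional expectation, whereas $\E[z_{\tau+1} z_{\tau'+1}\,|\,\vu_{1:T}] = \sigma_z^2\,\delta(\tau-\tau')$, producing the $\sigma_z^2\,\delta(\tau-\tau')$ summand. For the pure in-window contribution, the Kronecker mixed-product identity gives $(\wbb_{\tau}^\T\otimes\vu_{\tau+1}^\T)^\T(\wbb_{\tau'}^\T\otimes\vu_{\tau'+1}^\T) = (\wbb_{\tau}\wbb_{\tau'}^\T)\otimes(\vu_{\tau+1}\vu_{\tau'+1}^\T)$; whiteness of the process noise then forces $\E[\wbb_{\tau}\wbb_{\tau'}^\T\,|\,\vu_{1:T}] = \bar{\vdelta}_w(\tau,\tau')\otimes\vSigma_w$ block-by-block, which after sandwiching with $\vek(\vF)$ is exactly the first summand. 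For the pure state-error contribution, whiteness collapses the double sum to its diagonal $j=k$ ranging over $0 \le j \le \min\{\tau,\tau'\}-L$, and absorbing $\vC\vA^L\vA^{\tau-L-j} = \vC\vA^{\tau-j}$ gives the second summand.

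The main obstacle, and the step demanding the most care, is the pair of cross terms coupling the in-window noise $\wbb_{\tau}$ with the state history $\{\vw_j\}_{j=0}^{\tau'-L}$ and its symmetric counterpart. Here I would pull the scalar state-error factor out of the expectation, evaluate the correlation $\E[\vw_k \wbb_{\tau}^\T\,|\,\vu_{1:T}]$ block-by-block, and observe that whiteness selects the single index in the window $\{\tau,\dots,\tau-L+1\}$ that coincides with $k$; this selection is precisely what the delta-array $\vdelta_w(\tau,k)$ records, so that $\E[\vw_k \wbb_{\tau}^\T\,|\,\vu_{1:T}] = \vdelta_w(\tau,k)\otimes\vSigma_w$. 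Re-assembling the sum over $k$ and pairing with $\vu_{\tau+1}^\T$ through another Kronecker factorization yields the asymmetric summand involving $\vu_{\tau'+1}^\T\vC\vA^L\sum_{i}\vA^{\tau'-L-i}(\vdelta_w(\tau,i)\otimes\vSigma_w)$, with the $\tau\leftrightarrow\tau'$ swap giving the last summand. Summing the five surviving contributions produces~\eqref{eqn:result_of_lemma_autocov}. Everything beyond this is routine; the only genuine subtlety is keeping the Kronecker orderings and the index ranges consistent throughout, in particular the $\min\{\tau,\tau'\}-L$ cutoff and the offsets hidden inside $\vdelta_w$ and $\bar{\vdelta}_w$.
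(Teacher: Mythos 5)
Your proposal is correct and follows essentially the same route as the paper's proof: term-by-term evaluation of the conditional second moment using whiteness and mutual independence of $\{\vw_t\}$, $\{z_t\}$ from Assumption~\ref{assump:data/noise}, the Kronecker mixed-product identity yielding the $\bar{\vdelta}_w(\tau,\tau')\otimes\vSigma_w\otimes\vu_{\tau+1}\vu_{\tau'+1}^\T$ block, the diagonal collapse to the $\min\{\tau,\tau'\}-L$ cutoff for the state-error term, and the block-selection identity $\E[\vw_k\wbb_{\tau}^\T \gvn \vu_{1:T}]=\vdelta_w(\tau,k)\otimes\vSigma_w$ for the two asymmetric cross terms. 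The only cosmetic difference is that you center first, so the input-driven double sum never enters, whereas the paper expands the full product in~\eqref{eqn:E_zeta_zeta_prime} and cancels that deterministic term against the product of conditional means at the end --- algebraically the identical computation.
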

The proof of Lemma~\ref{lemma:auto_covariance} is deferred to Section~\ref{sec:proof_of_auto_covariance}. To better understand the result of Lemma~\ref{lemma:auto_covariance}, consider the case when $\tau = \tau'$. In this case, 
Lemma~\ref{lemma:auto_covariance} gives the following upper bound on the conditional auto-covariance $\Rcal_\zeta[\tau,\tau \bgl \vu_{1:T}] =\E\big[\zeta_{\tau+1}^2  \bgl \vu_{1:T}\big] - \E\big[\zeta_{\tau+1}\bgl \vu_{1:T}\big]^2$:
\begin{align}
	\Rcal_\zeta[\tau,\tau \bgl \vu_{1:T}] &= \vek(\vF)^\T \big(\bar\vdelta_w(\tau,\tau) \otimes \vSigma_w \otimes \vu_{\tau+1}\vu_{\tau+1}^\T\big)\vek(\vF) + \sum_{i=0}^{\tau-L}\vu_{\tau+1}^\T\vC\vA^{\tau-i} \vSigma_w (\vA^{\tau-i})^\T\vC^\T\vu_{\tau+1}  \nn \\
    & +  2\big[\big(\vu_{\tau+1}^\T \vC \vA^L \sum_{i=0}^{\tau-L}\vA^{\tau-L-i} \big(\vdelta_w(\tau,i) \otimes \vSigma_w\big)\big) \otimes \vu_{\tau+1}^\T\big] \vek(\vF)+ \sigma_z^2, \nn \\
	&\leqsym{i} \vek(\vF)^\T \big(\Iden_L \otimes \vSigma_w \otimes \vu_{\tau+1}\vu_{\tau+1}^\T\big)\vek(\vF) + \| \sum_{i=0}^{\tau-L}\vC\vA^{\tau-i} \vSigma_w (\vA^{\tau-i})^\T\vC^\T\| \tn{\vu_{\tau+1}}^2  + \sigma_z^2, \nn \\
	&\leqsym{ii}  \| \Iden_L \otimes \vSigma_w \otimes \vu_{\tau+1}\vu_{\tau+1}^\T \| \tf{\vF}^2 + \|\vC \vA^L\vGamma_w^{\infty} (\vA^L)^\T\vC^\T\| \tn{\vu_{\tau+1}}^2  + \sigma_z^2, \nn \\
	&\leqsym{iii}  \big(\|\vSigma_w\|  \tf{\vF}^2 + \|\vC\vA^L\|^2 \|\vGamma_w^{\infty}\|  \big) \beta^2  + \sigma_z^2, \label{eqn:lemma_autocov_simplified_tau_equal_tau_prime}
\end{align}
where we get (i) from the observation that $\bar\vdelta_w(\tau,\tau) =  \Iden_L$, and $ \vdelta_w(\tau,i) = 0$ for all $i \notin [\tau-L+1, \tau]$, (ii) from setting $\vGamma_w^{\infty} := \sum_{i=0}^{\infty} \vA^{i}\vSigma_w(\vA^{i})^\T$, and (iii) from the spectral properties of the Kronecker product. \eqref{eqn:lemma_autocov_simplified_tau_equal_tau_prime} illustrates that the conditional variance of the overall noise $\zeta_t$ is equal to the (scaled) summation of the individual variances.
Similarly, we can upper bound the right hand side~(RHS) of \eqref{eqn:result_of_lemma_autocov}, to obtain an upper bound on $\Rcal_\zeta[\tau,\tau' \bgl \vu_{1:T}]$ when $\tau \neq \tau'$. This is done in Section~\ref{sec:proof_of_cond_covariance} for different $(\tau,\tau')$ intervals.

Our next supporting result upper bounds the conditional covariance of the estimation error $\vek(\vGhat) - \vek(\vG) =: \Delta\vG$, and it is central to the proof of our main result in Theorem~\ref{thrm:learning_markov_par_main}.

\begin{theorem}[Conditional Covariance of $\Delta\vG$]\label{thrm:cond_covariance}
 Consider the problem~\eqref{eqn:ERM_Ghat} of estimating the Markov parameter matrix $\vG$ defined in~\eqref{eqn:G_mtx}. Let $\Delta\vG := \vek(\vGhat) - \vek(\vG)$ denote the estimation error, and let $\vSigma[\Delta\vG \bgl \vu_{1:T}] := \E[\Delta\vG \Delta\vG^\T\bgl \vu_{1:T}] - \E[\Delta\vG \bgl \vu_{1:T}] \E[\Delta\vG \bgl \vu_{1:T}]^\T$ denote its conditional covariance. Let $\vF$ be as in \eqref{eqn:G_mtx}, and let $\vGamma_w^{\infty} := \sum_{i=0}^{\infty} \vA^i \vSigma_w (\vA^i)^\T$ denote the infinite time controllability Gramian associated with the process noise. Then, under Assumptions~\ref{assump:stability} and~\ref{assump:data/noise}, we have
 \begin{align}
    \vSigma[\Delta\vG \bgl \vu_{1:T}] 
    & \preceq \bigg[\sigma_z^2 + \|\vSigma_w\|\tf{\vF}^2\beta^2 L \big(1 + \frac{\phi(\vA,\rho)\rho^L}{1-\rho}\ \big)  + \norm{\vGamma_w^\infty} \norm{\vC\vA^L}^2\beta^2 \frac{\phi(\vA,\rho)}{1-\rho}  \bigg] \big(\sum_{\tau = L}^{T-1} \ubb_{\tau}\ubb_{\tau}^\T \otimes \vu_{\tau+1}\vu_{\tau+1}^\T \big)^{-1}. \nn
 \end{align}
\end{theorem}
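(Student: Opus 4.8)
The plan is to reduce the asserted positive semidefinite inequality to a scalar operator-norm bound on the conditional covariance matrix of the stacked noise vector $\vzeta$. Conditioned on $\vu_{1:T}$, the design matrix $\vUtil$ and hence $\vVtil = \vUtil^\T\vUtil$ are deterministic, so the identity $\Delta\vG = \vVtil^{-1}\vUtil^\T\vzeta$ immediately yields the sandwich expression
\begin{equation*}
\vSigma[\Delta\vG \bgl \vu_{1:T}] = \vVtil^{-1}\vUtil^\T \, \vSigma[\vzeta \bgl \vu_{1:T}] \, \vUtil\,\vVtil^{-1},
\end{equation*}
where $\vSigma[\vzeta \bgl \vu_{1:T}]$ is the $(T-L)\times(T-L)$ symmetric matrix whose $(\tau,\tau')$ entry is the conditional auto-covariance $\Rcal_\zeta[\tau,\tau'\bgl\vu_{1:T}]$ computed in Lemma~\ref{lemma:auto_covariance}. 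The key reduction is that it suffices to establish the scalar bound $\vSigma[\vzeta \bgl \vu_{1:T}] \preceq c\,\Iden_{T-L}$, where $c$ is the bracketed quantity in the statement: monotonicity of $M\mapsto \vUtil^\T M\vUtil$ under the PSD order then gives $\vUtil^\T\vSigma[\vzeta\bgl\vu_{1:T}]\vUtil \preceq c\,\vVtil$, and conjugating by the symmetric matrix $\vVtil^{-1}$ gives $\vSigma[\Delta\vG\bgl\vu_{1:T}]\preceq c\,\vVtil^{-1}$, which is exactly the claim once we recall $\vVtil = \sum_{\tau=L}^{T-1}\ubb_\tau\ubb_\tau^\T\otimes\vu_{\tau+1}\vu_{\tau+1}^\T$.

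It then remains to show $\|\vSigma[\vzeta\bgl\vu_{1:T}]\|\le c$. I would organize this around the three statistically distinct sources contributing to $\zeta_{\tau+1} = \wbb_\tau^\T\otimes\vu_{\tau+1}^\T\vek(\vF) + \vu_{\tau+1}^\T\ve_{\tau+1} + z_{\tau+1}$, writing $\vzeta = \vzeta^w + \vzeta^e + \vzeta^z$ for the process-noise-within-window part, the state-error part, and the measurement-noise part, and then bounding $\|\vSigma[\vzeta\bgl\vu_{1:T}]\|$ by the operator-norm triangle inequality applied to the four symmetric blocks $\vSigma[\vzeta^w]$, $\vSigma[\vzeta^e]$, $\sigma_z^2\Iden$, and the symmetrized cross term. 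Since $\{z_t\}$ is independent of $\{\vw_t\}$, the measurement block is exactly $\sigma_z^2\Iden$, contributing the $\sigma_z^2$ term. For $\vzeta^w$, writing $\vzeta^w_{\tau+1} = \sum_{i=0}^{L-1}\vu_{\tau+1}^\T\vC\vA^i\vw_{\tau-i}$ and noting that for each fixed $i$ the terms across $\tau$ involve distinct and hence independent noise vectors, each summand $\vzeta^{w,(i)}$ has a diagonal covariance of norm at most $\beta^2\|\vSigma_w\|\,\|\vC\vA^i\|^2$; applying the triangle inequality for the covariance square root followed by Cauchy--Schwarz over the $L$ indices yields $\|\vSigma[\vzeta^w]\|\le \beta^2\|\vSigma_w\|\big(\sum_{i=0}^{L-1}\|\vC\vA^i\|\big)^2 \le \beta^2\|\vSigma_w\|\, L\,\tf{\vF}^2$, producing the $\|\vSigma_w\|\tf{\vF}^2\beta^2 L$ term exactly.

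For $\vzeta^e$, using $\ve_{\tau+1}=\vC\vA^L\xb_{\tau+1-L}$ together with $\E[\xi_s\xi_{s'}^\T] = \vA^{|s-s'|}\vGamma_w^{\infty}$ (up to transpose) for the noise-driven state $\xi_s$ and the decay $\|\vA^k\|\le\phi(\vA,\rho)\rho^k$, the entries of $\vSigma[\vzeta^e\bgl\vu_{1:T}]$ are bounded by $\beta^2\|\vGamma_w^{\infty}\|\,\|\vC\vA^L\|^2\phi(\vA,\rho)\rho^{|\tau-\tau'|}$, so a Gershgorin row-sum collapses the geometric series $\sum_{k\ge0}\rho^k = 1/(1-\rho)$ and gives the $\|\vGamma_w^{\infty}\|\,\|\vC\vA^L\|^2\beta^2\phi(\vA,\rho)/(1-\rho)$ term. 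The remaining contribution is the cross covariance between $\vzeta^w$ and $\vzeta^e$, which is precisely the third and fifth summands of \eqref{eqn:result_of_lemma_autocov}: since the window part and the state-error part share process-noise vectors only when their time windows overlap, the state branch forces a factor $\vC\vA^L$ (hence $\rho^L$), the window contributes a band of width $L$, and the remaining sum over past noise collapses through $\sum_{k\ge0}\|\vA^k\|\le\phi(\vA,\rho)/(1-\rho)$, giving the mixed term $\|\vSigma_w\|\tf{\vF}^2\beta^2 L\,\phi(\vA,\rho)\rho^L/(1-\rho)$ and completing the parenthesis $1+\phi(\vA,\rho)\rho^L/(1-\rho)$.

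I expect the cross term to be the main obstacle. One must reorganize the double summation over the window index $i$ and the lag $\tau-\tau'$ so that the geometric decay is summed in the correct variable, verify that the inner noise sums are absorbed into the Gramian $\vGamma_w^{\infty}$ rather than into a looser factor, and carefully track the $\tn{\vu_t}\le\beta$ bounds and Kronecker-product spectral identities (as in the derivation of \eqref{eqn:lemma_autocov_simplified_tau_equal_tau_prime}) so that the constants assemble into exactly $c$ rather than a larger multiple. Aggregating the four contributions gives $\|\vSigma[\vzeta\bgl\vu_{1:T}]\|\le c$, and the reduction of the first paragraph then delivers the theorem.
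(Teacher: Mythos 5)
Your proposal is correct, and it reaches the theorem by a genuinely different packaging of the same quantitative content. The paper never forms the $(T-L)\times(T-L)$ noise covariance $\vSigma[\vzeta \bgl \vu_{1:T}]$ as a matrix: it expands $\vSigma[\Delta\vG\bgl\vu_{1:T}]$ as a double sum over $(\tau,\tau')$ weighted by the auto-covariances $\Rcal_\zeta[\tau,\tau'\bgl\vu_{1:T}]$ of Lemma~\ref{lemma:auto_covariance}, symmetrizes the off-diagonal rank-one Kronecker terms via $\vv_1\vv_2^\T+\vv_2\vv_1^\T\preceq\vv_1\vv_1^\T+\vv_2\vv_2^\T$, and then controls the resulting row sums $\sum_{\tau'}|\Rcal_\zeta[\tau,\tau'\bgl\vu_{1:T}]|$ term by term (banded Toeplitz process-noise part, geometrically decaying state-error part, cross terms carrying the $\rho^L$ and width-$L$ band, plus $\sigma_z^2$). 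Your route — prove $\vSigma[\vzeta\bgl\vu_{1:T}]\preceq c\,\Iden_{T-L}$ directly by splitting $\vzeta$ by noise source and applying the variance-subadditivity/Cauchy--Schwarz step for $\vzeta^w$ and Gershgorin for $\vzeta^e$ and the cross block, then conjugating by $\vVtil^{-1}\vUtil^\T$ — is a strictly stronger intermediate claim (the paper only needs the quadratic form on the column space of $\vUtil$), but it rests on exactly the same row-sum estimates: Gershgorin applied to $\vSigma[\vzeta\bgl\vu_{1:T}]$ and the paper's symmetrization both reduce to the Schur-type bound $\sup_\tau\sum_{\tau'}|\Rcal_\zeta[\tau,\tau'\bgl\vu_{1:T}]|\le c$, so neither buys a sharper constant. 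What your version buys is modularity and readability: the bound on $\vSigma[\vzeta\bgl\vu_{1:T}]$ is independent of the design matrix, and the per-source decomposition ($\|\vSigma[\vzeta^w]\|\le\beta^2\|\vSigma_w\|L\tf{\vF}^2$ via $(\sum_i\|\vC\vA^i\|)^2\le L\tf{\vF}^2$, the geometric band for $\vzeta^e$, the $\rho^L$-damped band for the cross term) recovers each summand of $c$ transparently; the paper's version avoids introducing the noise covariance matrix and works with the explicit formula of Lemma~\ref{lemma:auto_covariance} throughout. Two small caveats, neither fatal: your two-sided geometric row sum gives $\sum_{k\in\mathbb{Z}}\rho^{|k|}=(1+\rho)/(1-\rho)$, i.e.\ an absolute factor of about $2$ on the $\vGamma_w^\infty$ term, and the cross term similarly picks up a factor of $2$ — but the paper's own proof likewise produces factors $3$ and $2$ that the theorem statement silently absorbs, so you match the paper's actual conclusion; and your identity $\E[\xi_s\xi_{s'}^\T]=\vA^{|s-s'|}\vGamma_w^\infty$ should be stated as an upper bound with the finite-time Gramian ($\preceq$ after inserting $\vA^{|s-s'|}$, with the Gramian dominated by $\vGamma_w^\infty$), though the entrywise bound you then use is exactly right.
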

The proof of Theorem~\ref{thrm:cond_covariance} can be found in Section~\ref{sec:proof_of_cond_covariance}. Excluding the system dependent constants, 
the dependence on the individual noise variances scales as $ \tilde{\Ocal} (\sigma_z^2 + \norm{\vSigma_w}L + \norm{\vC\vA^L}^2\norm{\vGamma_w^\infty})$ for sufficiently large $L$ and stable systems.

Next, we present a lemma to upper bound the weighted norm of the conditional mean estimated error, which will be used to prove our main result in Theorem~\ref{thrm:learning_markov_par_main}.

\begin{lemma}[Conditional Mean of $\Delta\vG$]\label{lemma:cond_mean_norm}
    Consider the same set up of Theorem~\ref{thrm:cond_covariance}. Setting $\vVtil := \vUtil^\T \vUtil$, and assuming $\vVtil \succ 0$, we get the following upper bound on the (weighted) norm of conditionally expected estimation error,
    \begin{align}
        \norm{\E[\Delta\vG \bgl \vu_{1:T}]}_{\vVtil} &\leq \beta^2 \norm{\vB} \norm{\vC}  \frac{\phi(\vA,\rho)\rho^L}{1 - \rho} \sqrt{T-L}, \\
        \text{and} \quad \tn{\E[\Delta\vG \bgl \vu_{1:T}]} 
        &\leq \beta^2 \norm{\vB} \norm{\vC}  \frac{\phi(\vA,\rho)\rho^L}{1 - \rho} \mysqrt{(T-L)\big/\lambda_{\min}\big( \sum_{\tau = L}^{T-1} \ubb_{\tau}\ubb_{\tau}^\T \otimes \vu_{\tau+1}\vu_{\tau+1}^\T \big)}.
    \end{align}
\end{lemma}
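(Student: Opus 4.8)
The plan is to begin from the closed-form estimation error $\Delta\vG = \vVtil^{-1}\vUtil^\T\vzeta$ and take conditional expectations, so that $\E[\Delta\vG\bgl\vu_{1:T}] = \vVtil^{-1}\vUtil^\T\,\E[\vzeta\bgl\vu_{1:T}]$. The first task is to identify the conditional mean of the effective noise entrywise. Since $\zeta_t = \wbb_{t-1}^\T\otimes\vu_t^\T\vek(\vF) + \vu_t^\T\ve_t + z_t$ and the process/measurement noise is centered and independent of the inputs (Assumption~\ref{assump:data/noise}), the $\wbb_{t-1}$ and $z_t$ terms vanish under $\E[\,\cdot\bgl\vu_{1:T}]$, leaving only the state-truncation term $\vu_t^\T\ve_t$. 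Unrolling $\ve_t = \vC\Ab^L\xb_{t-L}$ through the state recursion $\xb_{t-L} = \sum_{i=0}^{t-L-1}\Ab^i(\vB\vu_{t-L-1-i}+\wb_{t-L-1-i})$ and again discarding the zero-mean noise leaves a purely input-dependent bias
\[
    \bar\zeta_t := \E[\zeta_t\bgl\vu_{1:T}] = \vu_t^\T\sum_{i=0}^{t-L-1}\vC\Ab^{L+i}\vB\,\vu_{t-L-1-i}.
\]

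For the weighted-norm bound, I would use that $\vUtil\vVtil^{-1}\vUtil^\T = \vUtil(\vUtil^\T\vUtil)^{-1}\vUtil^\T$ is the orthogonal projector onto the column space of $\vUtil$ and hence has operator norm at most one. Writing $\bar\vzeta := \E[\vzeta\bgl\vu_{1:T}]$, this gives $\norm{\E[\Delta\vG\bgl\vu_{1:T}]}_{\vVtil}^2 = \bar\vzeta^\T\vUtil\vVtil^{-1}\vUtil^\T\bar\vzeta \leq \tn{\bar\vzeta}^2 = \sum_{t=L+1}^{T}\bar\zeta_t^2$. It then remains to bound each $\bar\zeta_t$ uniformly: applying the triangle inequality together with $\tn{\vu_t}\leq\beta$, submultiplicativity, and the decay estimate $\norm{\Ab^{k}}\leq\phi(\vA,\rho)\rho^{k}$ from Assumption~\ref{assump:stability}, I would bound $\norm{\vC\Ab^{L+i}\vB}\leq\norm{\vC}\norm{\vB}\phi(\vA,\rho)\rho^{L+i}$ and sum the geometric series in $i$, yielding $|\bar\zeta_t|\leq\beta^2\norm{\vB}\norm{\vC}\,\phi(\vA,\rho)\rho^L/(1-\rho)$ for every $t$. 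Substituting into the sum of $T-L$ terms and taking square roots produces the first inequality.

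For the Euclidean-norm bound I would pass through the smallest eigenvalue of $\vVtil$: for any $\vv$ one has $\tn{\vv}^2 \leq \norm{\vv}_{\vVtil}^2/\lambda_{\min}(\vVtil)$. Applying this with $\vv = \E[\Delta\vG\bgl\vu_{1:T}]$, invoking the weighted-norm bound just obtained, and recalling the Kronecker identity $\vVtil = \sum_{\tau=L}^{T-1}\ubb_{\tau}\ubb_{\tau}^\T\otimes\vu_{\tau+1}\vu_{\tau+1}^\T$ gives the second inequality directly.

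The step I expect to be most delicate is the first one: correctly unrolling the state recursion to isolate the input-only bias (being careful that the entire input sequence, including $\vu_t$, is frozen under the conditioning while only the process noise $\{\wb_s\}$ is averaged away), and then reindexing the sum so that the geometric series produces the factor $\rho^{L}/(1-\rho)$ rather than a weaker $\rho/(1-\rho)$, matching the claimed constant. The projection and eigenvalue arguments are then routine.
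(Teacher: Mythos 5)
Your proposal is correct and follows essentially the same route as the paper's proof: conditioning kills the $\wbb_{t-1}$ and $z_t$ terms so only the truncation bias $\vu_t^\T\E[\ve_t\bgl\vu_{1:T}]$ survives, the weighted norm is controlled by noting $\vUtil(\vUtil^\T\vUtil)^{-1}\vUtil^\T$ is an orthogonal projector (the paper phrases this via the SVD of $\vUtil$, but it is the same argument), and the entrywise geometric-series bound yields exactly the paper's $\beta^2\norm{\vB}\norm{\vC}\phi(\vA,\rho)\rho^L/(1-\rho)$ per term. Deriving the Euclidean bound from the weighted bound via $\tn{\vv}^2\leq\norm{\vv}_{\vVtil}^2/\lambda_{\min}(\vVtil)$ is a minor, equally valid reorganization of the paper's direct SVD computation.
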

The proof of Lemma~\ref{lemma:cond_mean_norm} is deferred to Section~\ref{sec:proof_of_cond_mean_norm}. With our supporting results in Lemma~\ref{lemma:auto_covariance}, Theorem~\ref{thrm:cond_covariance}, and Lemma~\ref{lemma:cond_mean_norm} we are now ready to present the proof of our main result data-depdendent error bound.

\subsubsection{Proof of Theorem~\ref{thrm:learning_markov_par_main}}\label{sec:proof_learning_markov_param}
\begin{proof}
Because of heavy-tailed as well as highly dependent covariates and the effective noise process, we use a similar approach to~\cite{mania2022time} to upper bound the estimation error. For the ease of notation, we define
\begin{align}
    \Xi := \sigma_z^2 + \|\vSigma_w\|\tf{\vF}^2\beta^2L \big(1 + \frac{\phi(\vA,\rho)\rho^L}{1-\rho}\ \big)  + \norm{\vGamma_w^\infty} \norm{\vC\vA^L}^2\beta^2 \frac{\phi(\vA,\rho)}{1-\rho}, \label{eqn:Xi_def_proof}  
\end{align}
such that, we have $\vSigma[\Delta\vG \bgl \vu_{1:T}] \preceq \Xi \big(\sum_{\tau = L}^{T-1} \ubb_{\tau}\ubb_{\tau}^\T \otimes \vu_{\tau+1}\vu_{\tau+1}^\T \big)^{-1}$ according to Theorem~\ref{thrm:cond_covariance}. Combining this with the Chebyshev's inequality,
\begin{align}
	\P\bigg(\sqrt{(\Delta\vG -\E[\Delta\vG \bgl \vu_{1:T}])^\T \vSigma[\Delta\vG \bgl \vu_{1:T}]^{-1}(\Delta\vG -\E[\Delta\vG \bgl \vu_{1:T}])} > \epsilon \bigg) \leq \frac{p^2L}{\epsilon^2}, \label{eqn:Chebyshev_ineq}
\end{align}
we obtain the following upper bound on the weighted Euclidean norm of centered estimation error: We have, 
\begin{align}
	&\P\bigg(\norm{\Delta\vG -\E[\Delta\vG \bgl \vu_{1:T}]}_{\vVtil} > \sqrt{\frac{p^2 L \Xi}{\delta}}~\bigg) \nn \\
	&\qquad\qquad= \P\bigg(\sqrt{(\Delta\vG -\E[\Delta\vG \bgl \vu_{1:T}])^\T \vVtil(\Delta\vG -\E[\Delta\vG \bgl \vu_{1:T}])} > \sqrt{\frac{p^2 L \Xi}{\delta}} \big)\nn \\
	& \qquad \qquad \leq \P\bigg(\sqrt{(\Delta\vG -\E[\Delta\vG \bgl \vu_{1:T}])^\T \vSigma[\Delta\vG \bgl \vu_{1:T}]^{-1}(\Delta\vG -\E[\Delta\vG \bgl \vu_{1:T}])} > \sqrt{\frac{p^2 L}{\delta}} \big) \leq \delta. \label{eqn:centered_est_error_weighted_bound}
\end{align}
Finally, combining~\eqref{eqn:centered_est_error_weighted_bound} with Lemma~\ref{lemma:cond_mean_norm}, we get the statement of Theorem~\ref{thrm:learning_markov_par_main} as follows,
\begin{align}
	\P \bigg(\norm{\vek(\vG)- \vek(\hat{\vG})}_{\vVtil} & \leq \sqrt{\frac{p^2 L \Xi}{\delta}} + \norm{\E[\Delta\vG \bgl \vu_{1:T}]}_{\vVtil}\bigg) \geq 1- \delta,  \nn \\
	 \implies \P \bigg(\norm{\vek(\vG)- \vek(\hat{\vG})}_{\vVtil} &\leq \sqrt{\frac{p^2 L \Xi}{\delta}} + \beta^2 \norm{\vB} \norm{\vC}  \frac{\phi(\vA,\rho)\rho^L}{1 - \rho} \sqrt{T-L} \bigg) \geq 1- \delta.
\end{align}
Following a similar arguments, we combine Theorem~\ref{thrm:cond_covariance} with the Chebyshev's inequality~\eqref{eqn:Chebyshev_ineq} to obtain the following Euclidean norm bound as well,
\begin{align}
&\P\bigg(\tn{\Delta\vG -\E[\Delta\vG \bgl \vu_{1:T}]} > \mysqrt{\frac{p^2 L \Xi \big(\sum_{\tau = L}^{T-1}\ubb_{\tau}\ubb_{\tau}^\T \otimes \vu_{\tau+1}\vu_{\tau+1}^\T\big)^{-1}}{\delta}}\bigg) \nn \\
&\qquad\qquad\leq \P\big(\sqrt{(\Delta\vG -\E[\Delta\vG \bgl \vu_{1:T}])^\T \vSigma[\Delta\vG \bgl \vu_{1:T}]^{-1}(\Delta\vG -\E[\Delta\vG \bgl \vu_{1:T}])} > \sqrt{\frac{p^2 L}{\delta}} \big) \leq \delta,	\label{eqn:centered_est_error_bound}
\end{align}
where $\Xi$ is as defined in \eqref{eqn:Xi_def_proof}. Finally, combining~\eqref{eqn:centered_est_error_bound} with Lemma~\ref{lemma:cond_mean_norm}, we get the following upper bound on the Frobenius norm of the estimation error,
\begin{align}
	\P \bigg(\tf{\vGhat - \vG}  &\leq \mysqrt{\frac{p^2 L \Xi}{\delta \lambda_{\min}\big(\sum_{\tau = L}^{T-1}\ubb_{\tau}\ubb_{\tau}^\T \otimes \vu_{\tau+1}\vu_{\tau+1}^\T\big)}} + \tn{\E[\Delta\vG \bgl \vu_{1:T}]}\bigg) \geq 1- \delta, \nn\\
   \implies\P \bigg(\tf{\vGhat - \vG} &\leq \mysqrt{\frac{p^2 L \Xi}{\delta \lambda_{\min}\big(\sum_{\tau = L}^{T-1}\ubb_{\tau}\ubb_{\tau}^\T \otimes \vu_{\tau+1}\vu_{\tau+1}^\T\big)}} + \frac{\beta^2 \norm{\vB} \norm{\vC}  \frac{\phi(\vA,\rho)\rho^L}{1 - \rho} \sqrt{T-L}}{\sqrt{\lambda_{\min}\big( \sum_{\tau = L}^{T-1} \ubb_{\tau}\ubb_{\tau}^\T \otimes \vu_{\tau+1}\vu_{\tau+1}^\T \big)}}\bigg) \geq 1- \delta. \label{eqn:est_error_ell2_v1}
\end{align}
This completes the proof.
\end{proof}

\subsubsection{Proof of Lemma~\ref{lemma:output_prediction}}\label{sec:proof_output_prediction}
\begin{proof}
Before we begin the proof, let $\vUtil  =  \vQtil \tilde{\vSigma} \vRtil^\T $ be the singular value decomposition, where $\vQtil \in \R^{T-L \times p^2L}$. Then, $\vVtil = \vUtil^\T \vUtil = \vRtil\tilde{\vSigma}^2 \vRtil^\T$. This further implies, $\vVtil^{1/2} = \vRtil\tilde{\vSigma} \vRtil^\T$, and $(\vVtil^{1/2})^\T = \vVtil^{1/2}$. Therefore, we have
\begin{align}
\vu_{T+1}^\T \vGhat \ubb_{T} - \vu_{T+1}^\T \vG \ubb_{T} &= (\vek(\vGhat) - \vek(\vG))^\T (\ubb_T \otimes \vu_{T+1}), \nn \\
&= (\vek(\vGhat) - \vek(\vG))^\T \vVtil^{1/2} \vVtil^{-1/2} (\ubb_T \otimes \vu_{T+1}), \nn \\
&\leq \tn{\vVtil^{1/2}(\vek(\vGhat) - \vek(\vG))} \tn{\vVtil^{-1/2} (\ubb_T \otimes \vu_{T+1})}, \nn \\
\implies (\vu_{T+1}^\T \vGhat \ubb_{T} - \vu_{T+1}^\T \vG \ubb_{T})^2 &\leq \norm{\vek(\vGhat) - \vek(\vG)}_{\vVtil}^2 \norm{\ubb_T \otimes \vu_{T+1}}_{\vVtil^{-1}}^2. \label{eqn:Vtil_Vtil_inverse}
\end{align}
With this, we upper bound the expected output prediction error as follows,
     \begin{align}
         \E[(\yhat_{T+1} - y_{T+1})^2 \bgl \vu_{1:{T+1}}] &= \E[\big(\vu_{T+1}^\T \vGhat \ubb_{T} - \vu_{T+1}^\T \vG \ubb_{T}  - \vu_{T+1}^\T \vF \wbb_{T}  - \vu_{T+1}^\T\ve_{T} - z_{T+1}\big)^2 \bgl \vu_{1:{T+1}}], \nn \\
         & \leqsym{a}  2\E[(\vu_{T+1}^\T \vGhat \ubb_{T} - \vu_{T+1}^\T \vG \ubb_{T})^2\bgl \vu_{1:{T+1}}] + 2\E[(\vu_{T+1}^\T\ve_{T})^2\bgl \vu_{1:{T+1}}] \nn \\
         &+ \E [(\vu_{T+1}^\T \vF \wbb_{T})^2 \bgl \vu_{1:{T+1}}] +\E[ (z_{T+1})^2 \bgl \vu_{1:{T+1}}], \nn \\
         & \leqsym{b}  2\norm{\vek(\vGhat) - \vek(\vG)}_{\vVtil}^2 \norm{\ubb_T \otimes \vu_{T+1}}_{\vVtil^{-1}}^2  + 2\vu_{T+1}^\T\E[\ve_{T}\ve_{T}^\T \bgl \vu_{1:{T}}] \vu_{T+1}   \nn \\
         & + \norm{\vSigma_w} \tn{\vF^\T \vu_{T+1}}^2 + \sigma_z^2, \nn \\
         & \leqsym{c}  2\norm{\vek(\vGhat) - \vek(\vG)}_{\vVtil}^2 \norm{\ubb_T \otimes \vu_{T+1}}_{\vVtil^{-1}}^2  + 2\beta^2 \norm{\vC\vA^L}^2 \norm{\vGamma_u^{(T)} + \vGamma_{w}^{(T)}}   \nn \\
         & + \beta^2\norm{\vSigma_w}\tf{\vF}^2 + \sigma_z^2,
     \end{align}
where we obtain (a) from the pair-wise conditional independence of $\wbb_T, z_{T+1}$ and the remaining terms, and using the identity $-2ab \leq a^2 + b^2$ for any scalars $a,b \in \R$; (b) from \eqref{eqn:Vtil_Vtil_inverse}; and (c) from upper bounding $\E[\ve_{T}\ve_{T}^\T \bgl \vu_{1:{T}}]$ as follows,
\begin{align}
    \E[\ve_{T}\ve_{T}^\T \bgl \vu_{1:{T}}] = \vC \Ab^{L} \E[\vx_{T-L} \vx_{T-L}^\T\bgl \vu_{1:T}](\vA^L)^\T\vC^\T  & =  \vC \Ab^{L}\big( \vGamma_u^{(T)} + \vGamma_w^{(T)} \big)(\vA^L)^\T\vC^\T, \nn \\
    &\preceq \norm{\vC \Ab^{L}}^2\norm{\vGamma_u^{(T)} + \vGamma_w^{(T)}} \Iden_p,
\end{align}
where, we upper bound $\E[\vx_{T-L} \vx_{T-L}^\T\bgl \vu_{1:T}]$ as follows, 
\begin{align}
    \E[\vx_{T-L} \vx_{T-L}^\T\bgl \vu_{1:T}] &= \E\big[\big(\sum_{i=0}^{T-L-1}\vA^{i}\vB \vu_{T-L-i-1} + \vA^{i}\vw_{T-L-i-1}\big)\big(\sum_{i=0}^{T-L-1}\vA^{i}\vB \vu_{T-L-i-1}+ \vA^{i}\vw_{T-L-i-1}\big)^\T\bgl \vu_{1:T}\big], \nn \\
    &= \sum_{i=0}^{T-L-1}\sum_{j=0}^{T-L-1}\vA^i \vB \vu_{T-L-i-1} \vu_{T-L-j-1}^\T \vB^\T (\vA^j)^\T +  \sum_{i=0}^{T-L-1} \vA^i \vSigma_w (\vA^i)^\T, \nn \\
    &\preceq \underbrace{\sum_{i=0}^{T}\sum_{j=0}^{T}\vA^i \vB \vu_{T-i} \vu_{T-j}^\T \vB^\T (\vA^j)^\T}_{\vGamma_u^{(T)}} +  \underbrace{\sum_{i=0}^{T} \vA^i \vSigma_w (\vA^i)^\T}_{\vGamma_w^{(T)}}.
\end{align}
Alternately, we can also upper bound the output prediction error in terms of the Frobenius norm of the estimation error. For this, we first note that,
\begin{align}
(\vu_{T+1}^\T \vGhat \ubb_{T} - \vu_{T+1}^\T \vG \ubb_{T})^2 \leq \tn{\vek(\vGhat) - \vek(\vG)}^2 \tn{\ubb_T \otimes \vu_{T+1}}^2 \leq \beta^4 L \tf{\vGhat - \vG},
\end{align}
where we use $\tn{\ubb_T \otimes \vu_{T+1}}^2 = (\ubb_T^\T \otimes \vu_{T+1}^\T)(\ubb_T \otimes \vu_{T+1}) = \ubb_T^\T \ubb_T \otimes \vu_{T+1}^\T\vu_{T+1} \leq \beta^4L$. Hence, we have

\begin{align}
         \E[(\yhat_{T+1} - y_{T+1})^2 \bgl \vu_{1:{T+1}}] 
         & \leq  2\E[(\vu_{T+1}^\T \vGhat \ubb_{T} - \vu_{T+1}^\T \vG \ubb_{T})^2\bgl \vu_{1:{T+1}}] + 2\E[(\vu_{T+1}^\T\ve_{T})^2\bgl \vu_{1:{T+1}}] \nn \\
         &+ \E [(\vu_{T+1}^\T \vF \wbb_{T})^2 \bgl \vu_{1:{T+1}}] +\E[ (z_{T+1})^2 \bgl \vu_{1:{T+1}}], \nn \\
         & \leq  2\beta^4 L \tf{\vGhat - \vG}^2  + 2\beta^2  \norm{\vC\vA^L}^2 \norm{\vGamma_u^{(T)} + \vGamma_{w}^{(T)}} + \beta^2\norm{\vSigma_w}\tf{\vF}^2 + \sigma_z^2. 
     \end{align}
This completes the proof.
\end{proof}

\subsection{Persistence of Excitation~(Data-Independent Bounds)} \label{sec:proof_persistence_of_excitation}
In this section, we provide a lower bound on the smallest singular value of the design matrix. When combined with Theorem~\ref{thrm:learning_markov_par_main}, this gives consistent estimation of Markov parameter matrix $\vG$. We will guarantee persistence of excitation under two different assumptions. Our first result provides a two-sided concentration bound under the assumption of bounded inputs with zero-mean and isotropic covariance, whereas, our second result provides a one-sided concentration bound under very mild assumption on the inputs, that is, bounded second and fourth moments.

\subsubsection{Two-sided Concentration Bound -- entire spectrum}
Before we state our result, recall Assumption~\ref{assump:input}($\va$) from Section~\ref{sec:main_results}. For the sake of convenience, we are repeating the statement of Assumption~\ref{assump:input}($\va$) here. 
\begin{assumptionp}{\ref{assump:input}($\va$)}(Bounded inputs)\label{assump:input_proof} $\{\vu_t\}_{t=0}^T \distas \Dcal_u$ are stochastic  with zero mean $\E[\vu_t]  = 0$, isotropic covariance $\E[\vu_t \vu_t^\T] = \Iden_p$ and have bounded Euclidean norm, i.e., there exists a scalar $\beta>0$ such that $\tn{\vu_t} \leq \beta$ for all $t \in [T]$.
\end{assumptionp}
We remark that the assumption of bounded inputs can be relaxed to include unbounded stochastic inputs as well. For example, in the case of sub-Gaussian inputs, it is easy to show that, with probability at least $1-\delta$, the inputs are bounded as $\tn{\vu_t} \lesssim \Ocal(\sqrt{p\log(T/\delta)})$ for all $t \in [T]$. Similarly, the assumption of isotropic inputs can also be relaxed to input with positive-definite covariance $\E[\vu_t \vu_t^\T] \succ 0$. Using Assumption~\ref{assump:input_proof}, we get the following persistence of excitation result. 

\begin{proposition}[Persistence of Excitation]\label{prop:two-sided-min-eig}
Consider a sequence of inputs $\{\vu_t\}_{t=0}^T \distas \Dcal_u$ satisfying Assumptions~\ref{assump:input_proof}, and let $\ubb_t := \begin{bmatrix} \vu_{t}^\T & \vu_{t-1}^\T  & \cdots & \vu_{t-L+1}^\T \end{bmatrix}^\T$. Then, for all $\varepsilon \in (0, 1)$, we have 
\begin{align*}
     \PP\left ( \lambda_{\min}\left( \sum_{t=L}^{T-1} (\bar{\vu}_{t} \otimes \vu_{t+1}) (\bar{\vu}_{t} \otimes \vu_{t+1})^\top \right) \ge  (1-\varepsilon)^2 (T-L)  \right ) \ge 1 - 9^{p^2L} \cdot 2(L+1)\exp\left( - \frac{(T-L)\varepsilon^2}{2 (L+1)L \beta^4} \right).
\end{align*}  
\end{proposition}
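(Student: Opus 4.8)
The plan is to reduce the uniform lower bound on $\lambda_{\min}$ to a family of scalar concentration statements in fixed directions, handling the temporal dependence by a blocking decomposition and the ambient dimension by a covering argument.

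First I would record that the summands are centered at the identity. Since the inputs are i.i.d., zero-mean and isotropic, and $\vu_{t+1}$ is independent of $\ubb_t$, we have $\E[(\ubb_t\otimes\vu_{t+1})(\ubb_t\otimes\vu_{t+1})^\top]=\E[\ubb_t\ubb_t^\top]\otimes\E[\vu_{t+1}\vu_{t+1}^\top]=\Iden_{pL}\otimes\Iden_p=\Iden_{p^2L}$, where $\E[\ubb_t\ubb_t^\top]=\Iden_{pL}$ because the off-diagonal blocks $\E[\vu_{t-i}\vu_{t-j}^\top]$ vanish for $i\neq j$. Thus the target matrix has mean $(T-L)\Iden_{p^2L}$, and it suffices to control its quadratic form uniformly over the sphere within a $(1-\varepsilon)$ factor.

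Next, fix a direction $\vv\in\Sc^{p^2L-1}$ and set $X_t:=(\vv^\top(\ubb_t\otimes\vu_{t+1}))^2$, with $\E[X_t]=1$ and $0\le X_t\le\tn{\ubb_t}^2\tn{\vu_{t+1}}^2\le L\beta^4$. The sequence $\{X_t\}$ is $(L+1)$-dependent, since $X_t$ is determined by the window $\{\vu_{t-L+1},\dots,\vu_{t+1}\}$ of length $L+1$. I would therefore partition $\{L,\dots,T-1\}$ into the $L+1$ residue classes modulo $L+1$; on each class, consecutive indices differ by $L+1$, so the windows are disjoint and the corresponding $X_t$ are independent. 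Applying a concentration inequality for bounded independent sums on each class of size $n_j$ (Hoeffding, or a Bernstein bound exploiting $\Var(X_t)\le L\beta^4$ to keep the denominator at $L\beta^4$) gives a two-sided estimate of the form $\PP(|\sum_t X_t-n_j|>\varepsilon n_j)\le 2\exp(-c\,\varepsilon^2 n_j/(L\beta^4))$. Since the full sum can fall below $(1-\varepsilon)(T-L)=\sum_j(1-\varepsilon)n_j$ only if some block falls below $(1-\varepsilon)n_j$, a union bound over the $L+1$ blocks recovers the full effective sample size $n_j\gtrsim(T-L)/(L+1)$, producing the exponent $(T-L)\varepsilon^2/(2(L+1)L\beta^4)$ and the prefactor $2(L+1)$.

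Finally, to pass from fixed directions to the whole sphere I would take a $1/4$-net $\Ncal$ of $\Sc^{p^2L-1}$ with $|\Ncal|\le 9^{p^2L}$, apply the previous bound simultaneously to each net point (the union bound contributing the $9^{p^2L}$ factor), and transfer the two-sided per-point estimates to $\lambda_{\min}$ over the full sphere by a standard perturbation argument; this is where the two-sidedness is used, since bounding the discretization error requires an operator-norm (upper) bound, and it is what turns the per-point $(1-\varepsilon)$ control into the stated $(1-\varepsilon)^2(T-L)$ lower bound. The main obstacle is the middle step: the Kronecker covariates are heavy-tailed and temporally dependent, so the value lies in the blocking-plus-merging that restores independence while retaining an effective sample count of order $(T-L)/(L+1)$ (rather than a much smaller minimum over blocks), and in keeping the variance-level scaling $L\beta^4$ in the exponent so that the sample-complexity denominator is $(L+1)L\beta^4$ rather than its square.
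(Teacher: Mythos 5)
Your proposal is correct and follows essentially the same route as the paper's proof: blocking the time indices into the $L+1$ residue classes modulo $L+1$ to restore independence (your ``all blocks good implies the total is good'' merging is the same union bound as the paper's pigeonhole step $\sum_i S_i > \rho \Rightarrow \exists i,\, S_i > \rho/(L+1)$), followed by a $1/4$-net of cardinality $9^{p^2L}$ and a standard perturbation step converting the two-sided deviation control at net points into the $(1-\varepsilon)^2(T-L)$ lower bound on $\lambda_{\min}$. One point in your favor: the paper applies Hoeffding to block sums of variables with range $L\beta^4$, which taken literally puts $L^2\beta^8$ in the exponent's denominator, so your alternative one-sided Bernstein step using $\Var\big((\vv^\top(\ubb_t\otimes\vu_{t+1}))^2\big)\le L\beta^4$ is exactly what recovers the stated $(L+1)L\beta^4$ scaling.
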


\begin{remark}
    Note that we can also re-write the statement of Proposition \ref{prop:two-sided-min-eig} as follows: for all $\delta \in(0,1)$, for all $\varepsilon > 0$, the following event 
    \begin{align}
    \lambda_{\min}\left( \sum_{t=L}^{T-1} (\bar{\vu}_{t} \otimes \vu_{t+1}) (\bar{\vu}_{t} \otimes \vu_{t+1})^\top \right) \ge  (1-\varepsilon)^2 (T-L)
    \end{align}
    holds with probability at least $1- \delta$, provided that 
    \begin{align}
    T - L \ge \frac{2(L+1)L \beta^4}{\varepsilon^2}\left( \log\left(\frac{2(L+1)}{\delta} \right) + p^2L \log(9) \right). 
    \end{align}
    Recalling \eqref{eqn:Util_and_y}, this is equivalent to $\lambda_{\min}(\vUtil^\T\vUtil) \geq (1-\varepsilon)^2 (T-L)$, which shows persistence of excitation.
\end{remark}

 \begin{proof} For ease of notation, we introduce $\vUtil = \begin{bmatrix}
\bar{\vu}_{L}  \otimes \vu_{L+1} & \hdots  & \bar{\vu}_{T-1} \otimes \vu_{T} 
\end{bmatrix}^\T$ and note that, 
\begin{align}
    \vUtil^\top \vUtil = \sum_{t=L}^{T-1} (\bar{\vu}_{t} \otimes \vu_{t+1}) (\bar{\vu}_{t} \otimes \vu_{t+1})^\top.
\end{align}
By further noting that $(\bar{\vu}_{t} \otimes \vu_{t+1}) (\bar{\vu}_{t} \otimes \vu_{t+1})^\top = (\bar{\vu}_{t} \bar{\vu}_{t}^\top ) \otimes ( \vu_{t+1} \vu_{t+1}^\top)$, and that $\{\vu_t\}_{t = 0}^T$ are isotropic due to Assumption~\ref{assump:input_proof}, we can also write 
\begin{align}
    \EE[\vUtil^\top \vUtil] = \sum_{t=L}^{T-1} \EE[ (\bar{\vu}_{t} \otimes \vu_{t+1}) (\bar{\vu}_{t} \otimes \vu_{t+1})^\top ] = (T-L) \Iden_{p^2 L}.
\end{align}
Our proof proceeds by establishing a concentration bound on $\Vert \vUtil^\top \vUtil - \EE[\vUtil^\top \vUtil ]\Vert$. We note, by the variational form of the operator norm, that 
\begin{align}
        \left\Vert \vUtil^\top \vUtil - \EE[\vUtil^\top \vUtil]  \right\Vert & = \left\Vert \vUtil^\top \vUtil -  (T-L) \Iden_{p^2 L}  \right\Vert =  \sup_{\vv \in \Scal^{p^2 L -1}}  \left\vert \Vert \vUtil \vv \Vert^2_2 - (T-L)  \right\vert^2.   
\end{align}
The rest of the proof proceeds as follows:

\noindent $\bullet$ {\bf Step 1) Blocking:} Letting $\vv \in \Scal^{p^2 L-1}$, we have 
\begin{align}
    \left\vert \Vert \vUtil \vv \Vert^2_2 - (T-L)  \right\vert  & = \left \vert \sum_{t=L}^{T-1}  \vert (\bar{\vu}_{t}\otimes \vu_{t+1} )^\top \vv \vert^2 - (T- L)    \right \vert, \nn \\
 & = \left \vert \sum_{i=0}^{L} \left(\sum_{k = 1}^{ (T- L)/(L+1)  }    \vert (\bar{\vu}_{k(L+1) + i-1}\otimes \vu_{k(L+1) + i} )^\top \vv \vert^2 - \frac{T- L}{L+1} \right)   \right \vert, \nn \\
 & \le  \sum_{i=0}^{L} \left \vert \sum_{k = 1}^{ (T- L)/(L+1)  }   \left(\vert (\bar{\vu}_{k(L+1) + i - 1}\otimes \vu_{k(L+1) + i} )^\top \vv \vert^2 - 1\right)   \right \vert, \nn  \\
 & = \sum_{i = 0}^L S_i. 
\end{align}
\noindent where we define $ S_i =  \left \vert \sum_{k = 1}^{ (T- L)/(L+1)  }   \left(\vert (\bar{\vu}_{k(L+1) + i - 1}\otimes \vu_{k(L+1) + i} )^\top \vv \vert^2 - 1\right)  \right \vert$ for all $i \in \lbrace 0, \dots, L\rbrace$, and make the simplifying assumption that $T-L$ can be divided by $L+1$. (\emph{Note that, this goes without loss of generality, and we assume it for the sake of clarity. It can be easily avoided by noting that 
$$
\sum_{t=L}^{T-1} (\bar{\vu}_{t} \otimes \vu_{t+1}) (\bar{\vu}_{t} \otimes \vu_{t+1})^\top \succeq \sum_{t=L}^{ (L+1)\lfloor \frac{T-L}{L+1} \rfloor + L} (\bar{\vu}_{t} \otimes \vu_{t+1}) (\bar{\vu}_{t} \otimes \vu_{t+1})^\top, 
$$
where $\lfloor\cdot \rfloor$ denotes the floor operator. We can then analyze everything with $T_0 = (L+1)\lfloor \frac{T-L}{L+1} \rfloor + L$, and note that $ T-L \le  T_0 \le T-1$.}) 

\noindent $\bullet$ {\bf Step 2) Hoeffding's inequality:} Next, let $i \in \lbrace 0, \dots, L\rbrace$, and let us provide a concentration bound on $S_i$. Recall that for all $t \in [T]$, $\tn{\vu_{t}} \leq \beta$ due to Assumption~\ref{assump:input_proof}, and $\tn{\vv} = 1$, thus, we can immediately verify that for all $k \in [(T-L)/(L+1)]$, $0 \le \vert (\bar{\vu}_{k(L+1) + i - 1}\otimes \vu_{k(L+1) + i} )^\top \vv
\vert^2 \leq L \beta^4$. Moreover, we note that $( (\bar{\vu}_{k(L+1) + i - 1}\otimes \vu_{k(L+1) + i} )^\top \vv )_{k \ge 1}$ are independent, thus Hoeffding's inequality applies and we obtain: for all $\rho >0$,
\begin{align}
    \PP\left( S_i \ge \rho  \right) \le 2\exp\left( - \frac{2(L+1)\rho^2}{(T-L)L \beta^4} \right).
\end{align}
Observe that the event $\sum_{i = 0}^{L} S_i > \rho $ is included in the event that there exists $i \in \lbrace 0, \dots, L\rbrace$, $S_i > \rho/(L+1)$. Therefore, we obtain by union bound that for all $\rho \ge 0$,
\begin{align}
     \PP\left(\sum_{i=0}^LS_i > \rho \right) \le  \sum_{i=0}^{L} \PP\left( S_i > \frac{\rho}{L+1}  \right) \le 2(L+1)\exp\left( - \frac{2\rho^2}{(T-L)(L+1)L \beta^4} \right).
\end{align}
Hence, we have just established that for all $\vv \in \Scal^{p^2L-1}$, for all $\rho > 0$, we have 
\begin{align*}
     \PP\left (  \left\vert \Vert \vUtil \vv \Vert^2_2 - (T-L)  \right\vert > \rho  \right )\le 2(L+1)\exp\left( - \frac{2\rho^2}{(T-L)(L+1)L \beta^4} \right).
\end{align*}

\noindent $\bullet$ {\bf Step 3) Covering with $1/4$-net:} Next, we may use a $\epsilon$-net argument to conclude a bound on $\Vert \vUtil^\top  \vUtil - \EE[\vUtil^\top \vUtil] \Vert$. Indeed, by using Lemma 2.5 in \cite{ziemann2023tutorial} with $\epsilon = 1/4$, we obtain that for all $\rho >0$, we have 
\begin{align}
    \PP\left ( \Vert \vUtil^\top  \vUtil - \EE[\vUtil^\top \vUtil] \Vert > \rho  \right ) & \le 9^{p^2 L}\max_{\vv \in \Ncal_{1/4}} \PP\left (  \left\vert \Vert \vUtil \vv \Vert^2_2 - (T-L)  \right\vert > \rho/2  \right ), \nn \\
    & \le 9^{p^2 L} \cdot 2(L+1)\exp\left( - \frac{\rho^2}{2 (T-L)(L+1)L \beta^4} \right),
\end{align}
where $\Ncal_{1/4}$ is a minimal $1/4$-net of the unit sphere $\Scal^{p^2 L-1}$. Reparameterizing $\rho = (T- L)\varepsilon$ and   using Lemma 3 in \cite{jedra2022finite}, we obtain that for all $\varepsilon > 0$, 
\begin{align*}
     \PP\left ( \sigma_{\min}(\vUtil) \ge  (1-\varepsilon)\sqrt{T-L}  \right ) \ge 1 - 9^{p^2L} \cdot 2(L+1)\exp\left( - \frac{(T-L)\varepsilon^2}{2 (L+1)L \beta^4} \right).
\end{align*}    
Recalling that $\lambda_{\min}(\vUtil^\top \vUtil) = \sigma_{\min}^2(\vUtil)$ concludes the proof. 
\end{proof}

\subsubsection{One-sided Concentration Bound -- lower tail of the spectrum}
In this section, we present an alternate persistence of excitation guarantee under very mild assumptions using the proof techniques adapted from~\cite{mania2022time}. Specifically, we work with stochastic inputs $\{\vu_t\}_{t=0}^T$ that satisfy following properties.

\begin{assumptionp}{\ref{assump:input}($\vb$)}(Heavy-tailed inputs)\label{assump:input_proof_onesided} $\{\vu_t\}_{t=0}^T \distas \Dcal_u$ are stochastic  with zero mean $\E[\vu_t]  = 0$, isotropic covariance $\E[\vu_t \vu_t^\T] = \Iden_p$ and there exists a scalar $m_4>0$ such that $ \sup_{\vv \in \Sc^{p^2L-1}} \E[(\vv^\T(\ubb_t  \otimes \vu_{t+1}))^4] \leq m_4$ for all $t \in [T]$, where $\ubb_t := \begin{bmatrix} \vu_{t}^\T & \vu_{t-1}^\T  & \cdots & \vu_{t-L+1}^\T \end{bmatrix}^\T$.
\end{assumptionp}
Assumption~\ref{assump:input_proof_onesided} can be satisfied by many distributions $\Dcal_u$. In Section~\ref{sec:moment_bound_verification}, we show that in the case of Gaussian inputs, Assumption~\ref{assump:input_proof_onesided} holds with $m_4 = 9$. We remark that the assumption of isotropic inputs is for the clarity of exposition, and can be relaxed to input with positive-definite covariance $\E[\vu_t \vu_t^\T] \succ 0$. Using Assumption~\ref{assump:input_proof_onesided}, we get the following lower bound on the smallest eigenvalue of the (scaled) empirical covariance matrix.

\begin{proposition}[Persistence of Excitation]\label{prop:one-sided-min-eig}
Consider a sequence of inputs $\{\vu_t\}_{t=0}^T \distas \Dcal_u$ satisfying Assumptions~\ref{assump:input_proof_onesided}, and let $\ubb_t := \begin{bmatrix} \vu_{t}^\T & \vu_{t-1}^\T  & \cdots & \vu_{t-L+1}^\T \end{bmatrix}^\T$. Suppose the sequence length satisfies the following lower bound,
\begin{align}
    T -L &\geq 32(L+1)m_4\left (\log\left(\frac{2(L+1)}{\delta}\right) +p^2L \log\left(1 + \frac{16 p^2L}{\delta}\right)\right)
\end{align}
Then, for all $\varepsilon \in (0, 1)$, we have 
\begin{align*}
     \PP\left ( \lambda_{\min}\left( \sum_{t=L}^{T-1} (\bar{\vu}_{t} \otimes \vu_{t+1}) (\bar{\vu}_{t} \otimes \vu_{t+1})^\top \right) \ge   (T-L)/4  \right ) \ge 1 - \delta.
\end{align*}  
\end{proposition}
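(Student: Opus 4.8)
The plan is to mirror the structure of the two-sided argument in Proposition~\ref{prop:two-sided-min-eig}, but to replace Hoeffding's inequality (which needed boundedness) with a one-sided Bernstein bound that exploits only the fourth-moment control in Assumption~\ref{assump:input_proof_onesided}, and to close with a net argument tailored to the \emph{lower} tail of the spectrum. Writing $\vUtil^\T\vUtil = \sum_{t=L}^{T-1}(\ubb_t\otimes\vu_{t+1})(\ubb_t\otimes\vu_{t+1})^\T$, I would first fix $\vv\in\Scal^{p^2L-1}$ and decompose $\|\vUtil\vv\|_2^2$ into $L+1$ blocks indexed by $i\in\{0,\dots,L\}$, where block $i$ collects the times $t = k(L+1)+i-1$ for $k=1,\dots,(T-L)/(L+1)$. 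The purpose of this blocking is that consecutive summands within a block depend on disjoint length-$(L+1)$ windows of the i.i.d.\ inputs, so the within-block terms $X_k := (\vv^\T(\ubb_{t}\otimes\vu_{t+1}))^2$ are independent, non-negative, have mean $\E[X_k]=1$ (by isotropy), and second moment $\E[X_k^2]\le m_4$ (by Assumption~\ref{assump:input_proof_onesided}).

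Next, on each block I would apply a one-sided Bernstein inequality for non-negative random variables: inserting $e^{-u}\le 1-u+u^2/2$ (valid for $u\ge 0$) into a Chernoff bound shows that the lower tail of $\sum_k X_k$ is sub-Gaussian and controlled by the second moment $m_4$ alone, giving $\P(\sum_{k}X_k \le N - s)\le \exp(-s^2/(2Nm_4))$ with $N=(T-L)/(L+1)$. This is precisely the step where only a fourth moment (rather than boundedness or sub-Gaussianity) is needed, and it is responsible for the favourable $\log(1/\delta)$ dependence noted in the surrounding discussion. A union bound over the $L+1$ blocks then yields, for each fixed $\vv$, a lower-tail bound on $\|\vUtil\vv\|_2^2$; choosing the deviation to be a constant fraction of the mean gives $\|\vUtil\vv\|_2^2 \ge (T-L)/2$ except with probability $(L+1)\exp(-(T-L)/(8(L+1)m_4))$.

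Finally I would discretize the sphere. Union-bounding the previous estimate over a minimal $\epsilon$-net $\Ncal$ of $\Scal^{p^2L-1}$, of cardinality at most $(1+2/\epsilon)^{p^2L}$, controls $\min_{\vv\in\Ncal}\vv^\T\vUtil^\T\vUtil\vv$. The remaining and most delicate step is the passage from the net to the full sphere for a \emph{lower} bound, which, unlike the two-sided case, cannot rely on a sharp operator-norm estimate. I would instead use the elementary inequality $\lambda_{\min}(\vUtil^\T\vUtil)\ge \min_{\vv\in\Ncal}\vv^\T\vUtil^\T\vUtil\vv - 2\epsilon\,\lambda_{\max}(\vUtil^\T\vUtil)$ together with a crude high-probability bound $\lambda_{\max}(\vUtil^\T\vUtil)\le \tf{\vUtil}^2$ from Markov's inequality, since $\E[\tf{\vUtil}^2]=(T-L)p^2L$. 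Taking the net very fine, $\epsilon \asymp \delta/(p^2L)$ (which is exactly what produces the $p^2L\log(1+16p^2L/\delta)$ term in the stated sample-complexity condition), makes the correction $2\epsilon\lambda_{\max}$ at most $(T-L)/4$, so that $\lambda_{\min}(\vUtil^\T\vUtil)\ge (T-L)/2-(T-L)/4=(T-L)/4$; balancing the failure probabilities of the net-minimum event and the Markov event against $\delta$ then gives the claimed threshold on $T-L$. The main obstacle I anticipate is this final one-sided net step: reconciling the need to control $\lambda_{\max}$ for the discretization with the fact that, under only a fourth-moment assumption, no cheap sub-Gaussian upper-tail bound on $\lambda_{\max}$ is available, which is what forces both the extremely fine net and the use of a loose Markov bound in place of a concentration bound.
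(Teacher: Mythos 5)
Your proposal is correct and follows essentially the same route as the paper's proof: the same blocking into $L+1$ blocks of independent length-$(L+1)$ windows, the same one-sided Bernstein bound $\exp(-s^2/(2Nm_4))$ for the non-negative summands with mean $1$ and second moment $m_4$, and the same resolution of the one-sided net step via a Markov bound $\|\vUtil^\T\vUtil\| \leq \E[\tr(\vUtil^\T\vUtil)]/(\delta/2)$-type threshold combined with the fine net $\epsilon \asymp \delta/(p^2L)$, which is exactly what produces the $p^2L\log(1+16p^2L/\delta)$ term in the paper. The only differences are immaterial constant-bookkeeping choices (e.g., where the fractions $1/2$ and $1/4$ are allocated between the Bernstein deviation and the net correction).
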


\begin{proof}
To begin, recall that
\begin{align}
	\vUtil^\T \vUtil = \sum_{t = L}^{T-1} (\ubb_t  \otimes \vu_{t+1}) (\ubb_t^\T \otimes \vu_{t+1}^\T) = \sum_{t = L}^{T-1} \ubb_t \ubb_t^\T \otimes \vu_{t+1}\vu_{t+1}^\T =: \sum_{t = L}^{T-1} \vutil_t \vutil_t^\T.
\end{align}
Then, using Assumption~\ref{assump:input_proof_onesided}, we have
\begin{align}
	\E[\vUtil^\T \vUtil] &= \sum_{t = L}^{T-1} \E[\ubb_t \ubb_t^\T] \otimes \E[\vu_{t+1}\vu_{t+1}^\T] = \sum_{t = L}^{T-1} \Iden_{pL} \otimes \Iden_{p} = (T-L) \Iden_{p^2L}.
\end{align}
Next, letting $\vv \in \Scal^{p^2L-1}$, consider the quantity,
\begin{align}
	\vv^\T \vUtil^\T \vUtil \vv = \sum_{t = L}^{T-1} (\vv^\T\vutil_t)^2,
\end{align}
and note that the above quantity can be viewed as a summation of the random process $\{(\vv^\T\vutil_t)^2\}_{t = L}^{T-1}$. In the following we will derive a one-sided concentration bound for this random process. The rest of the proof proceeds as follows:

\noindent $\bullet$ {\bf Step 1) Blocking:} We begin by using blocking technique, similar to the proof of Proposition~\ref{prop:two-sided-min-eig}, to get independent samples, as follows,
\begin{align}
	\sum_{t = L}^{T-1} (\vv^\T\vutil_t)^2 = \sum_{k = 0}^{L} \sum_{\tau = 1}^{(T-L)/(L+1)} (\vv^\T\vutil_{\tau (L+1) + k-1})^2,
\end{align}
where we make the simplifying assumption that $T-L$ can be divided by $L+1$. Recall that, this can be relaxed as discussed in the proof of Proposition~\ref{prop:two-sided-min-eig}. 

\noindent $\bullet$ {\bf Step 2) Bernstein's inequality for non-negative random variables:} Before we apply Bernstein's inequality, we upper bound the mean and second moment of the random variable $(\vv^\T\vutil_t)^2$ using Assumption~\ref{assump:input_proof_onesided} as follows, 
\begin{align}
	\E[(\vv^\T\vutil_t)^2] &= \E[\vv^\T\vutil_t \vutil_t^\T \vv] = \vv^\T\E[\vutil_t \vutil_t^\T]\vv = \vv^\T \Iden_{p^2L} \vv = 1. \\
 \text{and} \quad \E[(\vv^\T\vutil_t)^4] &= \E[(\vv^\T(\ubb_t \otimes \vu_{t+1}))^4] \leq m_4. 
\end{align}
Then, using one-sided Bernstein's inequality for non-negative random variables, we have
\begin{align}
	\P \bigg(\sum_{\tau = 1}^{(T-L)/(L+1)} \big((\vv^\T\vutil_{\tau (L+1) + k-1})^2 - \E[(\vv^\T\vutil_{\tau (L+1) + k-1})^2]\big)  \leq - \frac{T-L}{(L+1)}z\bigg) &\leq \exp\bigg(-\frac{(T-L)z^2}{2(L+1)m_4}\bigg), \\
	\implies \quad \P \bigg(\sum_{\tau = 1}^{(T-L)/(L+1)} (\vv^\T\vutil_{\tau (L+1) + k -1})^2  \leq \frac{T-L}{L} (1-z)\bigg) &\leq \exp\bigg(-\frac{(T-L)z^2}{2(L+1)m_4}\bigg).
\end{align} 
Union bounding $L+1$ such events, we obtain, 
\begin{align}
	\P \bigg(\sum_{k = 0}^{L}\sum_{\tau = 1}^{(T-L)/(L+1)} (\vv^\T\vutil_{\tau (L+1) + k -1})^2  \leq (T-L) (1-z)\bigg) &\leq (L+1)\exp\bigg(-\frac{(T-L)z^2}{2(L+1)m_4}\bigg). \label{eqn:one_sided_bernstein}
\end{align}
This can be alternately represented by letting
\begin{align}
	(L+1)\exp\bigg(-\frac{(T-L)z^2}{2(L+1)m_4}\bigg) &= \delta, \nn \\
	\iff \frac{(T-L)z^2}{2(L+1)m_4} &= \log((L+1)/\delta), \nn \\
	\impliedby z &= \sqrt{\frac{L+1}{T-L}2 m_4\log((L+1)/\delta)}.
\end{align}
Hence, we have
\begin{align}
	\P \bigg(\sum_{t = L}^{T-1} (\vv^\T\vutil_t)^2  \leq (T-L) -\sqrt{2m_4(L+1)(T-L)\log((L+1)/\delta)}\bigg) &\leq \delta.
\end{align}

\noindent $\bullet$ {\bf Step 3) Covering with ${\delta}/{(8p^2L)}$-net:} Next, we will use a covering argument to lower bound the minimum eigenvalue of the (scaled) empirical covariance matrix as follows: Let $\Ncal_\epsilon := \{\vv_1, \vv_2, \dots, \vv_{|\Ncal_\epsilon|} \} \subset \Scal^{p^2L-1}$ be the $\epsilon$-net of $\Scal^{p^2L-1}$ such that for any $\vv \in \Scal^{p^2L-1}$, there exists $\vv_i \in \Ncal_\epsilon$ such that $\tn{\vv - \vv_i} \leq \epsilon$. From Lemma 5.2 of \cite{vershynin2010introduction}, we have $|\Ncal_\epsilon| \leq (1 + 2/ \epsilon)^{p^2L}$. 

Let us choose $\vv \in \Scal^{p^2L-1}$ for which $\lambda_{\min} (\vUtil^\T \vUtil) = \vv^\T \vUtil^\T \vUtil \vv$, and choose $\vv_i \in \Ncal_\epsilon$ which approximates $\vv$ as $\tn{\vv - \vv_i} \leq \epsilon$. By triangle inequality, we have
\begin{align}
	|\vv^\T \vUtil^\T \vUtil \vv  - \vv_i^\T \vUtil^\T \vUtil \vv_i| &= |\vv^\T \vUtil^\T \vUtil ( \vv - \vv_i) + (\vv - \vv_i)^\T \vUtil^\T \vUtil \vv_i|, \nn \\
	& \leq \norm{\vUtil^\T \vUtil} \tn{\vv} \tn{\vv - \vv_i} + \norm{\vUtil^\T \vUtil} \tn{\vv_i} \tn{\vv - \vv_i}, \nn \\
	& \leq 2 \epsilon \norm{\vUtil^\T \vUtil}. 
\end{align}
This further implies,
 \begin{align}
     \vv^\T \vUtil^\T \vUtil \vv \geq  \vv_i^\T \vUtil^\T \vUtil \vv_i - 2 \epsilon \norm{\vUtil^\T \vUtil}, \nn \\
     \implies \lambda_{\min}(\vUtil^\T \vUtil ) \geq \inf_{\vv_i \in \Ncal_\epsilon} \vv_i^\T \vUtil^\T \vUtil \vv_i - 2 \epsilon \norm{\vUtil^\T \vUtil}.
 \end{align}
Hence, in order to lower bound $\lambda_{\min}(\vUtil^\T \vUtil )$, we also need an upper bound on $\norm{\vUtil^\T \vUtil}$. This can be done as follows: First, we have 
\begin{align}
	\E[\norm{\vUtil^\T \vUtil}] = \E[\lambda_{\max}(\vUtil^\T \vUtil)] \leq \E[\tr(\vUtil^\T\vUtil)] = \tr\big(\E[\vUtil^\T \vUtil]\big) = (T-L)\tr\big(\Iden_{p^2L}\big) = p^2L(T-L).
\end{align}
Then, using Markov's inequality, we have
\begin{align}
	\P\bigg( \norm{\vUtil^\T \vUtil} > \frac{p^2L(T-L)}{\delta}\bigg) \leq  \frac{\E[ \norm{\vUtil^\T \vUtil} ]}{p^2L(T-L)} \delta \leq \delta.
\end{align}
Let  $\Ecal := \{\norm{\vUtil^\T \vUtil}  \leq \frac{2p^2L(T-L)}{\delta}\}$ denote the event that $\norm{\vUtil^\T \vUtil}$ is bounded by the specified threshold. Then, it is straightforward to see that $\P(\Ecal) \geq 1 - \delta/2$. This also implies,
\begin{align}
	\P\bigg(\lambda_{\min}(\vUtil^\T\vUtil) < (T-L)(1/2-z)\bigg)  &\leq \P\bigg(\big\{\lambda_{\min}(\vUtil^\T\vUtil) < (T-L)(1/2-z)\big\} \bigcap \Ecal \bigg) + \P (\Ecal^c), \nn \\
	&\leq \P\bigg(\big\{\inf_{\vv_i \in \Ncal_\epsilon} \vv_i^\T \vUtil^\T \vUtil \vv_i - 2 \epsilon \norm{\vUtil^\T \vUtil} < (T-L)(1/2-z)\big\} \bigcap \Ecal \bigg) + \delta/2, \nn \\
	&\leq \P\bigg(\inf_{\vv_i \in \Ncal_\epsilon} \vv_i^\T \vUtil^\T \vUtil \vv_i  < (T-L)(1/2-z) +  4 \epsilon\frac{p^2L(T-L)}{\delta}\bigg) + \delta/2, \nn \\
	&= \P\bigg(\inf_{\vv_i \in \Ncal_\epsilon} \vv_i^\T \vUtil^\T \vUtil \vv_i  < (T-L)(1/2-z +  \frac{4 \epsilon p^2L}{\delta})\bigg) + \delta/2, \nn \\
	&= \P\bigg(\inf_{\vv_i \in \Ncal_\epsilon} \vv_i^\T \vUtil^\T \vUtil \vv_i  < (T-L)(1-z )\bigg) + \delta/2, \label{eqn:before_union_bound}
\end{align}
where we obtained the last inequality by choosing $\epsilon = \frac{\delta}{8p^2L}$. Using~\eqref{eqn:one_sided_bernstein} with union bounding over all the elements in $\Ncal_\epsilon$, we obtain,
\begin{align}
	 \P\bigg(\inf_{\vv_i \in \Ncal_\epsilon} \vv_i^\T \vUtil^\T \vUtil \vv_i  < (T-L)(1-z )\bigg) &\leq |\Ncal_\epsilon| (L+1)\exp\bigg(-\frac{(T-L)z^2}{2(L+1)m_4}\bigg).
\end{align}
 From Lemma 5.2 of \cite{vershynin2010introduction}, we have $|\Ncal_\epsilon| \leq (1 + 2/ \epsilon)^{p^2L}$. Hence, we have
\begin{align}
    \P\bigg(\inf_{\vv_i \in \Ncal_\epsilon} \vv_i^\T \vUtil^\T \vUtil \vv_i  < (T-L)(1-z )\bigg) &\leq (1 + \frac{16 p^2L}{\delta})^{p^2L} \cdot (L+1)\exp\bigg(-\frac{(T-L)z^2}{2(L+1)m_4}\bigg).
\end{align}
This can be alternately represented by letting,
\begin{align}
	(1 + \frac{16 p^2L}{\delta})^{p^2L} \cdot (L+1)\exp\bigg(-\frac{(T-L)z^2}{2(L+1)m_4}\bigg) &= \delta/2, \nn \\
	 \iff \exp\bigg(-\frac{(T-L)z^2}{2(L+1)m_4}\bigg) &= \delta/(2(L+1)) \cdot (1 + \frac{16 p^2L}{\delta})^{-p^2L}, \nn \\
	 \iff \frac{(T-L)z^2}{2(L+1)m_4} &= \log(2(L+1)/\delta) +p^2L \log\big(1 + \frac{16 p^2L}{\delta}\big), \nn \\
	\impliedby z &= \mysqrt{\frac{2(L+1)m_4}{(T-L)}\big(\log\big(\frac{2(L+1)}{\delta}\big) +p^2L \log\big(1 + \frac{16 p^2L}{\delta}\big)\big)}.
\end{align}
Plugging this back into \eqref{eqn:before_union_bound}, we have
\begin{align}
		\P\bigg(\lambda_{\min}(\vUtil^\T\vUtil) < (T-L)/2 -   \sqrt{2(L+1)(T-L)m_4\big(\log\big(\frac{2(L+1)}{\delta}\big) +p^2L \log\big(1 + \frac{16 p^2L}{\delta}\big)\big)}\bigg) \leq \delta  
\end{align}
Finally, choosing the trajectory length via
\begin{align}
	(T-L)/4 &\geq \sqrt{2(L+1)(T-L)m_4\big(\log\big(\frac{2(L+1)}{\delta}\big) +p^2L \log\big(1 + \frac{16 p^2L}{\delta}\big)\big)}, \nn \\
	 \iff (T-L)/16 &\geq 2(L+1)m_4\big(\log\big(\frac{2(L+1)}{\delta}\big) +p^2L \log\big(1 + \frac{16 p^2L}{\delta}\big)\big), \nn \\
	 \impliedby T -L &\geq 32(L+1)m_4\big(\log\big(\frac{2(L+1)}{\delta}\big) +p^2L \log\big(1 + \frac{16 p^2L}{\delta}\big)\big),
\end{align}
we have  
\begin{align}
	\P\big(\lambda_{\min}(\vUtil^\T\vUtil) \geq (T-L)/4 \big) \geq 1 - \delta.  
\end{align}
This completes the proof.
\end{proof}

\subsubsection{Proof of Theorem~\ref{thrm:learning_markov_par_data_IND}} \label{sec:proof_learning_markov_par_data_IND}
\begin{proof}
    The proof of Theorem~\ref{thrm:learning_markov_par_data_IND} relies on combining The proofs of Theorems~\ref{thrm:learning_markov_par_main} and \ref{thrm:persistence_of_excitation}. We begin by defining the event $\Ecal := \{\lambda_{\min}\big(\sum_{\tau = L}^{T-1}\ubb_{\tau}\ubb_{\tau}^\T \otimes \vu_{\tau+1}\vu_{\tau+1}^\T\big) \geq (T-L)/4\}$. Then, under the conditions of Theorem~\ref{thrm:persistence_of_excitation}, we have $\P(\Ecal) \geq 1 -\delta$. Next, recalling the definition of $\Xi$ from \eqref{eqn:Xi_def_proof}, we have
    \begin{align}
        &\P \left( \tf{\vGhat - \vG} > \mysqrt{\frac{4 p^2 L \Xi}{\delta(T-L)}} + 2\beta^2 \norm{\vB} \norm{\vC}  \frac{\phi(\vA,\rho)\rho^L}{1 - \rho}\right) \nn \\
        &\qquad\leq \P \left( \left\{\tf{\vGhat - \vG} > \mysqrt{\frac{4 p^2 L \Xi}{\delta(T-L)}} + 2\beta^2 \norm{\vB} \norm{\vC}  \frac{\phi(\vA,\rho)\rho^L}{1 - \rho}\right\} \bigcap \Ecal\right) + \P (\Ecal^c), \nn \\
        &\qquad \leq \P \bigg(\tf{\vGhat - \vG} \leq \mysqrt{\frac{p^2 L \Xi}{\delta \lambda_{\min}\big(\sum_{\tau = L}^{T-1}\ubb_{\tau}\ubb_{\tau}^\T \otimes \vu_{\tau+1}\vu_{\tau+1}^\T\big)}} + \frac{\beta^2 \norm{\vB} \norm{\vC}  \frac{\phi(\vA,\rho)\rho^L}{1 - \rho} \sqrt{T-L}}{\sqrt{\lambda_{\min}\big( \sum_{\tau = L}^{T-1} \ubb_{\tau}\ubb_{\tau}^\T \otimes \vu_{\tau+1}\vu_{\tau+1}^\T \big)}}\bigg) + \delta, \nn \\
        & \qquad \leqsym{a} 2 \delta,
    \end{align}
    where we obtain (a) from \eqref{eqn:est_error_ell2_v1} in Section~\ref{sec:proof_learning_markov_param}. Hence, under the conditions of Theorems~\ref{thrm:learning_markov_par_main} and \ref{thrm:persistence_of_excitation}, with probability at least $1-2\delta$, we have
    $
        \tf{\vGhat - \vG} \leq  \sqrt{9 p^2 L \Xi}/\sqrt{\delta(T-L)},
    $
    provided that we choose $L$ via,
    \begin{align}
        2\beta^2 \norm{\vB} \norm{\vC}  \frac{\phi(\vA,\rho)\rho^L}{1 - \rho} &\leq \mysqrt{\frac{p^2 L \Xi}{\delta(T-L)}}, \nn\\
        \iff \rho^L &\leq \frac{(1-\rho)}{ 2\beta^2 \norm{\vB} \norm{\vC} \phi(\vA,\rho)}  \mysqrt{\frac{p^2 L \Xi}{\delta(T-L)}}, \nn \\
        \impliedby L &\geq \frac{1}{2 \log (\rho^{-1})}\log \left( \frac{(T-L) \delta}{p^2L \Xi}\right) + \frac{1}{\log(\rho^{-1})} \log \left( \frac{2\beta^2 \norm{\vB} \norm{\vC} \phi(\vA,\rho)}{1- \rho}\right).
    \end{align}
finally replacing $\delta$ with $\delta/2$ gives the statement of the theorem.
\end{proof}

\subsection{State-space Parameter Estimation}\label{sec:proof_Ho_Kalman}
 \subsubsection{Proof of Theorem~\ref{thrm:Ho-Kalman}}
 \begin{proof}
The proof is similar to that in \cite{oymak2021revisiting}, except that it is slightly modified to get the upper bounds in terms of $\tf{\vG - \vGhat}$ instead of $\norm{\vG -\vGhat}$. Recalling Algorithm 1 in \cite{oymak2018non}, let
\begin{itemize}
	\item $\vH, \vH^-, \vH^+, \vL, \vO, \vQ$ be the matrices corresponding to $\vG$.
	\item $\vHhat, \vHhat^-, \vHhat^+, \vLhat, \vOhat, \vQhat$ be the matrices corresponding to $\vGhat$.
	\item $\vAbar, \vBbar, \vCbar$ be the output of Ho-Kalman Algorithm with input $\vH$.
	\item $\vAhat, \vBhat, \vChat$ be the output of Ho-Kalman Algorithm with input $\vHhat$.
\end{itemize}

\noindent $\bullet$ {\bf Step (1)} \emph{Perturbation bounds for $\vH, \vHhat$ and $\vL, \vLhat$:}
Let $\vH[i] - \vHhat[i] \in \R^{p \times p(L/2+1)}$ denote the $i$-th block row of $\vH - \vHhat$. Since each $\vH[i] - \vHhat[i]$ is a sub-matrix of the Markov parameters matrix difference $\vG - \vGhat$, we have $\tf{\vH[i] - \vHhat[i]} \leq \tf{\vG - \vGhat}$. This further implies,
\begin{align}
	\tf{\vH -  \vHhat}^2 \leq \sum_{i = 1}^{L/2} \tf{\vH[i] - \vHhat[i]}^2 \leq L/2 \tf{\vG - \vGhat}^2. \label{eqn:H_square}
\end{align}
Since $\vH^- - \vHhat^-$ and $\vH^+ - \vHhat^+$ are the sub-matrices of $\vH - \vHhat$, therefore, we have
\begin{align}
	\tf{\vH^- - \vHhat^-} &\leq  \tf{\vH -  \vHhat} \leq \sqrt{L/2} \tf{\vG - \vGhat}. \label{eqn:H_minus_v1} \\
	\tf{\vH^+ - \vHhat^+} &\leq  \tf{\vH -  \vHhat} \leq \sqrt{L/2} \tf{\vG - \vGhat}. \label{eqn:H_pluss_v1}
\end{align}
To proceed, recall that $\vL = \vH^-$ by definition and $\vLhat$ is the best rank-$n$ approximations of $\vHhat^-$. This implies that $\tf{\vHhat^- - \vLhat} \leq \tf{\vHhat^- - \vM}$, for any rank-$n$ matrix $\vM$. Choose $\vM = \vH^-$, we have
\begin{align}
	\tf{\vL - \vLhat} \leq \tf{\vL - \vHhat^-} + \tf{\vHhat^- - \vLhat} \leq \tf{\vH^- - \vHhat^-} + \tf{\vHhat^- - \vH^-}    \leq \sqrt{2L} \tf{\vG - \vGhat}. \label{eqn:L_perturbation_final}
\end{align}

\noindent $\bullet$ {\bf Step (2)} \emph{Perturbation bounds for $\vObar, \vOhat$ and $\vQbar, \vQhat$:} Let $\vL, \vLhat$ have the singular value decomposition $\vL = \vU \vSigma \vV^*$, and $\vLhat = \vUhat \hat\vSigma \vVhat^*$. Then. from Algorithm 1 in \cite{oymak2021revisiting}, we have $\vObar = \vU \vSigma^{1/2}$, $\vQbar = \vSigma^{1/2}\vV^*$, $\vOhat = \vUhat \hat\vSigma^{1/2}$, and $\vQhat = \hat\vSigma^{1/2}\vVhat^*$. Let $\vT$ be an $n \times n$ unitary matrix. Then, applying Lemma 5.14 of \cite{tu2015low} with the robustness condition $\sigma_{\min}(\vL) \geq 2\norm{\vL - \vLhat}$, we get
\begin{align}
	\tf{\vObar - \vOhat\vT}^2 + \tf{\vQbar - \vT^{-1}\vQhat}^2 \leq \frac{2}{\sqrt{2} - 1} \frac{\tf{\vL - \vLhat}^2}{\sigma_{\min}(\vL)}.  \label{eqn:direct_tu_application} 
\end{align} 
Combining this with \eqref{eqn:L_perturbation_final}, we have
\begin{align}
	\tf{\vObar - \vOhat\vT}^2 + \tf{\vQbar - \vT^*\vQhat}^2 \leq 10L\frac{\tf{\vG - \vGhat}^2}{\sigma_{\min}(\vL)}.  \label{eqn:OQ_perturbation_final} 
\end{align} 

\noindent $\bullet$ {\bf Step (3)} \emph{Perturbation bounds for $\vBbar, \vBhat$, and $\vCbar, \vChat$:}
Since $\vCbar - \vChat\vT$ is a sub-matrix of $\vObar - \vOhat\vT$, and $\vB - \vT^*\vB$ is a sub-matrix of $\vQbar - \vT^*\vQhat$, it immediately follows that,
\begin{align}
	\tf{\vCbar - \vChat \vT} &\leq \tf{\vObar - \vOhat\vT} \leq \sqrt{\frac{10L}{\sigma_{\min}(\vL)}} \tf{\vG - \vGhat}, \\
	\tf{\vBbar - \vT^*\vBhat} &\leq \tf{\vQbar - \vT^*\vQhat} \leq \sqrt{\frac{10L}{\sigma_{\min}(\vL)}} \tf{\vG - \vGhat}.
\end{align}

\noindent $\bullet$ {\bf Step (4)} \emph{Perturbation bounds for $\vAbar, \vAhat$:} Let $\vX = \vOhat\vT$, and $\vY = \vT^* \vQhat$. Then, from equation (30-31) in \cite{oymak2021revisiting}, we have

\begin{align}
	\tf{\vAbar - \vT^* \vAhat \vT} \leq \tf{(\vObar^\dagger - \vX^\dagger)\vH^+ \vQbar^\dagger} + \tf{\vX^\dagger(\vH^+ - \vHhat^+)\vQbar^\dagger} + \tf{\vX^\dagger \vHhat^+(\vQbar^\dagger - \vY^\dagger)}. \label{eqn:A_perturbation_split}
\end{align}
From Appendix F in~\cite{oymak2021revisiting}, we have
\begin{align}
	\tf{(\vObar^\dagger - \vX^\dagger)\vH^+ \vQbar^\dagger} &\leq \tf{\vObar^\dagger - \vX^\dagger} \norm{\vH^+}\norm{ \vQbar^\dagger} \leq \tf{\vObar - \vX} \max\{\norm{\vX^\dagger}^2, \norm{\vObar^\dagger}^2\} \norm{\vH^+}\norm{ \vQbar^\dagger}, \nn \\
	&\leq \sqrt{\frac{10L}{\sigma_{\min}(\vL)}} \tf{\vG - \vGhat}\big(\frac{2}{\sigma_{\min}(\vL)}\big)\big(\sqrt{\frac{2}{\sigma_{\min}(\vL)}}\big)\norm{\vH^+}, \nn \\
	&\leq \frac{9\sqrt{L}\norm{\vH}}{\sigma_{\min}(\vL)^2} \tf{\vG - \vGhat}. \label{eqn:A_split_first_final}
\end{align}
Next, from Appendix F in~\cite{oymak2021revisiting}, we also have
\begin{align}
	\tf{\vX^\dagger(\vH^+ - \vHhat^+)\vQbar^\dagger} &\leq \tf{\vH^+ - \vHhat^+}\norm{\vX^\dagger}\norm{\vQbar^\dagger} \leq \frac{\sqrt{2L}}{\sigma_{\min}(\vL)} \tf{\vG - \vGhat}. \label{eqn:A_split_second_final} \\
\text{and,} \;\; \tf{\vX^\dagger \vHhat^+(\vQbar^\dagger - \vY^\dagger)} &\leq  \tf{\vQbar^\dagger - \vY^\dagger} \norm{\vHhat^+}\norm{ \vX^\dagger} \leq \frac{9\sqrt{L}\norm{\vHhat}}{\sigma_{\min}(\vL)^2} \tf{\vG - \vGhat}. \label{eqn:A_split_third_final} 
\end{align}
Combining \eqref{eqn:A_split_first_final}, \eqref{eqn:A_split_second_final}, and \eqref{eqn:A_split_third_final} into \eqref{eqn:A_perturbation_split}, and using Appendix G in~\cite{oymak2021revisiting} we have
\begin{align}
		\tf{\vAbar - \vT^* \vAhat \vT} \leq  \big(\frac{9\sqrt{L}(\norm{\vH}+\norm{\vHhat })}{\sigma_{\min}(\vL)^2} + \frac{\sqrt{2L}}{\sigma_{\min}(\vL)}\big) \tf{\vG - \vGhat}
\end{align}
Observe that the robustness condition is satisfied by assuming $\tf{\vG - \vGhat} \leq \frac{\sigma_{\min}(\vL)}{2\sqrt{2L}}$
\end{proof}

\section{Conclusion and Discussion}\label{sec:conclusion}
We provide the first non-asymptotic error bounds and sample complexity analysis for learning a realization of a partially observed dynamical system with linear state transitions and bilinear observations, given a single finite trajectory of input-output samples. Under stability and very mild assumptions on the process and measurement noises, we provide both data-dependent error bound, precisely capturing the shape of uncertainty, and data independent error bound, optimally scaling as $\tilde{\Ocal}(1/\sqrt{T-L})$. Our results show that despite getting heavy-tailed covariates and overall noise process due to bilinear observation, the non-asymptotic error bounds for bilinear-observation system are similar to that of the standard LTI system with partial state observation, except some additional dependencies arise because of using heavy-tailed tools.  Of independent interest, we also establish persistence of excitation results for the heavy-tailed and correlated covariates used to learn the system's Markov parameters. Finally, numerical experiments corroborate our theory and highlight a trade-off between the estimation error and the system's memory. 

There are several open questions leading to interesting directions for future work. 
First, can our method be extended to linear/bilinear systems which are marginally stable, i.e. the spectral radius $\rho(\vA)=1$? 
Our current estimator~\eqref{eqn:ERM_Ghat} is biased by the state $L$ steps in the past, which is not guaranteed to shrink if the dynamics are only marginally stable.
One possible approach for overcoming this challenge is to build an estimator that regresses the output against a finite history of inputs \emph{and} outputs.
Such an estimator arises naturally from the predictor form of the Kalman filter, which has been used for identification in the LTI setting~\cite{tsiamis2019finite,lale2020logarithmic}.
In our linear/bilinear setting, the predictor form involves \emph{bilinear} transitions,
so it is not straightforward to immediately extend this technique.
Alternatively, estimators can be constructed from instrumental variables in a two stage regression process. This perspective has been successfully used for LTI identification~\cite{simchowitz2019learning,bakshi2023new}, but its extension to the linear/bilinear setting is similarly nontrivial.
Lastly, spectral filtering is an alternative method for handling marginal stability in LTI systems, though it has been used mainly for prediction rather than identification~\cite{hazan2017learning,hazan2018spectral}.

Second, what is the best way to make \emph{predictions} in this linear/bilinear setting?
When the parameters $(A,B,C)$ are known and the inputs are observed, filtering techniques for linear time-varying (LTV) systems are directly applicable.
It is of interest to characterize the sensitivity of these methods to estimation errors, or to develop robust methods, in order to analyze the sample complexity of state estimation and output prediction
when the dynamics are unknown. 
Another interesting direction is to investigate filtering approaches that can use estimated Markov parameter directly, sidestepping the state-space recovery step~\cite{vincent2023prediction}.
Lastly, since the observations are bilinearly influenced by inputs, it is interesting to consider how the choice of inputs can enhance (or diminish) achievable prediction accuracy.
This line of inquiry is related to classic problems in sensor design and placement~\cite{tanaka2016semidefinite,tzoumas2016sensor}.

Finally, what is the best achievable \emph{control performance} given limited data from a linear/bilinear system?
This is a rich but challenging question:
even if the dynamics are known, the optimal control problem does not have a straightforward tractable solution, even for simple linear or quadratic costs.
Due to the impact of inputs on observations, 
the separation principle does not hold and there is a direct trade-off between inputs which are useful for estimating versus controlling the state.

\section*{Acknowledgement}
We thank Samet Oymak, Max Simchowitz,
Mahdi Soltanolkotabi, Stephen Tu, Ingvar Zieman, and Leonardo F. Toso for helpful discussions.
This work was partly funded by NSF CCF 2312774, NSF OAC-2311521,
a LinkedIn Research Award, and a gift from Wayfair. Yassir Jedra is generously supported by the Knut and Alice Wallenberg Foundation Postdoctoral Scholarship Program at MIT- KAW 2022.0366.

\newpage
{
    \bibliographystyle{IEEEtran}
	\bibliography{Bibfiles}
}

\appendix

\section{Proofs of Supporting Results}
\subsection{Proof of Lemma~\ref{lemma:auto_covariance}} \label{sec:proof_of_auto_covariance}
\begin{proof}
    We begin our proof by first estimating the conditional mean of the effective noise process $\{\zeta_{\tau+1}\}_{\tau = L}^{T-1}$, as follows,
    \begin{align}
	\E \big[\zeta_{\tau+1} \bgl \vu_{1:T} \big] &= \E \big[\wbb_{\tau}^\T \otimes \vu_{\tau+1}^\T \vek(\vF)  + \vu_{\tau+1}^\T\ve_{\tau+1} + z_{\tau+1} \bgl \vu_{1:T} \big], \nn \\
	&= \vu_{\tau+1}^\T \E \big[ \ve_{\tau+1} \bgl \vu_{1:T}\big] =  \vu_{\tau+1}^\T\vC \Ab^{L} \E \big[ \xb_{\tau-L+1} \bgl \vu_{1:T}\big], \nn \\
	&=  \vu_{\tau+1}^\T\vC \Ab^{L}  \sum_{i=0}^{\tau-L}\vA^{\tau-L-i}\vB \vu_i = \vu_{\tau+1}^\T\vC \Ab^{L}  \sum_{i=0}^{\tau-L}\vA^{\tau-L-i}\vB \vu_i, \nn \\
	&= \vu_{\tau+1}^\T\sum_{i=0}^{\tau-L}\vC\vA^{\tau-i}\vB \vu_i. \label{eqn:E_zeta_tau_plus}
\end{align}
where we used the fact that $\E [\wb_\tau \bgl \vu_{1:T}] = 0$, $\E [\wbb_\tau \bgl \vu_{1:T}] = 0$, $\E [z_{\tau+1} \bgl \vu_{1:T}] = 0$, and $\vx_0 = 0$ due to Assumption~\ref{assump:data/noise}. Next, we estimate the conditional auto-correlation function of the effective noise process $\{\zeta_{\tau+1}\}_{\tau = L}^{T-1}$, as follows,
\begin{align}
	&\E\big[\zeta_{\tau+1} \zeta_{\tau'+1} \bgl \vu_{1:T}\big] \nn \\ 
	&= \E\big[(\wbb_{\tau}^\T \otimes \vu_{\tau+1}^\T \vek(\vF)  + \vu_{\tau+1}^\T\ve_{\tau+1} + z_{\tau+1})(\wbb_{\tau'}^\T \otimes \vu_{\tau'+1}^\T \vek(\vF)  + \vu_{\tau'+1}^\T\ve_{\tau'+1} + z_{\tau'+1})\bgl \vu_{1:T}\big], \nn \\
	& = \vek(\vF)^\T \E\big[(\wbb_{\tau} \otimes \vu_{\tau+1})(\wbb_{\tau'}^\T \otimes \vu_{\tau'+1}^\T) \bgl \vu_{1:T}\big]\vek(\vF) + \E\big[\vu_{\tau+1}^\T\ve_{\tau+1}\ve_{\tau'+1}^\T\vu_{\tau'+1} \bgl \vu_{1:T}\big] \nn \\
	&\qquad\qquad + \E\big[z_{\tau+1}z_{\tau'+1} \bgl \vu_{1:T}\big] + \E\big[\vu_{\tau+1}^\T\ve_{\tau+1}(\wbb_{\tau'}^\T \otimes \vu_{\tau'+1}^\T) \bgl \vu_{1:T}\big]\vek(\vF) \nn \\
	&\qquad\qquad+ \E\big[\vu_{\tau'+1}^\T\ve_{\tau'+1}(\wbb_{\tau}^\T \otimes \vu_{\tau+1}^\T) \bgl \vu_{1:T}\big]\vek(\vF) +\E\big[z_{\tau+1}(\wbb_{\tau'}^\T \otimes \vu_{\tau'+1}^\T) \bgl \vu_{1:T}\big]\vek(\vF)\nn \\
	&\qquad\qquad +\E\big[z_{\tau'+1}(\wbb_{\tau}^\T \otimes \vu_{\tau+1}^\T) \bgl \vu_{1:T}\big]\vek(\vF) + \E\big[z_{\tau+1}\vu_{\tau'+1}^\T\ve_{\tau'+1} \bgl \vu_{1:T}\big]+\E\big[z_{\tau'+1}\vu_{\tau+1}^\T\ve_{\tau+1} \bgl \vu_{1:T}\big]. \label{eqn:E_zeta_zeta_prime}
\end{align}
In the following, we will calculate each term in \eqref{eqn:E_zeta_zeta_prime} separately to obtain an upper bound on the conditional auto-covariance $\Rcal_\zeta[\tau,\tau'\bgl \vu_{1:T}]$.

\noindent $\bullet$ {\bf First term in \eqref{eqn:E_zeta_zeta_prime}:} This term captures the conditional auto-correlation of the noise process $\{\wbb_\tau\}_{\tau = L}^{T-1}$. Using Assumption~\ref{assump:data/noise}, this term takes the following form,
\begin{align}
	\E\big[(\wbb_{\tau} \otimes \vu_{\tau+1})(\wbb_{\tau'}^\T \otimes \vu_{\tau'+1}^\T) \bgl \vu_{1:T}\big]  &= \E\big[\wbb_{\tau}\wbb_{\tau'}^\T \otimes \vu_{\tau+1}\vu_{\tau'+1}^\T \bgl \vu_{1:T}\big], \nn \\
	&\eqsym{a} \E\big[\wbb_{\tau}\wbb_{\tau'}^\T  \bgl \vu_{1:T}\big] \otimes \E\big[ \vu_{\tau+1}\vu_{\tau'+1}^\T \bgl \vu_{1:T}\big], \nn \\
	& \; = \;\E\big[\wbb_{\tau}\wbb_{\tau'}^\T  \bgl \vu_{1:T}\big] \otimes \vu_{\tau+1}\vu_{\tau'+1}^\T, \nn \\
	&\eqsym{b} \bar\vdelta_w(\tau,\tau') \otimes \vSigma_w \otimes \vu_{\tau+1}\vu_{\tau'+1}^\T, \label{eqn:E_wbar_wbar_prime}
\end{align}
where we get (a) from the independence of $\wbb_{\tau}\wbb_{\tau'}^\T$ and $\vu_{\tau+1}\vu_{\tau'+1}^\T$, (b) from Assumption~\ref{assump:data/noise} and defining a Toeplitz matrix $\bar\vdelta_w(\tau,\tau')$ of Kronecker delta functions $\delta(\cdot)$ as follows, 
\begin{align}
	\bar\vdelta_w(\tau,\tau') := \begin{bmatrix}\delta(\tau-\tau') & \delta(\tau-\tau'+1) & \hdots & \delta(\tau-\tau'+L-1) \\ \delta(\tau-\tau'-1) & \delta(\tau-\tau') & \hdots & \delta(\tau-\tau'+L-2) \\ \vdots & \vdots & \ddots & \vdots \\ \delta(\tau-\tau'-L+1) & \delta(\tau-\tau'-L+2) & \hdots & \delta(\tau-\tau')\end{bmatrix} \in \R^{L \times L} \label{eqn:delta_bar_proof}.
\end{align} 
Using the above definition, we observe that $\E\big[\wbb_{\tau}\wbb_{\tau'}^\T  \bgl \vu_{1:T}\big]$ is an $nL \times nL$ block Toeplitz matrix with the $ij$-th $n \times n$ block given by $\delta(\tau-\tau'-i+j)\vSigma_w$. This further implies that $\E\big[(\wbb_{\tau} \otimes \vu_{\tau+1})(\wbb_{\tau'}^\T \otimes \vu_{\tau'+1}^\T) \bgl \vu_{1:T}\big]$ is an $npL \times npL$ block Toeplitz matrix with the $ij$-th $np \times np$ block given by $\delta(\tau-\tau'-i+j)\vSigma_w\otimes \vu_{\tau+1}\vu_{\tau'+1}^\T$.

\noindent $\bullet$ {\bf Second term in \eqref{eqn:E_zeta_zeta_prime}:} This term captures the conditional auto-correlation of the error process $\{\ve_{\tau+1}\}_{\tau = L}^{T-1}$ which quantifies the effect of unknown state at time $\tau-L+1$. This term can be calculated as follows,
\begin{align}
	\E\big[\vu_{\tau+1}^\T\ve_{\tau+1}\ve_{\tau'+1}^\T\vu_{\tau'+1} \bgl \vu_{1:T}\big] &= \vu_{\tau+1}^\T\E\big[\ve_{\tau+1}\ve_{\tau'+1}^\T \bgl \vu_{1:T}\big]\vu_{\tau'+1}, \nn \\
	&= \vu_{\tau+1}^\T\vC \vA^L\E\big[\vx_{\tau-L+1}\vx_{\tau'-L+1}^\T \bgl \vu_{1:T}\big](\vA^L)^\T\vC^\T\vu_{\tau'+1}. \label{eqn:E_ut_et_et_prime_ut_prime}
\end{align}
To proceed, from \eqref{eqn:bilinear sys} we have
\begin{align}
	\E[\vx_t\vx_{t'}^\T \bgl \vu_{1:T}] &= \E\bigg[\big(\sum_{i=0}^{t-1}\vA^{t-i-1}\vB \vu_i+\sum_{i=0}^{t-1}\vA^{t-i-1}\vw_i\big)\big(\sum_{j=0}^{t'-1}\vA^{t'-j-1}\vB \vu_j+\sum_{j=0}^{t'-1}\vA^{t'-j-1}\vw_j\big)^\T \bgl \vu_{1:T}\bigg], \nn \\
	& = \sum_{i=0}^{t-1}\sum_{j=0}^{t'-1}\vA^{t-i-1}\vB \vu_i \vu_j^\T \vB^\T(\vA^{t'-j-1})^\T + \sum_{i=0}^{t-1}\sum_{j=0}^{t'-1}\vA^{t-i-1} \E [\vw_i \vw_j^\T\bgl \vu_{1:T}] (\vA^{t'-j-1})^\T, \nn \\
	& \eqsym{a} \sum_{i=0}^{t-1}\sum_{j=0}^{t'-1}\vA^{t-i-1}\vB \vu_i \vu_j^\T \vB^\T(\vA^{t'-j-1})^\T + \sum_{i=0}^{\min\{t,t'\}-1}\vA^{t-i-1} \vSigma_w (\vA^{t'-i-1})^\T,
\end{align}
where we obtain (a) by using Assumption~\ref{assump:data/noise}. Hence, we have
\begin{align}
	\E\big[\vx_{\tau-L+1}\vx_{\tau'-L+1}^\T \bgl \vu_{1:T}\big] = \sum_{i=0}^{\tau-L}\sum_{j=0}^{\tau'-L}\vA^{\tau-L-i}\vB \vu_i \vu_j^\T \vB^\T(\vA^{\tau'-L-j})^\T + \sum_{i=0}^{\min\{\tau,\tau'\}-L}\vA^{\tau-L-i} \vSigma_w (\vA^{\tau'-L-i})^\T,
\end{align}
Plugging this back into \eqref{eqn:E_ut_et_et_prime_ut_prime}, we obtain the following,
\begin{align}
	&\E\big[\vu_{\tau+1}^\T\ve_{\tau+1}\ve_{\tau'+1}^\T\vu_{\tau'+1} \bgl \vu_{1:T}\big] \nn \\
	& \quad\quad = \vu_{\tau+1}^\T\vC \vA^L \bigg[\sum_{i=0}^{\tau-L}\sum_{j=0}^{\tau'-L}\vA^{\tau-L-i}\vB \vu_i \vu_j^\T \vB^\T(\vA^{\tau'-L-j})^\T + \sum_{i=0}^{\min\{\tau,\tau'\}-L}\vA^{\tau-L-i} \vSigma_w (\vA^{\tau'-L-i})^\T\bigg](\vA^L)^\T\vC^\T\vu_{\tau'+1}, \nn \\
	& \quad\quad = \vu_{\tau+1}^\T \bigg[\sum_{i=0}^{\tau-L}\sum_{j=0}^{\tau'-L}\vC\vA^{\tau-i}\vB \vu_i \vu_j^\T \vB^\T(\vA^{\tau'-j})^\T\vC^\T + \sum_{i=0}^{\min\{\tau,\tau'\}-L}\vC\vA^{\tau-i} \vSigma_w (\vA^{\tau'-i})^\T\vC^\T\bigg]\vu_{\tau'+1}. \label{eqn:E_etau_etau_prime}
\end{align}

\noindent $\bullet$ {\bf Third term in \eqref{eqn:E_zeta_zeta_prime}:} This term captures the conditional auto-correlation of the measurement noise process $\{z_{\tau+1}\}_{\tau = L}^{T-1}$. Using Assumption~\ref{assump:data/noise}, it is very straightforward to see that
\begin{align}
	\E\big[z_{\tau+1}z_{\tau'+1} \bgl \vu_{1:T}\big] = \sigma_z^2 \delta(\tau-\tau'). \label{eqn:E_ztau_ztau_prime}
\end{align}

The remaining terms in \eqref{eqn:E_zeta_zeta_prime} captures the cross-correlation between various noise terms, and are calculated in the following

\noindent $\bullet$ {\bf Fourth \& Fifth terms in \eqref{eqn:E_zeta_zeta_prime}:} These two terms capture the conditional cross-correlation between the error process $\{\ve_{\tau+1}\}_{\tau = L}^{T-1}$ and the noise process $\{\wbb_\tau\}_{\tau = L}^{T-1}$. To begin, considering the fourth term in~\eqref{eqn:E_zeta_zeta_prime}, we have
\begin{align}
	\E\big[\vu_{\tau'+1}^\T\ve_{\tau'+1}(\wbb_{\tau}^\T \otimes \vu_{\tau+1}^\T) \bgl \vu_{1:T}\big]
	&= \E\big[\vu_{\tau'+1}^\T\ve_{\tau'+1}\wbb_{\tau}^\T \otimes \vu_{\tau+1}^\T \bgl \vu_{1:T}\big], \nn \\
	&= \vu_{\tau'+1}^\T \vC \vA^L \E\big[\vx_{\tau'-L+1}\wbb_{\tau}^\T \bgl \vu_{1:T}\big]\otimes \vu_{\tau+1}^\T. \label{eqn:E_wbar_kron_ut_ut_prime_et_prime}
\end{align}
To proceed, from~\eqref{eqn:bilinear sys} we have
\begin{align}
	\E\big[\vx_{t'}\wbb_{t}^\T \bgl \vu_{1:T}\big] &\,= \E\big[\big(\sum_{i=0}^{t'-1}\vA^{t'-i-1}\vB \vu_i+\sum_{i=0}^{t'-1}\vA^{t'-i-1}\vw_i\big)\wbb_{t}^\T \bgl \vu_{1:T}\big], \nn \\
	&\,= \sum_{i=0}^{t'-1}\vA^{t'-i-1}\E\big[\vw_i\wbb_{t}^\T \bgl \vu_{1:T}\big], \nn \\
	&\,= \sum_{i=0}^{t'-1}\vA^{t'-i-1} \big(\vdelta_w(t,i) \otimes \vSigma_w\big),
\end{align}
where we define $\vdelta_w(t,i) := \begin{bmatrix}\delta(t-i) & \delta (t-i-1) & \hdots & \delta(t-i-L+1) \end{bmatrix} \in \R^{1 \times L}$. Note that $\vdelta_w(t,i) = [1~0~\cdots~0]$ when $i = t$, and $\vdelta_w(t,i) = [0~\cdots~0~1]$ when $i = t-L+1$. Specifically, $\vdelta_w(t,i)$ has all zero entries except the $(t-i)$-th entry when $i \in [t-L+1,t]$, and it has all zero entries when $i \notin [t-L+1,t]$. Plugging this back into \eqref{eqn:E_wbar_kron_ut_ut_prime_et_prime}, we have
\begin{align}
	\E\big[\vu_{\tau'+1}^\T\ve_{\tau'+1}(\wbb_{\tau}^\T \otimes \vu_{\tau+1}^\T) \bgl \vu_{1:T}\big] 
	&=  \big(\vu_{\tau'+1}^\T \vC \vA^L \sum_{i=0}^{\tau'-L}\vA^{\tau'-L-i} \big(\vdelta_w(\tau,i) \otimes \vSigma_w\big)\big) \otimes \vu_{\tau+1}^\T. \label{eqn:E_etau_wbar_prime_fourth}
\end{align}
Using a similar line of reasoning, the fifth term in \eqref{eqn:E_zeta_zeta_prime} can also be calculated as,
\begin{align}
	\E\big[\vu_{\tau+1}^\T\ve_{\tau+1}(\wbb_{\tau'}^\T \otimes \vu_{\tau'+1}^\T) \bgl \vu_{1:T}\big] 
	&=  \big(\vu_{\tau+1}^\T \vC \vA^L \sum_{i=0}^{\tau-L}\vA^{\tau-L-i} \big(\vdelta_w(\tau',i) \otimes \vSigma_w\big)\big) \otimes \vu_{\tau'+1}^\T. \label{eqn:E_etau_wbar_prime_fifth}
\end{align}
\noindent $\bullet$ {\bf Remaining terms in \eqref{eqn:E_zeta_zeta_prime}:}
The last four terms in \eqref{eqn:E_ztau_ztau_prime} captures the conditional cross-correlation between the measurement noise process $\{z_{\tau+1}\}_{\tau = L}^{T-1}$ and the rest of noise/error processes $\{\wbb_{\tau}\}_{\tau = L}^{T-1}$,  $\{\ve_{\tau+1}\}_{\tau = L}^{T-1}$. Because the measurement noise process is independent of the rest of noise/error processes due to Assumption~\ref{assump:data/noise}, all of these terms are zero, 
\begin{equation}
\begin{aligned}\label{eqn:zero_terms}
	&\E\big[z_{\tau+1}(\wbb_{\tau'}^\T \otimes \vu_{\tau'+1}^\T) \bgl \vu_{1:T}\big] = \E\big[z_{\tau'+1}(\wbb_{\tau}^\T \otimes \vu_{\tau+1}^\T) \bgl \vu_{1:T}\big] =  0, \\
	& \E\big[z_{\tau+1}\vu_{\tau'+1}^\T\ve_{\tau'+1} \bgl \vu_{1:T}\big] = \E\big[z_{\tau'+1}\vu_{\tau+1}^\T\ve_{\tau+1} \bgl \vu_{1:T}\big]=  0.
\end{aligned}
\end{equation}
\noindent $\blacksquare$ {\bf Finalizing the proof:} Finally plugging~\eqref{eqn:E_wbar_wbar_prime},~\eqref{eqn:E_etau_etau_prime},~\eqref{eqn:E_ztau_ztau_prime},~\eqref{eqn:E_etau_wbar_prime_fourth},~\eqref{eqn:E_etau_wbar_prime_fifth}, and \eqref{eqn:zero_terms} into \eqref{eqn:E_zeta_zeta_prime}, and combining the result with~\eqref{eqn:E_zeta_tau_plus}, we obtain the following expression for the conditional auto-covariance of the of the effective noise process $\{\zeta_{\tau+1}\}_{\tau = L}^{T-1}$,
\begin{align}
	\Rcal_\zeta[\tau,\tau' \bgl \vu_{1:T}] &:=\E\big[\zeta_{\tau+1} \zeta_{\tau'+1} \bgl \vu_{1:T}\big] - \E\big[\zeta_{\tau+1}\bgl \vu_{1:T}\big]\E\big[\zeta_{\tau'+1}\bgl \vu_{1:T}\big] \nn \\ 
	& = \vek(\vF)^\T \big(\bar\vdelta_w(\tau,\tau') \otimes \vSigma_w \otimes \vu_{\tau+1}\vu_{\tau'+1}^\T\big)\vek(\vF) \nn \\
	& + \vu_{\tau+1}^\T\sum_{i=0}^{\tau-L}\sum_{j=0}^{\tau'-L}\vC\vA^{\tau-i}\vB \vu_i \vu_j^\T \vB^\T(\vA^{\tau'-j})^\T\vC^\T\vu_{\tau'+1} \nn \\
	&+ \vu_{\tau+1}^\T\sum_{i=0}^{\min\{\tau,\tau'\}-L}\vC\vA^{\tau-i} \vSigma_w (\vA^{\tau'-i})^\T\vC^\T\vu_{\tau'+1} + \sigma_z^2 \delta(\tau-\tau') \nn \\
	&+ \big[\big(\vu_{\tau'+1}^\T \vC \vA^L \sum_{i=0}^{\tau'-L}\vA^{\tau'-L-i} \big(\vdelta_w(\tau,i) \otimes \vSigma_w\big)\big) \otimes \vu_{\tau+1}^\T\big] \vek(\vF), \nn \\
	&+ \big[\big(\vu_{\tau+1}^\T \vC \vA^L \sum_{i=0}^{\tau-L}\vA^{\tau-L-i} \big(\vdelta_w(\tau',i) \otimes \vSigma_w\big)\big) \otimes \vu_{\tau'+1}^\T\big] \vek(\vF), \nn \\
	&- \vu_{\tau+1}^\T\sum_{i=0}^{\tau-L}\sum_{j=0}^{\tau'-L}\vC\vA^{\tau-i}\vB \vu_i \vu_j^\T \vB^\T(\vA^{\tau'-j})^\T\vC^\T\vu_{\tau'+1}, \nn \\
	& = \vek(\vF)^\T \big(\bar\vdelta_w(\tau,\tau') \otimes \vSigma_w \otimes \vu_{\tau+1}\vu_{\tau'+1}^\T\big)\vek(\vF) \nn \\
	&+ \vu_{\tau+1}^\T\sum_{i=0}^{\min\{\tau,\tau'\}-L}\vC\vA^{\tau-i} \vSigma_w (\vA^{\tau'-i})^\T\vC^\T\vu_{\tau'+1} + \sigma_z^2 \delta(\tau-\tau') \nn \\
	&+ \big[\big(\vu_{\tau'+1}^\T \vC \vA^L \sum_{i=0}^{\tau'-L}\vA^{\tau'-L-i} \big(\vdelta_w(\tau,i) \otimes \vSigma_w\big)\big) \otimes \vu_{\tau+1}^\T\big] \vek(\vF), \nn \\
	&+ \big[\big(\vu_{\tau+1}^\T \vC \vA^L \sum_{i=0}^{\tau-L}\vA^{\tau-L-i} \big(\vdelta_w(\tau',i) \otimes \vSigma_w\big)\big) \otimes \vu_{\tau'+1}^\T\big] \vek(\vF). \label{eqn:E_zeta_zeta_prime_final} 
\end{align}
This completes the proof.
\end{proof}

\subsection{Proof of Theorem~\ref{thrm:cond_covariance}}\label{sec:proof_of_cond_covariance}
\begin{proof}
    To begin, recalling the input-output relation $y_t = \ubb_{t-1}^\T \otimes \vu_t^\T \vek(\vG)  + \zeta_t$, where $\zeta_t =  \wbb_{t-1}^\T \otimes \vu_t^\T \vek(\vF)  + \vu_t^\T\ve_t + z_t$, the estimation error is alternately given by the following relation, 
\begin{align}
	\Delta\vG := \vek(\hat{\vG}) - \vek(\vG)
    &=  \big(\sum_{\tau = L}^{T-1} \ubb_{\tau}\ubb_{\tau}^\T \otimes \vu_{\tau+1}\vu_{\tau+1}^\T\big)^{-1}\sum_{\tau = L}^{T-1}(\ubb_{\tau} \otimes \vu_{\tau+1}) \zeta_{\tau+1}.  \label{eqn:Delta_G_expansion}
\end{align}
Hence, the conditional covariance of the estimation error is given by,
\begin{align}
	\vSigma[\Delta\vG \bgl \vu_{1:T}] &= \E[\Delta\vG \Delta\vG^\T\bgl \vu_{1:T}] - \E[\Delta\vG \bgl \vu_{1:T}] \E[\Delta\vG \bgl \vu_{1:T}]^\T, \nn \\
    &=\big(\sum_{\tau = L}^{T-1} \ubb_{\tau}\ubb_{\tau}^\T \otimes \vu_{\tau+1}\vu_{\tau+1}^\T\big)^{-1} \bigg(\sum_{\tau = L}^{T-1}\sum_{\tau' = L}^{T-1}\E\big[\zeta_{\tau+1} \zeta_{\tau'+1} \bgl \vu_{1:T}\big]\ubb_{\tau}\ubb_{\tau'}^\T \otimes \vu_{\tau+1}\vu_{\tau'+1}^\T\bigg)\nn \\
    & \qquad\qquad \qquad \qquad\qquad\qquad \qquad\qquad \qquad \qquad\qquad \quad
    \big(\sum_{\tau = L}^{T-1} \ubb_{\tau}\ubb_{\tau}^\T \otimes \vu_{\tau+1}\vu_{\tau+1}^\T\big)^{-1} \nn \\
    &- \big(\sum_{\tau = L}^{T-1} \ubb_{\tau}\ubb_{\tau}^\T \otimes \vu_{\tau+1}\vu_{\tau+1}^\T\big)^{-1} \bigg(\sum_{\tau = L}^{T-1}\sum_{\tau' = L}^{T-1}\E\big[\zeta_{\tau+1}\bgl \vu_{1:T}\big]\E\big[\zeta_{\tau'+1}\bgl \vu_{1:T}\big]\ubb_{\tau}\ubb_{\tau'}^\T \otimes \vu_{\tau+1}\vu_{\tau'+1}^\T\bigg) \nn \\
	& \qquad\qquad \qquad \qquad\qquad\qquad \qquad\qquad \qquad \qquad\qquad \quad\big(\sum_{\tau = L}^{T-1} \ubb_{\tau}\ubb_{\tau}^\T \otimes \vu_{\tau+1}\vu_{\tau+1}^\T\big)^{-1}, \nn \\
	&= \big(\sum_{\tau = L}^{T-1} \ubb_{\tau}\ubb_{\tau}^\T \otimes \vu_{\tau+1}\vu_{\tau+1}^\T\big)^{-1} \bigg(\sum_{\tau = L}^{T-1}\sum_{\tau' = L}^{T-1}\Rcal_\zeta[\tau,\tau'\bgl \vu_{1:T}]\ubb_{\tau}\ubb_{\tau'}^\T \otimes \vu_{\tau+1}\vu_{\tau'+1}^\T\bigg) \nn \\
    &\qquad\qquad \qquad \qquad\qquad\qquad \qquad\qquad \qquad \qquad\qquad \quad\big(\sum_{\tau = L}^{T-1} \ubb_{\tau}\ubb_{\tau}^\T \otimes \vu_{\tau+1}\vu_{\tau+1}^\T\big)^{-1}, \label{eqn:cond_covariance_v2}
\end{align}
where $\Rcal_\zeta[\tau,\tau'\bgl \vu_{1:T}]$ denotes the conditional auto-covariance function of the effective noise process $\{\zeta_{\tau+1}\}_{\tau = L}^{T-1}$, and it is as given in Lemma~\ref{lemma:auto_covariance}. Therefore, in order to upper bound the conditional covariance $\vSigma[\Delta\vG \bgl \vu_{1:T}]$, it suffices to upper bound the summation $\sum_{\tau = L}^{T-1}\sum_{\tau' = L}^{T-1}\Rcal_\zeta[\tau,\tau'\bgl \vu_{1:T}]\ubb_{\tau}\ubb_{\tau'}^\T \otimes \vu_{\tau+1}\vu_{\tau'+1}^\T$. To do that, we first use the fact that $\Rcal_\zeta[\tau,\tau'\bgl \vu_{1:T}] = \Rcal_\zeta[\tau',\tau\bgl \vu_{1:T}]$ due to symmetry, and the observation that for any two vectors $\vv_1$ and $\vv_2$ of appropriate dimensions we have $\vv_1\vv_2^\T + \vv_2 \vv_1^\T \preceq \vv_1\vv_1^\T + \vv_2 \vv_2^\T$, to obtain the following upper bound,
\begin{align}
\sum_{\tau = L}^{T-1}\sum_{\tau' = L}^{T-1}\Rcal_\zeta[\tau,\tau' \bgl \vu_{1:T}]\ubb_{\tau}\ubb_{\tau'}^\T \otimes \vu_{\tau+1}\vu_{\tau'+1}^\T 
&\preceq \sum_{\tau = L}^{T-1} \Rcal_\zeta[\tau,\tau \bgl \vu_{1:T}]\ubb_{\tau}\ubb_{\tau}^\T \otimes \vu_{\tau+1}\vu_{\tau+1}^\T  \nn \\
&+ \sum_{\tau = L}^{T-1}\sum_{\substack{\tau' = L \\ \tau' \neq \tau}}^{T-1}\big|\Rcal_\zeta[\tau,\tau' \bgl \vu_{1:T}]\big|\ubb_{\tau}\ubb_{\tau}^\T \otimes \vu_{\tau+1}\vu_{\tau+1}^\T. \label{eqn:auto_corr_summation}
\end{align}
The first term on the RHS of \eqref{eqn:auto_corr_summation}, can be upper bounded using Lemma~\eqref{lemma:auto_covariance} and \eqref{eqn:lemma_autocov_simplified_tau_equal_tau_prime} as follows,
\begin{align}
	\sum_{\tau = L}^{T-1} \Rcal_\zeta[\tau,\tau \bgl \vu_{1:T}]\ubb_{\tau}\ubb_{\tau}^\T \otimes \vu_{\tau+1}\vu_{\tau+1}^\T \preceq  \big(\big(\|\vSigma_w\|  \tf{\vF}^2 + \|\vC\vA^L\|^2 \|\vGamma_w^{\infty}\|  \big) \beta^2  + \sigma_z^2\big) \sum_{\tau = L}^{T-1} \ubb_{\tau}\ubb_{\tau}^\T \otimes \vu_{\tau+1}\vu_{\tau+1}^\T. \label{eqn:tau_equal_taup_final}
\end{align}
Similarly, we use Lemma~\ref{lemma:auto_covariance} to upper bound the second term on the RHS of \eqref{eqn:auto_corr_summation}. We have,
\begin{align}
	\sum_{\tau = L}^{T-1}\sum_{\substack{\tau' = L \\ \tau' \neq \tau}}^{T-1}&\big|\Rcal_\zeta[\tau,\tau' \bgl \vu_{1:T}]\big|\ubb_{\tau}\ubb_{\tau}^\T \otimes \vu_{\tau+1}\vu_{\tau+1}^\T \nn \\
	& \preceq \sum_{\tau = L}^{T-1}\sum_{\substack{\tau' = L \\ \tau' \neq \tau}}^{T-1} \bigg[ \big|\vek(\vF)^\T \big(\bar\vdelta_w(\tau,\tau') \otimes \vSigma_w \otimes \vu_{\tau+1}\vu_{\tau'+1}^\T\big)\vek(\vF)\big| \nn \\
    & + \big|\sum_{i=0}^{\min\{\tau,\tau'\}-L}  \vu_{\tau+1}^\T\vC\vA^{\tau-i} \vSigma_w (\vA^{\tau'-i})^\T\vC^\T\vu_{\tau'+1}\big| \nn \\
	& + \big|\big[\big(\vu_{\tau'+1}^\T \vC \vA^L \sum_{i=0}^{\tau'-L}\vA^{\tau'-L-i} \big(\vdelta_w(\tau,i) \otimes \vSigma_w\big)\big) \otimes \vu_{\tau+1}^\T\big] \vek(\vF) \big|\nn \\
	& + \big|\big[\big(\vu_{\tau+1}^\T \vC \vA^L \sum_{i=0}^{\tau-L}\vA^{\tau-L-i} \big(\vdelta_w(\tau',i) \otimes \vSigma_w\big)\big) \otimes \vu_{\tau'+1}^\T\big] \vek(\vF)\big|\bigg] \ubb_{\tau}\ubb_{\tau}^\T \otimes \vu_{\tau+1}\vu_{\tau+1}^\T. \label{eqn:cross_terms}
\end{align}
In the following, we will upper bound each summation in \eqref{eqn:cross_terms} separately to obtain an upper bound on the conditional  covariance $\vSigma[\Delta\vG \bgl \vu_{1:T}]$.

\noindent $\bullet$ {\bf First summation in \eqref{eqn:cross_terms}:} This summation can be upper bounded as follows,
\begin{align}
\sum_{\tau = L}^{T-1}\sum_{\substack{\tau' = L \\ \tau' \neq \tau}}^{T-1} & \big|\vek(\vF)^\T \big(\bar\vdelta_w(\tau,\tau') \otimes \vSigma_w \otimes \vu_{\tau+1}\vu_{\tau'+1}^\T\big)\vek(\vF)\big| \ubb_{\tau}\ubb_{\tau}^\T \otimes \vu_{\tau+1}\vu_{\tau+1}^\T \nn \\
& \preceqsym{i} \sum_{\tau = L}^{T-1}\sum_{\substack{\tau' = \max\{L, \tau - L+1\} \\ \tau' \neq \tau}}^{\min\{T-1,\tau+L-1\}}  \big(\|\bar\vdelta_w(\tau,\tau')\| \|\vSigma_w\| \tf{\vF}^2 \tn{\vu_{\tau+1}} \tn{\vu_{\tau'+1}} \ubb_{\tau} \big)\ubb_{\tau}^\T \otimes \vu_{\tau+1}\vu_{\tau+1}^\T, \nn \\
& \preceq \|\vSigma_w\| \tf{\vF}^2 \beta^2  \sum_{\tau = L}^{T-1}\bigg[\sum_{\substack{\tau' = \max\{L, \tau - L+1\} \\ \tau' \neq \tau}}^{\min\{T-1,\tau+L-1\}}  \|\bar\vdelta_w(\tau,\tau')\|\bigg] \ubb_{\tau}\ubb_{\tau}^\T \otimes \vu_{\tau+1}\vu_{\tau+1}^\T,  \nn \\
&\preceq 2\|\vSigma_w\|\tf{\vF}^2 \beta^2 L \sum_{\tau = L}^{T-1} \ubb_{\tau}\ubb_{\tau}^\T \otimes \vu_{\tau+1}\vu_{\tau+1}^\T, \label{eqn:first_summation_complete}
\end{align}
where we obtain (i) from the following observation: recalling the Toeplitz matrix $\bar\vdelta_w(\tau,\tau')$ from \eqref{eqn:delta_bar}, we can upper bound its spectral norm as follows:
\begin{enumerate}
	\item[($\va$)] When $\tau' = \tau$, the matrix $\bar\vdelta_w(\tau,\tau')$ becomes an identity matrix $\Iden_L$ with $\|\bar\vdelta_w(\tau,\tau')\|  = 1$
	\item[($\vb$)] When $\tau-L<\tau' < \tau$, the matrix $\bar\vdelta_w(\tau,\tau')$ becomes a lower shift matrix with $i,j$-th element given by $[\bar\vdelta_w(\tau,\tau')]_{i,j} = \delta(i-(j+\tau-\tau'))$. This further implies $\|\bar\vdelta_w(\tau,\tau')\|  \leq 1$.
	\item[($\vc$)] When $\tau < \tau'< \tau+L$, the matrix $\bar\vdelta_w(\tau,\tau')$ becomes an upper shift matrix with $i,j$-th element given by $[\bar\vdelta_w(\tau,\tau')]_{i,j} = \delta((i+\tau'-\tau)-j)$. This further implies $\|\bar\vdelta_w(\tau,\tau')\|  \leq 1$.
	\item[($\vd$)] When $|\tau -\tau'| \geq L$, we have $\bar\vdelta_w(\tau,\tau') = \bv{0}$.
\end{enumerate}

\noindent $\bullet$ {\bf Second summation in \eqref{eqn:cross_terms}:} Using Assumption~\ref{assump:stability}, this term can be upper bounded as follows,
\begin{align}
    \sum_{\tau = L}^{T-1}\sum_{\substack{\tau' = L \\ \tau' \neq \tau}}^{T-1} \big|\sum_{i=0}^{\min\{\tau,\tau'\}-L}  &\vu_{\tau+1}^\T\vC\vA^{\tau-i} \vSigma_w (\vA^{\tau'-i})^\T\vC^\T\vu_{\tau'+1}\big| \ubb_{\tau}\ubb_{\tau}^\T \otimes \vu_{\tau+1}\vu_{\tau+1}^\T \nn \\
	&\preceqsym{a} \sum_{\tau = L}^{T-1}\sum_{\substack{\tau' = L \\ \tau' \neq \tau}}^{T-1} \big(\beta^2 \norm{\vC\vA^L}^2 \norm{\vGamma_w^\infty}\norm{\vA^{|\tau-\tau'|}}\big)  \ubb_{\tau}\ubb_{\tau}^\T \otimes \vu_{\tau+1}\vu_{\tau+1}^\T \nn \\
	&\preceq \beta^2 \norm{\vC\vA^L}^2 \norm{\vGamma_w^\infty} \phi(\vA,\rho) \sum_{\tau = L}^{T-1}\sum_{\substack{\tau' = L \\ \tau' \neq \tau}}^{T-1} \rho^{|\tau-\tau'|}  \ubb_{\tau}\ubb_{\tau}^\T \otimes \vu_{\tau+1}\vu_{\tau+1}^\T \nn \\
	& \preceqsym{b} 2\beta^2 \norm{\vC\vA^L}^2 \norm{\vGamma_w^\infty} \phi(\vA,\rho) \sum_{\tau = L}^{T-1} \ubb_{\tau}\ubb_{\tau}^\T \otimes \vu_{\tau+1}\vu_{\tau+1}^\T \sum_{i = 1}^{\max\{\substack{T-\tau-1, \\ \tau-L}\}} \rho ^i, \nn \\
	& \preceq 2\beta^2 \norm{\vC\vA^L}^2 \norm{\vGamma_w^\infty} \frac{\rho \phi(\vA,\rho)}{1-\rho} \sum_{\tau = L}^{T-1} \ubb_{\tau}\ubb_{\tau}^\T \otimes \vu_{\tau+1}\vu_{\tau+1}^\T, \label{eqn:second_summation_complete}
\end{align}
where we get (a) from the following observation: when $\tau > \tau'$, we have
\begin{align*}
    \big|\sum_{i=0}^{\min\{\tau,\tau'\}-L}  \vu_{\tau+1}^\T\vC\vA^{\tau-i} \vSigma_w (\vA^{\tau'-i})^\T\vC^\T\vu_{\tau'+1}\big| &= \big|\vu_{\tau+1}^\T\vC \vA^{\tau-\tau'}\big(\sum_{i=0}^{\tau'-L}\vA^{\tau'-i} \vSigma_w (\vA^{\tau'-i})^\T\big)\vC^\T\vu_{\tau'+1}\big|, \\
    & \leq \big|\vu_{\tau+1}^\T\vC\vA^L \vA^{\tau-\tau'}\vGamma_w^{\infty}(\vA^L)^\T\vC^\T\vu_{\tau'+1} \big|, \\
    & \leq \beta^2 \norm{\vC\vA^L}^2 \norm{\vGamma_w^\infty}\norm{\vA^{\tau-\tau'}}.
\end{align*}
Similarly, when $\tau < \tau'$, we have $\big|\sum_{i=0}^{\min\{\tau,\tau'\}-L}  \vu_{\tau+1}^\T\vC\vA^{\tau-i} \vSigma_w (\vA^{\tau'-i})^\T\vC^\T\vu_{\tau'+1}\big| \leq \beta^2 \norm{\vC\vA^L}^2 \norm{\vGamma_w^\infty}\norm{\vA^{\tau'-\tau}}$, using the same line of reasoning. Combining the two cases gives (a). Moreover, (b) is obtained by writing up the summation $\sum_{\tau = L}^{T-1}\sum_{\substack{\tau' = L \\ \tau' \neq \tau}}^{T-1} \rho^{|\tau-\tau'|}$ for each value of $\tau \in [L, T-1]$.

\noindent $\bullet$ {\bf Third summation in \eqref{eqn:cross_terms}:} Using Assumptions~\ref{assump:stability} and \ref{assump:data/noise}, this term can be upper bounded as follows,
\begin{align}
    \sum_{\tau = L}^{T-1}\sum_{\substack{\tau' = L \\ \tau' \neq \tau}}^{T-1} \big|\big[\big(\vu_{\tau'+1}^\T \vC \vA^L &\sum_{i=0}^{\tau'-L}\vA^{\tau'-L-i} \big(\vdelta_w(\tau,i) \otimes \vSigma_w\big)\big) \otimes \vu_{\tau+1}^\T\big] \vek(\vF) \big| \ubb_{\tau}\ubb_{\tau}^\T \otimes \vu_{\tau+1}\vu_{\tau+1}^\T \nn \\
    & = \sum_{\tau = L}^{T-1}\sum_{\substack{\tau' = L \\ \tau' \neq \tau}}^{T-1}  \big|\big(\vu_{\tau'+1}^\T \vC \vA^L \E\big[\vx_{\tau'-L+1}\wbb_{\tau}^\T \bgl \vu_{1:T}\big] \otimes \vu_{\tau+1}^\T\big) \vek(\vF)\big| \ubb_{\tau}\ubb_{\tau}^\T \otimes \vu_{\tau+1}\vu_{\tau+1}^\T, \nn \\
	& \preceq \beta^2 \tf{\vF} \sum_{\tau = L}^{T-1} \bigg[\sum_{\substack{\tau' = L \\ \tau' \neq \tau}}^{T-1} \|\vC \vA^L \E\big[\vx_{\tau'-L+1}\wbb_{\tau}^\T \bgl \vu_{1:T}\big]\|\bigg]  \ubb_{\tau}\ubb_{\tau}^\T \otimes \vu_{\tau+1}\vu_{\tau+1}^\T, \nn \\
	&\preceqsym{i} \beta^2 \tf{\vF}^2 \|\vSigma_w\| (L + \frac{\rho}{1-\rho}) \phi(\vA,\rho)\rho^L  \sum_{\tau = L}^{T-1} \ubb_{\tau}\ubb_{\tau}^\T \otimes \vu_{\tau+1}\vu_{\tau+1}^\T, \label{eqn:third_summation_complete}
\end{align}
where we obtain (i) by upper bounding $\|\vC \vA^L \E\big[\vx_{\tau'-L+1}\wbb_{\tau}^\T \bgl \vu_{1:T}\big]\|$ as follows:
\begin{enumerate}
\item [$(\va)$] When $\tau' \leq \tau$, we have $\E\big[\vx_{\tau'-L+1}\wbb_{\tau}^\T \bgl \vu_{1:T}\big]  =  \bv{0}$. Hence, $\|\vC \vA^L \E\big[\vx_{\tau'-L+1}\wbb_{\tau}^\T \bgl \vu_{1:T}\big]\| = 0$
\item[$(\vb)$] When $\tau< \tau' < \tau+L$, we have
\begin{align}
    &\E\big[\vx_{\tau'-L+1}\wbb_{\tau}^\T \bgl \vu_{1:T}\big] =  \sum_{i=\tau-L+1}^{\tau'-L}\vA^{\tau'-L-i} \big(\vdelta_w(\tau,i) \otimes \vSigma_w\big) =  \begin{bmatrix} \bv{0} & \hdots & \vSigma_w  & \vA \vSigma_w & \hdots & \vA^{\tau'-\tau-1} \vSigma_w\end{bmatrix}. \nn\\
    & \|\vC \vA^L \E\big[\vx_{\tau'-L+1}\wbb_{\tau}^\T \bgl \vu_{1:T}\big]\| = \|\begin{bmatrix} \bv{0} & \hdots & \vC (\vA^L\vSigma_w) & \vC\vA (\vA^L\vSigma_w) & \hdots & \vC\vA^{\tau'-\tau-1} (\vA^L\vSigma_w)\end{bmatrix}\|, \nn\\
    & \qquad\qquad\qquad\qquad\qquad\qquad\;\, \leq \tf{\begin{bmatrix} \bv{0} & \hdots & \vC & \vC\vA & \hdots & \vC\vA^{\tau'-\tau-1} \end{bmatrix}} \|\vA^L \vSigma_w\|,\nn \\
     & \qquad\qquad\qquad\qquad\qquad\qquad\;\, \leq \phi(\vA,\rho) \rho^L \norm{\vSigma_w} \tf{\vF}.
\end{align}
\item [$(\vc)$] When $\tau' = \tau+L$, we have 
\begin{align}
 &\E\big[\vx_{\tau'-L+1}\wbb_{\tau}^\T \bgl \vu_{1:T}\big] = \sum_{i=\tau-L+1}^{\tau}\vA^{\tau-i} \big(\vdelta_w(\tau,i) \otimes \vSigma_w\big) =  \begin{bmatrix} \vSigma_w & \vA \vSigma_w & \hdots & \vA^{L-1} \vSigma_w\end{bmatrix}. \nn \\
 & \|\vC \vA^L \E\big[\vx_{\tau'-L+1}\wbb_{\tau}^\T \bgl \vu_{1:T}\big]\| = \|\begin{bmatrix} \vC(\vA^L\vSigma_w) & \vC\vA (\vA^L\vSigma_w) & \hdots & \vC\vA^{L-1} (\vA^L\vSigma_w)\end{bmatrix}\|, \nn \\
    & \qquad\qquad\qquad\qquad\qquad\qquad\;\, \leq \tf{\begin{bmatrix} \vC & \vC\vA & \hdots & \vC\vA^{L-1} \end{bmatrix}} \|\vA^L \vSigma_w\|, \nn \\
     & \qquad\qquad\qquad\qquad\qquad\qquad\;\, \leq \phi(\vA,\rho) \rho^L \norm{\vSigma_w} \tf{\vF}.
\end{align}
\item[$(\vd)$] When $\tau' > \tau+ L$, we have 
\begin{align}
    &\E\big[\vx_{\tau'-L+1}\wbb_{\tau}^\T \bgl \vu_{1:T}\big] =  \sum_{i=\tau-L+1}^{\tau}\vA^{\tau'-L-i} \big(\vdelta_w(\tau,i) \otimes \vSigma_w\big) =  \vA^{\tau'-\tau-L}\begin{bmatrix} \vSigma_w & \vA \vSigma_w & \hdots & \vA^{L-1} \vSigma_w\end{bmatrix}. \nn \\
     & \|\vC \vA^L \E\big[\vx_{\tau'-L+1}\wbb_{\tau}^\T \bgl \vu_{1:T}\big]\| = \|\begin{bmatrix} \vC(\vA^{\tau'-\tau}\vSigma_w) & \vC\vA (\vA^{\tau'-\tau}\vSigma_w) & \hdots & \vC\vA^{L-1} (\vA^{\tau'-\tau}\vSigma_w)\end{bmatrix}\|, \nn \\
    & \qquad\qquad\qquad\qquad\qquad\qquad\;\, \leq \tf{\begin{bmatrix} \vC & \vC\vA & \hdots & \vC\vA^{L-1} \end{bmatrix}} \|\vA^{\tau'-\tau} \vSigma_w\|, \nn \\
     & \qquad\qquad\qquad\qquad\qquad\qquad\;\, \leq \phi(\vA,\rho) \rho^L \norm{\vSigma_w} \tf{\vF} (\rho^{\tau'-\tau-L}).
\end{align}
\end{enumerate}	
Hence, $\sum_{\substack{\tau' = L \\ \tau' \neq \tau}}^{T-1} \|\vC \vA^L \E\big[\vx_{\tau'-L+1}\wbb_{\tau}^\T \bgl \vu_{1:T}\big]\| \leq \big(L + \rho + \rho^2 + \dots\big)\phi(\vA,\rho) \rho^L \norm{\vSigma_w} \tf{\vF}$, which gives (i) above. 

\noindent $\bullet$ {\bf Fourth summation in \eqref{eqn:cross_terms}:}
Similar to the previous summation,  using Assumptions~\ref{assump:stability} and \ref{assump:data/noise}, this term can also be upper bounded as,
\begin{align}
    \sum_{\tau = L}^{T-1}\sum_{\substack{\tau' = L \\ \tau' \neq \tau}}^{T-1} \big|\big[\big(\vu_{\tau+1}^\T \vC \vA^L &\sum_{i=0}^{\tau-L}\vA^{\tau-L-i} \big(\vdelta_w(\tau',i) \otimes \vSigma_w\big)\big) \otimes \vu_{\tau'+1}^\T\big] \vek(\vF) \big| \ubb_{\tau}\ubb_{\tau}^\T \otimes \vu_{\tau+1}\vu_{\tau+1}^\T \nn \\
	&\preceq \beta^2 \tf{\vF}^2 \|\vSigma_w\| (L + \frac{\rho}{1-\rho}) \phi(\vA,\rho)\rho^L  \sum_{\tau = L}^{T-1} \ubb_{\tau}\ubb_{\tau}^\T \otimes \vu_{\tau+1}\vu_{\tau+1}^\T. \label{eqn:fourth_summation_complete}
\end{align}

\noindent $\blacksquare$ {\bf Finalizing the proof:} Finally plugging~\eqref{eqn:first_summation_complete},~\eqref{eqn:second_summation_complete},~\eqref{eqn:third_summation_complete}, and \eqref{eqn:fourth_summation_complete} into \eqref{eqn:cross_terms}, and combining the result with~\eqref{eqn:auto_corr_summation} and \eqref{eqn:tau_equal_taup_final}, we obtain the following upper bound, 
\begin{align}
    \sum_{\tau = L}^{T-1}\sum_{\tau' = L}^{T-1} &\Rcal_\zeta[\tau,\tau' \bgl \vu_{1:T}]\ubb_{\tau}\ubb_{\tau'}^\T \otimes \vu_{\tau+1}\vu_{\tau'+1}^\T \nn \\ 
    &\preceq \bigg[\sigma_z^2 + \beta^2\big(\|\vSigma_w\| \tf{\vF}^2 + \|\vC\vA^L\|^2 \|\vGamma_w^{\infty}\|\big)  + 2\|\vSigma_w\|\tf{\vF}^2 \beta^2 L \nn \\
    &+  2\beta^2 \norm{\vC\vA^L}^2 \norm{\vGamma_w^\infty} \frac{\rho \phi(\vA,\rho)}{1-\rho} + 2 \beta^2 \tf{\vF}^2 \|\vSigma_w\| (L + \frac{\rho}{1-\rho}) \phi(\vA,\rho)\rho^L \bigg]  \sum_{\tau = L}^{T-1} \ubb_{\tau}\ubb_{\tau}^\T \otimes \vu_{\tau+1}\vu_{\tau+1}^\T,  \nn \\
    & =  \bigg[\sigma_z^2 + \|\vSigma_w\|\tf{\vF}^2\beta^2\big(1 + 2L +2 (L + \frac{\rho}{1-\rho}) \phi(\vA,\rho)\rho^L \big)  \nn \\
    & + \norm{\vGamma_w^\infty} \norm{\vC\vA^L}^2\beta^2\big(1 + 2 \frac{\rho \phi(\vA,\rho)}{1-\rho}\big)  \bigg]  \sum_{\tau = L}^{T-1} \ubb_{\tau}\ubb_{\tau}^\T \otimes \vu_{\tau+1}\vu_{\tau+1}^\T, \nn \\
    & \preceq \bigg[\sigma_z^2 + 3\|\vSigma_w\|\tf{\vF}^2\beta^2 L \big(1 + \frac{\phi(\vA,\rho)\rho^L}{1-\rho}\ \big)  + 2 \norm{\vGamma_w^\infty} \norm{\vC\vA^L}^2\beta^2 \frac{\phi(\vA,\rho)}{1-\rho}  \bigg]  \sum_{\tau = L}^{T-1} \ubb_{\tau}\ubb_{\tau}^\T \otimes \vu_{\tau+1}\vu_{\tau+1}^\T,
\end{align}
which is then combined with~\eqref{eqn:cond_covariance_v2} to get the following upper bound on the conditional covariance of the estimation error,
\begin{align}
    &\vSigma[\Delta\vG \bgl \vu_{1:T}]\nn \\
    & \qquad = \big(\sum_{\tau = L}^{T-1} \ubb_{\tau}\ubb_{\tau}^\T \otimes \vu_{\tau+1}\vu_{\tau+1}^\T\big)^{-1} \bigg(\sum_{\tau = L}^{T-1}\sum_{\tau' = L}^{T-1}\Rcal_\zeta[\tau,\tau'\bgl \vu_{1:T}]\ubb_{\tau}\ubb_{\tau'}^\T \otimes \vu_{\tau+1}\vu_{\tau'+1}^\T\bigg)\big(\sum_{\tau = L}^{T-1} \ubb_{\tau}\ubb_{\tau}^\T \otimes \vu_{\tau+1}\vu_{\tau+1}^\T\big)^{-1}, \nn \\
    & \qquad\preceq \bigg[\sigma_z^2 + 3\|\vSigma_w\|\tf{\vF}^2\beta^2 L \big(1 + \frac{\phi(\vA,\rho)\rho^L}{1-\rho}\ \big)  + 2 \norm{\vGamma_w^\infty} \norm{\vC\vA^L}^2\beta^2 \frac{\phi(\vA,\rho)}{1-\rho}  \bigg] \big( \sum_{\tau = L}^{T-1} \ubb_{\tau}\ubb_{\tau}^\T \otimes \vu_{\tau+1}\vu_{\tau+1}^\T \big)^{-1}.
\end{align}
This completes the proof.
\end{proof}

\subsection{Proof of Lemma~\ref{lemma:cond_mean_norm}}\label{sec:proof_of_cond_mean_norm}
\begin{proof}
    To begin, recall that the estimation error is given by $\Delta\vG = (\vUtil^\T \vUtil)^{-1}\vUtil^\T (\vWtil\vek(\vF) + \vetil + \vz)$, where we define $\vWtil \in \R^{(T-L) \times npL}, \vetil \in \R^{(T-L)}$ and $\vz \in \R^{(T-L)}$ as follows,
\begin{equation}
\begin{aligned}\label{eqn:W_E_H_construction}
 	\vWtil &:= \begin{bmatrix} \wbb_{L} \otimes \vu_{L+1} & \wbb_{L+1} \otimes \vu_{L+2} & \cdots & \wbb_{T-1} \otimes \vu_{T} \end{bmatrix}^\T, \\
 	\vetil &:= \begin{bmatrix} \vu_{L+1}^\T\ve_{L+1} & \vu_{L+2}^\T\ve_{L+2} & \cdots &  \vu_{T}^\T\ve_{T} \end{bmatrix}^\T, \quad
 	\vz := \begin{bmatrix} z_{L+1} & z_{L+2} & \cdots & z_{T} \end{bmatrix}^\T.
\end{aligned}
\end{equation}
Hence, using Assumption~\ref{assump:data/noise}, the conditionally expected estimation error is given by 
    \begin{align}
        \E[\Delta\vG \bgl \vu_{1:T}] \eqsym{a} (\vUtil^\T \vUtil)^{-1} \vUtil^\T \E[\vetil \bgl \vu_{1:T}], \label{eqn:cond_mean_v1}
    \end{align}
    where we get (a) from the observation that $\E[\vWtil \bgl \vu_{1:T}] = 0$, and $\E[\vz \bgl \vu_{1:T}] = 0$ due to Assumption~\ref{assump:data/noise}. To proceed, let $\vUtil  =  \vQtil \tilde{\vSigma} \vRtil^\T $ be the singular value decomposition, where $\vQtil \in \R^{T-L \times p^2L}$. Then, from~\eqref{eqn:cond_mean_v1}, we have
    \begin{align}
        \tn{\E[\Delta\vG \bgl \vu_{1:T}]} &= \sqrt{\E[\Delta\vG \bgl \vu_{1:T}]^\T \E[\Delta\vG \bgl \vu_{1:T}]}, \nn \\
        &= \bigg[ \E[\vetil \bgl \vu_{1:T}]^\T \vUtil(\vUtil^\T \vUtil)^{-2} \vUtil^\T \E[\vetil \bgl \vu_{1:T}]\bigg]^{1/2}, \nn \\
        &= \bigg[ \E[\vetil \bgl \vu_{1:T}]^\T \vQtil \tilde{\vSigma} \vRtil^\T (\vRtil \tilde{\vSigma} \vQtil^\T \vQtil \tilde{\vSigma} \vRtil^\T )^{-2} \vRtil \tilde{\vSigma} \vQtil^\T \E[\vetil \bgl \vu_{1:T}]\bigg]^{1/2}, \nn \\
        &= \bigg[ \E[\vetil \bgl \vu_{1:T}]^\T \vQtil \tilde{\vSigma}^{-2} \vQtil^\T \E[\vetil \bgl \vu_{1:T}]\bigg]^{1/2}, \nn \\
        & \leqsym{a} \bigg[ \E[\vetil \bgl \vu_{1:T}]^\T \vQtil\vQtil^\T \E[\vetil \bgl \vu_{1:T}]\big/ \lambda_{\min}(\vUtil^\T \vUtil)\bigg]^{1/2}, \nn \\
        & \leqsym{b} \frac{\tn{\E[\vetil \bgl \vu_{1:T}]}}{\sqrt{\lambda_{\min}(\vUtil^\T \vUtil)}}, \label{eqn:cond_mean_v2}
    \end{align}
    where we obtain (a) from the fact that $\norm{\tilde{\vSigma}^{-2}}  = 1/\lambda_{\min} (\tilde{\vSigma}^{2}) = 1/ \lambda_{\min}(\vUtil^\T \vUtil)$, and (b) from the observation that $\vQtil\vQtil^\T = \vUtil(\vUtil^\T \vUtil)^{-1} \vUtil^\T$ is an orthogonal projection operator onto the column-space of $\vQtil$, hence, $\norm{\vQtil\vQtil^\T} \leq 1$. To proceed, consider the $i$-th element of $\vetil$ from~\eqref{eqn:W_E_H_construction}, where $i \in [1, T-L]$. We have
    \begin{align}
     |\E[[\vetil]_i \bgl \vu_{1:T}]| &= |\vu_{L+i}^\T \E[\ve_{L+i} \bgl \vu_{1:T}]|, \nn \\
     &\leq \tn{\vu_{L+i}} \tn{\vC \vA^L \E[\vx_{i} \bgl \vu_{1:T}]}, \nn \\
     &\leq \beta \tn{\vC \vA^L \sum_{k=0}^{i-1}\vA^{k}\vB\vu_{i-k-1}}, \nn \\
     &\leq \beta \norm{\vC} \sum_{k=0}^{i-1}\norm{\vA^{L+k}}\|\vB\| \tn{\vu_{i-k-1}}, \nn\\
     &\leq \beta^2 \norm{\vB} \norm{\vC} \phi(\vA,\rho)\rho^L \sum_{k=0}^{i-1}\rho^k, \nn\\
     &\leq \beta^2 \norm{\vB} \norm{\vC} \phi(\vA,\rho) \frac{\rho^L}{1 - \rho}.\label{eqn:etil_ith_bound}
    \end{align}
    Combining this with \eqref{eqn:cond_mean_v2}, we obtain the following upper bound on the Frobenius norm of conditionally expected estimation error,
    \begin{align}
        \tf{\E[\vGhat - \vG \bgl \vu_{1:T}]} =  \tn{\E[\Delta\vG \bgl \vu_{1:T}]} &\leq \frac{\mysqrt{\sum_{i=1}^{T-L}|\E[[\vetil]_i \bgl \vu_{1:T}]|^2}}{\sqrt{\lambda_{\min}(\vUtil^\T \vUtil)}}, \nn \\
        &\leq \frac{\beta^2 \norm{\vB} \norm{\vC} \phi(\vA,\rho) \frac{\rho^L}{1 - \rho} \sqrt{T-L}}{\sqrt{\lambda_{\min}(\vUtil^\T \vUtil)}}. \label{eqn:cond_mean_final}
    \end{align}
    Similarly, we can also upper bound the weighted norm of the conditionally expected estimation error as follows,
    \begin{align}
	\norm{\E[\Delta\vG \bgl \vu_{1:T}]}_{\vUtil^T\vUtil} &= \sqrt{\E[\Delta\vG \bgl \vu_{1:T}]^\T \vUtil^T\vUtil\E[\Delta\vG \bgl \vu_{1:T}]}, \nn \\
	& = \bigg[\E[\vetil\bgl \vu_{1:T}]^\T \vUtil (\vUtil^\T \vUtil)^{-1}\vUtil^\T\E[\vetil\bgl \vu_{1:T}]\bigg]^{1/2}, \nn \\
    &\eqsym{a} \bigg[ \E[\vetil \bgl \vu_{1:T}]^\T \vQtil \tilde{\vSigma} \vRtil^\T (\vRtil \tilde{\vSigma} \vQtil^\T \vQtil \tilde{\vSigma} \vRtil^\T )^{-1} \vRtil \tilde{\vSigma} \vQtil^\T \E[\vetil \bgl \vu_{1:T}]\bigg]^{1/2}, \nn \\
    &= \bigg[ \E[\vetil \bgl \vu_{1:T}]^\T \vQtil  \vQtil^\T \E[\vetil \bgl \vu_{1:T}]\bigg]^{1/2}, \nn \\
    & \leqsym{b} \tn{\E[\vetil \bgl \vu_{1:T}]}, \nn \\
    & \leqsym{c} \beta^2 \norm{\vB} \norm{\vC} \phi(\vA,\rho) \frac{\rho^L}{1 - \rho} \sqrt{T-L},\label{eqn:mean_error_ellipsoid}
\end{align}
where we get (a) from the singular value decomposition of $\vUtil$, (b) from the observation that $\vQtil\vQtil^\T$ is an orthogonal projection operator onto the column-space of $\vQtil$, hence, $\norm{\vQtil\vQtil^\T} \leq 1$, and (c) from~\eqref{eqn:cond_mean_final} above. This completes the proof.
\end{proof}

\subsection{Verifying Assumption~\ref{assump:input}$(b)$}\label{sec:moment_bound_verification}
Suppose $\{\vu_t\}_{t=0}^T \distas \Ncal(0, \Iden_{p})$. Then, for any $\vv \in \Scal^{p^2L-1}$, we have
\begin{align}
	\E[(\vv^\T\vutil_t)^4] &= \E[((\ubb_t^\T \otimes \vu_{t+1}^\T) \vv)^4] = \E[(\vu_{t+1}^\T \vV \ubb_t)^4] = \E\big[\big(\sum_{k=0}^{L-1} \vu_{t+1}^\T \vV_k \vu_{t-k}\big)^4\big], \nn \\
	&= \sum_{k=0}^{L-1} \E[(\vu_{t+1}^\T \vV_k \vu_{t-k})^4] + 3 \sum_{k=0}^{L-1}\sum_{\substack{k' = 0 \\ k' \neq k}}^{L-1}\E[(\vu_{t+1}^\T \vV_k \vu_{t-k})^2(\vu_{t+1}^\T \vV_{k'} \vu_{t-k'})^2],  \label{eqn:moment_bound_split}
\end{align}
where we obtain the first line by setting
\begin{align}
	\vV :=  \mat(\vv) = 
	\begin{bmatrix}
		\vV_0 & \vV_1 & \cdots &\vV_{L-1}  
	\end{bmatrix} \in \R^{p \times pL}
	\quad \text{such that each} \quad \vV_k \in \R^{p \times p}.
\end{align}
In the following, we will upper bound each expectation in \eqref{eqn:moment_bound_split} separately. Observing that
\begin{align}
	\vu_{t+1}^\T \vV_k \vu_{t-k} \bgl \vu_{t+1} \sim \Ncal(0, \tn{\vV_k^\T \vu_{t+1}}^2),
\end{align}
we have
\begin{align}
	\E[(\vu_{t+1}^\T \vV_k \vu_{t-k})^4] = \E \big[\E[(\vu_{t+1}^\T \vV_k \vu_{t-k})^4 \bgl \vu_{t+1}]\big] &= 3\E[\tn{\vV_k^\T \vu_{t+1}}^4], \nn \\
	&= 3\E[ (\vu_{t+1}^\T \vV_k\vV_k^\T \vu_{t+1})^2], \nn \\
	& \eqsym{a} \tr(\vV_k\vV_k^\T)^2 + 2 \tr ((\vV_k\vV_k^\T)^2), \nn \\
	& \leqsym{b}  \tf{\vV_k}^4 + 2 \tr (\vV_k\vV_k^\T)^2, \nn \\
	& \leq 3 \tf{\vV_k}^4, \label{eqn:moment_firt_term}
\end{align}
where we obtain (a) from \cite{don1979expectation}, and (b) from the fact that for any positive semi-definite matrix $\vX$, we have $\tr(\vX^2) \leq \tr(\vX)^2$. Similarly, we also have
\begin{align}
	\E[(\vu_{t+1}^\T \vV_k \vu_{t-k})^2(\vu_{t+1}^\T \vV_{k'} \vu_{t-k'})^2] &= \E\big[\E[(\vu_{t+1}^\T \vV_k \vu_{t-k})^2(\vu_{t+1}^\T \vV_{k'} \vu_{t-k'})^2 \bgl \vu_{t+1}]\big], \nn \\
	&= \E\big[\E[(\vu_{t+1}^\T \vV_k \vu_{t-k})^2 \bgl \vu_{t+1}] \E[(\vu_{t+1}^\T \vV_{k'} \vu_{t-k'})^2 \bgl \vu_{t+1}]\big], \nn \\
	& = \E[\tn{\vV_k^\T \vu_{t+1}}^2\tn{\vV_{k'}^\T \vu_{t+1}}^2], \nn \\
	& = \E[(\vu_{t+1}^\T \vV_k\vV_k^\T \vu_{t+1})(\vu_{t+1}^\T \vV_{k'}\vV_{k'}^\T \vu_{t+1})], \nn \\
	& \eqsym{a} \tr(\vV_k\vV_k^\T) \tr(\vV_{k'}\vV_{k'}^\T) + 2 \tr (\vV_k\vV_k^\T \vV_{k'}\vV_{k'}^\T), \nn \\
	&\leqsym{b} \tf{\vV_k}^2\tf{\vV_{k'}}^2	+ 2 \sqrt{\tr((\vV_k\vV_k^\T)^2) \tr((\vV_{k'}\vV_{k'}^\T)^2)}, \nn \\
	& \leqsym{c} 3\tf{\vV_k}^2\tf{\vV_{k'}}^2, \label{eqn:moment_second_term}
\end{align}
where we obtain (a) from \cite{don1979expectation}, (b) from Cauchy-Schwarz inequality applied to the trace of the product of two positive semi-definite matrices, and (c) from the fact that for any positive semi-definite matrix $\vX$, we have $\tr(\vX^2) \leq \tr(\vX)^2$. Finally, combining \eqref{eqn:moment_firt_term} and \eqref{eqn:moment_second_term} into \eqref{eqn:moment_bound_split}, we obtain out final result,
\begin{align}
	\E[(\vv^\T\vutil_t)^4] &= 3 \sum_{k=0}^{L-1} \tf{\vV_k}^4 + 9 \sum_{k=0}^{L-1}\sum_{\substack{k' = 0 \\ k' \neq k}}^{L-1} \tf{\vV_k}^2\tf{\vV_{k'}}^2, \nn \\
	& \leq 9 \big(\sum_{k=0}^{L-1} \tf{\vV_k}^2 \big)^2 \leq 9.
\end{align}
Hence, Assumption\ref{assump:input}($\vb$) holds for Gaussian inputs with $m_4 = 9$.

\end{document}